\newtheorem{lemma}{Lemma}
\newtheorem{definition}[lemma]{Definition}
\newtheorem{remark}[lemma]{Remark}
\newtheorem{proposition}[lemma]{Proposition}
\newtheorem{theorem}[lemma]{Theorem}
\newtheorem{assumption}{Assumption}
\theoremstyle{definition}
\newtheorem{example}{Example}
\title{Non-asymptotic error bounds for probability flow ODEs under weak log-concavity}
\author{Gitte Kremling\thanks{Email: \texttt{gitte.kremling@uni-hamburg.de}}}
\author{Francesco Iafrate\thanks{Email: \texttt{francesco.iafrate@uni-hamburg.de}}}
\author{Mahsa Taheri\thanks{Email: \texttt{mahsa.taheri@uni-hamburg.de}}}
\author{Johannes Lederer\thanks{Email: \texttt{johannes.lederer@uni-hamburg.de}}}
\affil{Department of Mathematics, University of Hamburg, Germany}
\date{}
\newcommand{\dotprod}[2]{\left\langle #1, #2 \right\rangle}
\newcommand{\R}{\mathbb{R}}
\newcommand{\E}[1]{\mathbb{E}\qty[#1]}
\newcommand{\Var}[1]{\text{Var}\qty(#1)}
\newcommand{\half}{\frac{1}{2}}
\newcommand{\de}{\, \mathrm d }
\newcommand{\intk}{\int_{t_{k-1}}^{t_k}}
\newcommand{\intj}{\int_{t_{j-1}}^{t_j}}
\newcommand{\intkT}{\int_{T-t_k}^{T-t_{k-1}}}
\newcommand{\law}[1]{\mathcal{L}\qty(#1)}
\newcommand{\bigo}[1]{\mathcal{O}\qty(#1)}
\newcommand{\eps}{\varepsilon}
\newcommand\numberthis{\addtocounter{equation}{1}\tag{\theequation}}
\begin{document}
\maketitle
\begin{abstract}
Score-based generative modeling, implemented through probability flow ODEs, has shown impressive results in numerous practical settings.
However, most convergence guarantees rely on restrictive regularity assumptions on the target distribution—such as strong log-concavity or bounded support.  This work establishes non-asymptotic convergence bounds in the 2-Wasserstein distance for a general class of probability flow ODEs under considerably weaker assumptions: weak log-concavity and Lipschitz continuity of the score function. Our framework accommodates non-log-concave distributions, such as Gaussian mixtures, and explicitly accounts for initialization errors, score approximation errors, and effects of discretization via an exponential integrator scheme. %
Bridging a key theoretical challenge in diffusion-based generative modeling, our results extend convergence theory to more realistic data distributions and practical ODE solvers.
We provide concrete guarantees for the efficiency and correctness of the sampling algorithm, complementing the empirical success of diffusion models with rigorous theory. Moreover, from a practical perspective, our explicit rates might be helpful in choosing hyperparameters, such as the step size in the discretization.
\end{abstract}

\section{Introduction}
Diffusion models are a powerful class of generative models designed to sample from complex data distributions. They operate by reversing a forward stochastic process that progressively transforms data into noise. The generative process is typically modeled using a reverse-time stochastic differential equation (SDE) or an equivalent deterministic probability flow ordinary differential equation (ODE) that preserves the same marginal distributions~\citep{song2019generative,song2021score,ho2020denoising}. The key idea is to use a learned score function—an estimate of the gradient (with respect to the data) of the log-density—to guide the reverse dynamics. Samples are then generated by integrating this reverse process from pure noise back to the data manifold. 

The key issue in diffusion models is: under what assumptions and in which settings do these reverse processes converge to the target distribution?
While a growing body of literature addresses this issue, often distinguishing between stochastic and deterministic samplers, most analyses rely on strict assumptions about the unknown target distribution—such as log-concavity or bounded support~\citep{block2020generative,de2022convergence,lee2023convergence,gao2025wasserstein}. A natural and intriguing question is whether—and how—these assumptions can be relaxed. 
In this paper, we provide an answer to this question %
for probability flow ODEs, establishing a convergence result that merely requires \textit{weak log-concavity} of the data distribution. %
This generalization allows, for example, for multi-modality---which is often expected in practice.

\paragraph{Contributions}
We study the %
distance between the approximated and the true sample distribution for a general class of probability flow ODEs, while relaxing the standard strong log-concavity assumption.
Additionally, we account for the discretization error by employing an exponential integrator discretization approach.
Our main contributions are:
\begin{enumerate}
    \item We establish 2-Wasserstein convergence bounds for a general class of probability flow ODEs under a \emph{weak concavity} and a \emph{Lipschitz} condition on the score function (Theorem~\ref{thm:main_thm}). Our results cover a broad range of data distributions, including mixtures of Gaussians. Notably, we show that our bounds recover the same asymptotic rates as~\citet{gao2024convergence}, despite their reliance on the stricter assumption of a strongly log-concave target (Proposition~\ref{prop:asymp_same}). For easier interpretation, we present a simplified error bound for the specific case where the forward SDE is the Ornstein–Uhlenbeck process (Theorem~\ref{thm:err_bound_OU}).
   
    \item We derive bounds on the initialization, discretization, and propagated score-matching error, which can in turn be used to develop heuristics for choosing hyperparameters such as the time scale, the step size used for discretization, and the acceptable score-matching error (see Table \ref{tab:examples}).
    
    \item  We study regime shifting to establish global convergence guarantees for the probability flow ODE in diffusion models (Proposition~\ref{prop:regime-shift}). This is crucial for a rigorous mathematical understanding of their sampling dynamics. Our analysis of this transition between noise- and data-dominated phases enables stronger, non-asymptotic convergence rates.

\end{enumerate}

\subsection{Related Work}

Existing studies of the convergence of trained score-based generative models (SGMs) invoke a variety of different distances. 
Total Variation (TV) distance and Kullback–Leibler (KL) divergence are the most commonly used in theoretical analyses~\citep{Sara2000,wainwright2019high}. 
For instance, theoretical guarantees for diffusion models 
in terms of TV or  KL have been studied in \cite{lee2022convergence,wibisono2022convergence, chen2022sampling,chen2023improved,chen2023probability,chen2023restoration, gentiloni2024theoretical, li2024towards, conforti2025kl}.
However, these metrics often fail to capture perceptual similarity in applications such as image generation. In contrast, the 2-Wasserstein distance is often preferred in practice, as it better reflects the underlying geometry of the data distribution. %
One of the most popular performance metrics for the quality of generated samples in image applications, the Fr\'echet inception distance (FID), measures the Wasserstein distance between the distributions of generated images and the distribution of real images~\citep{heusel2017}. Importantly, convergence in TV or KL does not generally imply convergence in Wasserstein distance unless strong conditions are satisfied~\citep{gibbs2002choosing}.

A smaller number of works go further to analyze convergence in Wasserstein distances, though these typically require additional assumptions like compact support or uniform moment bounds, see e.g.\ \citet{block2020generative,de2022convergence,lee2023convergence,gao2025wasserstein} for SDE-based samplers.
For example, \cite{gao2025wasserstein} propose non-asymptotic Wasserstein convergence guarantees for a broad class of SGMs assuming accurate score estimates and a smooth log-concave data distribution
(with unbounded support).
In general, the convergence rates are sensitive not only to the smoothness of the target distribution but also to the numerical discretization scheme and the regularity of the learned score.
Very recently,~\cite{beyler2025convergence} establish 2-Wasserstein convergence guarantees for diffusion-based generative models, treating both stochastic and deterministic sampling via early-stopping analysis.
Assuming the target distribution has bounded support ($X \in B(0,R)$ almost surely), they obtain bounds that grow exponentially with the support bound ($R$) and the inverse of the early stopping time ($1/\epsilon$), noting that this looseness stems from their minimal regularity assumptions.
Under stronger smoothness conditions ($X= Z+ \mathcal{N}(0,\tau I)$ with $Z \in B(0,R)$ and $\tau>0$ almost surely), they could improve the exponential dependence on the inverse of the early stopping time ($1/\epsilon$).
While very interesting, their results are limited to specific drift and diffusion coefficients and proposed rates are not tight.  
Further theoretical studies have been conducted on the theory of probability flow ODEs.
For example, \cite{gao2024convergence} established non-asymptotic convergence guarantees in 2-Wasserstein disctance for a broad class of probability flow ODEs, assuming the score function is learned accurately and the data distribution has a smooth and strongly log-concave density. %
However, the strong log-concavity assumption does not hold for many distributions of practical interest, including Gaussian mixture models.

Recently, there has been growing interest in relaxing the common assumption of strong log-concavity in the analysis of SGMs.
\cite{gentiloni2025beyond} derived 2-Wasserstein convergence guarantees for SGMs under \emph{weak log-concavity}, a milder assumption than strong log‐concavity. Exploiting the regularizing effect of the Ornstein–Uhlenbeck (OU) process, they show that weak log-concavity evolves into strong log-concavity via a PDE analysis of the forward process. Their analysis, specific to stochastic samplers and the OU process, identifies contractive and non‐contractive regimes and yields explicit bounds for settings such as Gaussian mixtures.
\cite{bruno2025wasserstein}  investigate whether SGMs can be guaranteed to converge in 2-Wasserstein distance when the data distribution is only semiconvex and the potential admits discontinuous gradients. However, their results are likewise restricted to stochastic samplers and the OU
process.
\cite{brigati2024heat} also proposed a different weakening of  log-concavity assumption, in the form of a Lipschitz perturbation of a log-concave distribution. 
This includes, in particular, measures which are log-concave outside some ball $B(0, R)$ while satisfying a weaker Hessian bound inside  $B(0, R)$.  Other forms of relaxation known as $F$-concavity have also been studied in \cite{ishige2024eventual}.
A key feature of these assumptions is the emergence of a \emph{regime shifting} behavior (also referred to as \emph{creation of log-concavity} or \emph{eventual log-concavity}), whereby the smoothing effect of the flow renders the distribution log-concave after some time. Much of the theoretical analysis in this paper builds on deriving quantitative controls over this phenomenon.

A recent alternative to diffusion models is flow matching, which learns vector fields over a family of intermediate distributions rather than the score function, offering a more general framework. Recent works have further investigated theoretical bounds for flow matching~\citep{albergo2022building,albergo2023stochastic}. However, these results either still rely on some form of stochasticity in the sampling procedure or do not apply to data distributions without full support.
\cite{benton2023error}  presents the first bounds on the error of the flow matching procedure that apply with fully deterministic sampling for data distributions without full support. 
Under regularity assumptions, \cite{benton2023error} show that the 2-Wasserstein distance between the approximated and the true density is bounded by the approximation error of the vector field 
and an exponential factor of the Lipschitz constant of the velocity. 
While interesting, their  bound is derived under the assumption of a continuous-time flow ODE, and does not account for discretization errors that occur in practice, for instance when employing numerical ODE solvers.
Also, their bound exhibits exponential growth with respect to the Lipschitz constant of the velocity, implying that highly nonlinear flows  may result in significantly weaker guarantees. 

Despite the growing body of literature, most existing convergence results—whether for stochastic or deterministic samplers—consider less suitable distance measures (in particular TV and KL), are derived under simplified settings (e.g.\ ignoring the discretization error), or, more importantly, rely on strong structural assumptions, such as log-concavity or bounded support of the data distribution. A substantial gap remains in understanding how the convergence rates for deterministic samplers change when those assumptions are weakened under a general setting of drift and diffusion coefficients.

\paragraph{Paper outline} %
Section~\ref{sec:prel} introduces SGMs, highlighting the approximations that are necessary to enable sampling from the probability flow ODE. 
In Section~\ref{sec:weak_conv}, we investigate the weak log-concavity assumption and establish its propagation in time as well as a regime shifting property, both of which are crucial for the proof of our error bound.
Section~\ref{sec:main_res} presents our main result, a non-asymptotic convergence bound for the 2-Wasserstein distance of the true and approximated sample distribution. We provide a result for the specific choice of the Ornstein-Uhlenbeck process, yielding a directly interpretable bound, and a general result that applies to any choice of the drift and diffusion function. Moreover, we compare our result to the one in \citet{gao2024convergence} imposing the stricter assumption of strong log-concavity of the data distribution, revealing the remarkable feature that the asymptotics remain the same.
Finally, in Section~\ref{sec:conclusion}, we summarize our results and provide an outlook into related future research directions.
Additional technical results and detailed proofs are provided in the Appendix.

\paragraph{Notation}\label{Notations}
For $a, b \in \mathbb R$, we write $a \wedge b$ as a shorthand for $\min\{a, b\}$ and $a \vee b$ for $\max \{a,b\}$ . 
Given a random variable $X \in \mathbb R^d$, we denote its law by $\mathcal L (X)$ and its 
$L_2$-norm as 
$\| X\|_{L_2} :=\sqrt{ \mathbb E (\|X\|^2)}$, where $\|\cdot \|$ is the Euclidean norm in $\mathbb R^d$. 
For any two probability measures $\mu, \nu \in \mathcal P_2(\mathbb R^d)$, the space of measures on $\mathbb R^d$ with finite second moment, the 2-Wasserstein distance, based on the Euclidean norm, is defined as 
\begin{equation}\label{eq:wass-def}
    \mathcal W_2(\mu, \nu) :=
\left ( \inf_{ X \sim \mu, Y \sim \nu   } \mathbb E \|X - Y\|^2 \right ) ^{\frac 12}\,,
\end{equation}
where the infimum is taken over all possible couplings of $\mu$ and $\nu$. 
\section{Preliminaries on score-based generative models}\label{sec:prel}
This section introduces SGMs and the their ODE-based implementation of the sampling process (probability flow ODE), which provides the framework for our analysis. 
Denote with $p_0 \in \mathcal P(\mathbb R^d)$ an unknown probability distribution on $\mathbb R^d$. Our goal is to generate new samples from $p_0$ given a data set of  independent and identically distributed observations. 
SGMs use a two-stage procedure to achieve this. 
First, noisy samples are progressively generated by means of a diffusion-type stochastic process. 
Then, in order to reverse this process, a model is trained to approximate the score, enabling the generation of new samples.

More concretely, noisy samples are generated from the forward process \(\{X_t\}_{t\in[0,T]}\), solution to the stochastic differential
equation (SDE)
\begin{equation}
  \mathrm{d}X_t = -f(t) X_t\,\mathrm{d}t \;+\; g(t)\,\mathrm{d}B_t,
  \qquad X_0 \sim p_0,                                          \label{eq:sde}
\end{equation}
where \(f,g:[0,T]\to\mathbb{R}_{\ge 0}\) are continuous and non-negative, $g(t)$ is positive for all $t > 0$, and \(B_t\) is a
standard \(d\)-dimensional Brownian motion.
Through this process, the unknown data
distribution  \(p_0\) progressively evolves over time into the family $\{p_t, \, t \geq 0\}$, where $p_t$ denotes the marginal law of the process $X_t$.  
The solution to \eqref{eq:sde}  is given by (see e.g.\ \citealp[Chapter 5.6]{karatzas2012brownian})
\begin{equation}
\label{eq:forw_SDE_solution}
    X_t=e^{-\int_{0}^{t}f(s) \de s}\,X_0+\int_{0}^{t}e^{-\int_{s}^{t}f(v) \de v}g(s) \de B_s .
\end{equation}
Note that the stochastic integral in \eqref{eq:forw_SDE_solution} has Gaussian distribution:
\[
\int_{0}^{t}e^{-\int_{s}^{t}f(v) \de v}g(s) \de B_s  \sim \mathcal{N} \left (0,\int_{0}^{t}e^{-2\int_{s}^{t}f(v) \de v}g^2(s) \de s \cdot I_d \right ) =: \hat p_t,
\]
independent of $p_0$.

Common instances used in score-based generative modeling
are \emph{variance-exploding} (VE) and \linebreak \emph{variance-preserving} (VP)
SDEs \citep{song2021score}. In a VE-SDE, we choose
\begin{equation} \label{eq:VE-SDE}
    f(t) \equiv 0 \quad\text{and}\quad g(t) = \sqrt{\frac{\de \qty[\sigma^2(t)]}{\de t}},
\end{equation}
whereas in a VP-SDE, it holds that
\begin{equation} \label{eq:VP-SDE}
    f(t) = \half \beta(t) \quad\text{and}\quad g(t) = \sqrt{\beta(t)}
\end{equation}
for some non-negative non-decreasing functions $\sigma(t)$ and $\beta(t)$, respectively.
The name ``variance‐preserving'' in the VP--setting can be justified by noting that noise is added in the forward process in a way that exactly offsets the drift's tendency to contract the variance.
Namely, $\int_0^T f(t) \de t $ diverges while
$\int_0^T e^{-2\int_t^T f(s) \de s} g^2(t) \de t \to 1 $ as $T\to \infty$. 
Therefore, in the VP-case $X_t$ has stationary distribution $p_\infty = \mathcal N(0, I_d)$. 

Next, score matching is performed, i.e.\ the unknown true score function $\nabla_x \log p_t$ is 
estimated by training a model in some family $\{s_\theta(t, x), \theta \in \Theta\}$, typically a deep neural network. 
This is achieved by minimizing a denoising score matching objective of the form \citep{song2019generative}
\begin{equation}
\mathcal{L}(\theta)
= \int_0^T
  \,\mathbb{E}
  \Big[\big\|\, s_\theta(X_t,t)
  \;-\; \nabla_{x}\log p_{t}(X_t) \big\|_2^2 \Big].
\label{eq:dsm-general}
\end{equation}

Practical implementations of \eqref{eq:dsm-general} typically introduce a time-dependent weighting function and rewrite the objective in terms of conditional expectations to make the optimization viable. These modifications do not affect our analysis; the only requirement is that a sufficiently accurate model is available (see Assumption~\ref{ass:score_error}).

The key idea behind SGMs is that the dynamics of the reverse process are explicitly characterized, allowing for new sample generation. In this work, we focus on the ODE formulation of this time reversal, namely the probability-flow ODE.
According to \citet{song2021score}, the time–reversed state 
\(\tilde{X}_t := X_{T-t}, \, t\in[0, T]\) satisfies the ordinary differential
equation
\begin{equation}
  \frac{\de \tilde{X}_t}{ \de t}
    = f(T-t)\tilde{X}_t
      + \frac{1}{2}\,g^2(T-t)\,\nabla \log p_{T-t}(\tilde{X}_t),
  \qquad \tilde{X}_0 \sim p_T,                                  \label{eq:flow}
\end{equation}
which is the so-called \textit{probability flow ODE} underpinning modern SGMs.

In the VP-case,
$\nabla \log p_\infty (x) = -x$, and the probability flow ODE can be rewritten as 
\begin{equation}\label{eq:flow-norm}
 \frac{\de \tilde X_t}{\de t} = \frac 12 \beta(T-t) \nabla \log \tilde p_{T-t} (\tilde X_t)\,,
\end{equation}
where $\tilde p_{t} = p_t / p_\infty$. 
The ``normalized" flow in  \eqref{eq:flow-norm} plays the role of an ODE equivalent of \cite[equations (5)--(7)]{gentiloni2025beyond}.

Three approximations are needed in order to use ODE \eqref{eq:flow} to create new samples in practice. First, note that the distribution $p_T$ of the final state $X_T$ is unknown. 
We therefore approximate it with a tractable law from which samples can be generated efficiently.
Following \cite{gao2024convergence}, we replace $p_T$ with 
$\hat p_T$ and consider the probability flow
\begin{equation}\label{eq:flow-hat}
  \frac{\de Y_t}{ \de t}
    = f(T-t)Y_t
          + \frac{1}{2}\, g^2(T-t)\,\nabla \log p_{T-t}(Y_t),
  \qquad Y_0 \sim \hat p_T.                                  
\end{equation}
The only difference between $X_t$ and $Y_t$ lies in their initial distribution. In the VP case, one might also start the reverse process from the invariant distribution $p_\infty$, i.e.\ $Y_0 \sim \mathcal N(0, I_d)$. 

Second, we employ a numerical discretization method to approximate the solution of ODE \eqref{eq:flow-hat}, as it is not generally available in closed form. Similarly to \cite{gao2024convergence}, we consider an exponential integrator discretization for this purpose. 
This method has been shown to be faster than other options such as Euler method or RK45, as it is more stable with respect to taking larger step sizes \citep{zhang2023fast}.
Specifically, the interval $[0,T]$ is split into discrete time steps $t_k = kh$ for $k \in \{0, 1, \dots, K\}$ and step size $h > 0$. Without loss of generality, we assume that $T=Kh$ for some positive integer $K$. On each interval $t_{k-1} \le t \le t_k$, ODE \eqref{eq:flow-hat} is then approximated by
\begin{equation}
\label{eq:flow_discr}
    \frac{\de \widehat{Y}_t}{\de t} = f(T-t)\widehat{Y}_t + \half g^2(T-t) \nabla \log p_{T-t_{k-1}} \qty(\widehat{Y}_{t_{k-1}}).
\end{equation}
Since the non-linear term is not dependent on $t$ anymore, this ODE can be explicitly solved on each interval, yielding
\begin{align*}
    \widehat{Y}_{t_k} &= e^{\intk f(T-t)\de t} \widehat{Y}_{t_{k-1}}\\
    &\qquad + \half p_{T-t_{k-1}} \qty(\widehat{Y}_{t_{k-1}}) \cdot \intk e^{\int_t^{t_k} f(T-s) \de s} g^2(T-s) \de t
\end{align*}
for $k \in \{1, \dots, K\}$. As in \eqref{eq:flow-hat}, the initial distribution is given by $\widehat{Y}_0 \sim \hat{p}_T$. 

Finally, since the score function $\nabla \log p_{t}$ is unknown in practice, we approximate it by the score model $s_\theta(x,t)$. 
This leads to an approximation of \eqref{eq:flow_discr} given by
\begin{equation}
\label{eq:flow_score}
    \frac{\de \widehat{Z}_t}{\de t} = f(T-t)\widehat{Z}_t + \half g^2(T-t) s_\theta \qty(\widehat{Z}_{t_{k-1}}, T-t_{k-1})
\end{equation}
with $\widehat{Z}_0 \sim \hat{p}_T$ and solution
\begin{align*}
    \widehat{Z}_{t_k} &= e^{\intk f(T-t)\de t} \widehat{Z}_{t_{k-1}}\\
    &\qquad + \half s_\theta \qty(\widehat{Z}_{t_{k-1}}, T-t_{k-1}) \cdot \intk e^{\int_t^{t_{k}} f(T-s) \de s} g^2(T-s) \de t
\end{align*}
for $k \in \{1, \dots, K\}$. 

This means that, effectively\textemdash after replacing the initial distribution, learning the score, and discretizing\textemdash one is able to sample from the 
law  $\mathcal{L}(\widehat{Z}_{t_{K}})$,
which serves as a viable approximation of the unknown data distribution $p_0$. 
Our objective is then to quantify the accuracy of the method by providing bounds on the 2-Wasserstein distance between the generated samples and the target distribution
$\mathcal W_2 \big (\mathcal{L}(\widehat{Z}_{t_{K}}), p_0 \big )$.
A first brief summary of our results is given in Table \ref{tab:summary}.

\begin{table}
	\caption{In our main result (Theorem~\ref{thm:main_thm}), we show that the error $\mathcal W_2 (\mathcal{L}(\widehat{Z}_{t_{K}}), p_0 )$ can be bounded by the sum of three error components $E_0$, $E_1$, and $E_2$. The table provides a summary of the main properties of these terms and their specific heuristics in the specific case of the OU process, i.e.\ for $f \equiv 1$ and $g \equiv \sqrt{2}$, indicated by an $\text{asterisk}^\ast$ (see Theorem~\ref{thm:err_bound_OU}).}
	\label{tab:summary}
    \renewcommand{\arraystretch}{2}
    \vspace*{0.5\baselineskip}
    \centering
    \begin{tabular}{c|c|c|c}
        &$E_0(f,g,T)$ &$E_1(f,g,K,h)$ &$E_2(f,g,K,h,\mathcal{E})$\\
        \hline
        Error source &Initialization &Discretization &Score matching\\
        Vanishes with &$T \to \infty$ &$h \to 0$ &$\mathcal{E} \to 0$\\
        $\text{OU process}^\ast$ & $\mathcal{O}\qty(e^{-T}\sqrt{d})$ & $\mathcal{O}\qty(e^{Th} Th \qty(\sqrt{d} + T))$ & $\mathcal{O}\qty(e^{Th} T \mathcal{E})$\\
        Error $\le \varepsilon$ $\text{if}^\ast$ &$T \ge \mathcal{O}\qty(\log \qty(\frac{\sqrt{d}}{\varepsilon}))$ & $h \le \mathcal{O}\qty(\frac{\varepsilon}{\sqrt{d} \log\qty(\frac{\sqrt{d}}{\varepsilon})})$
        & $\mathcal{E} \le \mathcal{O}\qty(\frac{\varepsilon}{\log\qty(\frac{\sqrt{d}}{\varepsilon})})$
    \end{tabular}
\end{table}

\section{Weak concavity}
\label{sec:weak_conv}

Our main result establishes an error bound for the probability flow ODE, relying on a weaker assumption than strong log-concavity of the density $p_0$. In particular, we use the notion of weak concavity which was also used in \cite{gentiloni2025beyond} to derive a convergence result for the specific case of $f(t) = 1$ and $g(t) = \sqrt{2}$ resulting in the Ornstein-Uhlenbeck process. It is defined as follows.

\begin{definition}[Weak convexity]
\label{def:conv_profile}
    The weak convexity profile of a function $g \in C^1(\R^d)$ is defined as
    $$ \kappa_g(r) = \inf_{x, y \in \R : \norm{x-y} = r} \left\{ \frac{\dotprod{\nabla g(x) - \nabla g(y)}{x-y}}{\norm{x-y}^2} \right\}, \quad r>0.$$
    We say that $g$ is $(\alpha, M)$-weakly convex if 
    $$ \kappa_g(r) \ge \alpha - \frac{1}{r} f_M(r) \quad \text{for all } \, r > 0$$
    for some constants $\alpha, M > 0$ and 
    $$ f_M(r) = 2\sqrt{M} \tanh \qty(\half \sqrt{M} r). $$
    Moreover, we say that $g$ is $(\alpha, M)$-weakly concave if $-g$ is $(\alpha, M)$-weakly convex.
\end{definition}
\noindent The weak convexity assumption means that the function is approximately convex at ``large scales'' (large $r$), while allowing small non-convex fluctuations at short distances (small $r$). 
Importantly, $(\alpha, M)$-weak concavity implies $(\alpha-M)$-strong concavity if $\alpha-M > 0$, as laid out in Lemma~\ref{lem:weak_implies_vague}, meaning that it is in fact a more general assumption. A relevant example for a family of distributions that are weakly but not strongly log-concave are Gaussian mixture models \citep[Proposition~4.1]{gentiloni2025beyond}. A specific example of such a mixture model including graphs of the log-density and score function are given in Example \ref{ex:gmm} in Appendix~\ref{sec:app_weak_conc}. Note that, due to their strong log-concavity at large scales, weakly log-concave distributions necessarily need to have sub-gaussian tails. This means that any distribution that is not sub-gaussian, such as the Laplace distribution, cannot be weakly log-concave. 
This naturally rises the question if there exist distributions that are sub-gaussian but not weakly log-concave. The answer to this question is positive. In Example~\ref{ex:non_weak_concave} in Appendix~\ref{sec:app_weak_conc}, we construct a corresponding example. The main issue is that the score exhibits an excessively steep increase at one point.

\begin{remark}[General $f_M(r)$]
    As stated by~\citet[Theorem~5.4]{conforti2023projected}, a general class for $f_M$ is possible, provided that $f_M \in \widehat{\mathcal{G}} := \left\{ g \in \mathcal{G} ~\text{such that}~ g' \ge 0,\; 2g'' + g g' \le 0 \right\}$, where
    \begin{equation*} 
        \mathcal{G} := \left\{ g \in \mathcal{C}^2\bigl((0, \infty), \mathbb{R}_+\bigr) : r \mapsto r^{1/2} g(r^{1/2})~\text{is non-decreasing and concave, and}~\lim_{r \downarrow 0} r g(r) = 0 \right\}.
    \end{equation*}
    We also need that there exists an $M > 0$ such that $rg(r) \le M r^2$ in order for the second part of Lemma~\ref{lem:weak_implies_vague} to hold.
    Naively speaking, the set $\widehat{\mathcal{G}}$ consists of smooth, non-negative, non-decreasing functions $g(r)$ defined on $(0, \infty)$ that grow in a controlled way and do not bend upward too rapidly. The transformation $r \mapsto r^{1/2} g(r^{1/2})$ must be non-decreasing and concave, ensuring mild growth behavior. The condition $2g'' + g g' \le 0$ further constrains how sharply the function is allowed to curve upward.  
\end{remark}

In the following, we investigate the concavity (and Lipschitz smoothness) of $\log(p_t)$ given that $\log(p_0)$ is weakly log-concave (and Lipschitz smooth). In other words, we establish results on how the weak concavity and Lipschitz assumptions propagate through time following the forward SDE \eqref{eq:sde}. Our main result heavily relies on these findings.

\subsection{Propagation in time of weak log-concavity}
\label{sec:weak_conv_prop}

The following Proposition~shows that, if $p_0$ is weakly log-concave, this property is preserved by $p_t$.

\begin{proposition}[Propagation of weak log-concavity in time]
\label{prop:weak_pt}
    If $p_0$ is $(\alpha_0, M_0)$-weakly log-concave, then $p_t$ is $(\alpha(t), M(t))$-weakly log-concave with 
    \begin{equation} \label{eq:alpha_def}
        \alpha(t) = \frac{1}{\frac{1}{\alpha_0} e^{-2\int_0^t f(s)ds} + \int_0^t e^{-2\int_s^t f(v)dv} g^2(s) ds}
    \end{equation}
    and
    \begin{equation} \label{eq:M_def}
        M(t) = \frac{M_0 e^{2\int_0^t f(s)ds}}{\left(1 + \alpha_0 \int_0^t e^{2\int_0^s f(v)dv} g^2(s) ds\right)^2}.
    \end{equation}
    This implies in particular that
    \[
    \langle \nabla \log p_t (x) - \nabla \log p_t (y), x - y \rangle \leq - (\alpha(t) - M(t)) \| x - y \|^2 .
    \]
\end{proposition}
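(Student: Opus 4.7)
My plan is to exploit the explicit mild form of the forward process. By \eqref{eq:forw_SDE_solution}, $X_t$ is equal in distribution to $m(t) X_0 + \sigma(t) Z$, where $m(t) := e^{-\int_0^t f(s)\,ds}$, $\sigma^2(t) := \int_0^t e^{-2\int_s^t f(v)\,dv} g^2(s)\,ds$, and $Z \sim \mathcal{N}(0, I_d)$ is independent of $X_0$. Consequently $p_t$ is obtained from $p_0$ by two elementary operations: first a dilation by $m(t)$, then convolution with a centred Gaussian of variance $\sigma^2(t)$. The proposition would follow from two lemmas describing how the pair $(\alpha, M)$ transforms under each of these operations, together with a short algebraic reduction.

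For the rescaling step, a direct change-of-variable computation gives, for the law $\tilde p$ of $cX$ with $X \sim p$, the identity $\kappa_{-\log \tilde p}(r) = c^{-2}\kappa_{-\log p}(r/c)$. Using $\sqrt{M}(r/c)/2 = \sqrt{M/c^2}\,r/2$, this translates into the update $(\alpha, M) \mapsto (\alpha/c^2, M/c^2)$ on the weak-log-concavity parameters, which is routine. The harder step is the Gaussian convolution: I would show that if $q$ is $(\alpha, M)$-weakly log-concave and $q_s := q * \mathcal{N}(0, sI_d)$, then $q_s$ is $(\alpha/(1+\alpha s),\, M/(1+\alpha s)^2)$-weakly log-concave. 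This is the analytical heart of the proof and is precisely the update that closes the algebra in \eqref{eq:alpha_def}--\eqref{eq:M_def}. One route is to run the heat-semigroup PDE for the convexity profile $\kappa$, in the spirit of Theorem~5.4 of \cite{conforti2023projected} and Proposition~3.1 of \cite{gentiloni2025beyond}, and verify that the ansatz $f_M(r) = 2\sqrt M \tanh(\tfrac{1}{2}\sqrt M r)$ is preserved along the flow with $M$ and $\alpha$ transforming as stated. An alternative is to use Tweedie's identity $\nabla \log q_s(x) = s^{-1}(\mathbb{E}[X \mid X + \sqrt s Z = x] - x)$ together with a quantitative Brascamp--Lieb-type bound on the posterior covariance, then integrate along straight-line couplings to obtain the pointwise bound on $\kappa_{-\log q_s}$.

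Composing the two lemmas with $c = m(t)$ and $s = \sigma^2(t)$ yields $(\alpha(t), M(t))$-weak log-concavity of $p_t$; a short computation using the identity $\int_0^t e^{2\int_0^s f(v)\,dv} g^2(s)\,ds = \sigma^2(t)/m(t)^2$ shows that the resulting parameters coincide with \eqref{eq:alpha_def}--\eqref{eq:M_def}. For the concluding display, I would invoke the inequality $\tanh u \le u$, which gives $f_{M(t)}(r)/r \le M(t)$ for every $r>0$; hence $\kappa_{-\log p_t}(r) \ge \alpha(t) - M(t)$ for all $r$, which is exactly the stated bound and which holds regardless of the sign of $\alpha(t) - M(t)$.

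The main obstacle is the Gaussian convolution step: the profile $f_M$ is nonlinear in $r$ and must be controlled uniformly in $r$, so the usual Bakry--\'Emery or scalar Brascamp--Lieb arguments, which produce curvature bounds at a single scale, do not suffice on their own. Showing that the specific tanh ansatz is stable under Gaussian smoothing with the precise update rule for $(\alpha, M)$ requires either a careful PDE analysis of $\partial_s \kappa$ along the heat flow, or an explicit coupling argument. Both reduce to checking that $(\alpha, M)$ satisfies a Riccati-type recursion consistent with the map $(\alpha, M) \mapsto (\alpha/(1+\alpha s), M/(1+\alpha s)^2)$, which is where the nonlinear structure of $f_M$ is genuinely used.
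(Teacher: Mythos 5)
Your decomposition of the forward map (dilation by $m(t)=e^{-\int_0^t f}$ followed by convolution with $\mathcal N(0,\sigma^2(t)I_d)$), the rescaling update $(\alpha,M)\mapsto(\alpha/c^2,M/c^2)$, the claimed convolution update $(\alpha,M)\mapsto\bigl(\alpha/(1+\alpha s),\,M/(1+\alpha s)^2\bigr)$, and the final step via $\tanh u\le u$ are all consistent with the statement and would indeed reproduce \eqref{eq:alpha_def}--\eqref{eq:M_def}. However, there is a genuine gap: the Gaussian-convolution lemma, which you yourself identify as ``the analytical heart,'' is only asserted, not proved. Neither of the two routes you sketch is carried out, and neither is routine. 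The PDE route amounts to re-deriving the preservation of the $\tanh$ profile along the heat flow, which is exactly the content of the external result the paper invokes (Theorem~\ref{thm:weak_convol}, i.e.\ Theorem~2.1 of \citet{conforti2024weak}); re-proving it is a substantial piece of analysis, not a verification of a ``Riccati-type recursion.'' The Tweedie/Brascamp--Lieb route is problematic because quantitative posterior-covariance bounds of Brascamp--Lieb type require log-concavity of the tilted measure, which is precisely what is not available here (the target is only weakly log-concave), and in any case a covariance bound gives curvature control at a single scale, whereas you need the full $r$-dependent profile bound with the specific $\tanh$ shape. So as written, the proposal defers the only hard step.

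For comparison, the paper avoids proving any new convolution lemma: it reduces to the $\alpha=0$ class $\mathcal F_{M}$, which the cited theorem says is preserved \emph{exactly} by the heat semigroup. Concretely, one completes the square inside the Gaussian transition kernel by adding and subtracting $\tfrac12\alpha_0\|y\|^2$, which factors $p_t(x)$ as an explicit Gaussian prefactor $\exp(-\tfrac12 c_2(\alpha_0,t)\|x\|^2)$ times $S_{c_3(\alpha_0,t)}\bigl(e^{\frac12\alpha_0\|\cdot\|^2}p_0\bigr)$ evaluated at the rescaled point $c_4(\alpha_0,t)x$; Lemma~\ref{lem:weak_FM} puts $-\log\bigl(e^{\frac12\alpha_0\|\cdot\|^2}p_0\bigr)$ in $\mathcal F_{M_0}$, Theorem~\ref{thm:weak_convol} keeps it there, and the elementary profile calculus of Lemma~\ref{lem:weak_con_props} together with the scaling identity $cf_M(cr)=f_{c^2M}(r)$ yields $\kappa_{-\log p_t}(r)\ge c_2(\alpha_0,t)-r^{-1}f_{M_0c_4^2(\alpha_0,t)}(r)$, after which only algebra identifies $\alpha(t)=c_2(\alpha_0,t)$ and $M(t)=M_0c_4^2(\alpha_0,t)$. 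If you want to salvage your two-step structure, the cleanest fix is to prove your convolution lemma by the same completing-the-square reduction to Theorem~\ref{thm:weak_convol}, rather than by a fresh PDE or Brascamp--Lieb argument; with that lemma in hand, the rest of your bookkeeping (using $\int_0^t e^{2\int_0^s f}g^2\,ds=\sigma^2(t)/m^2(t)$) goes through and matches the paper's constants.
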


\noindent Note that this is a generalization of the result in \citet[Equation (5.4)]{gao2025wasserstein} since $\alpha(t) = a(t)$ and $M(t) = 0$ if and only if $M_0 = 0$. 

\paragraph{Regime shifting} %
An interesting property of the forward flow is that the law $p_t$ becomes strongly log-concave after a finite amount of time,  even if $p_0$ is only weakly log-concave. We call this the \emph{regime shift} property.
It plays a central role in establishing convergence guarantees of the probability flow, see Proposition~\ref{prop:yt-p0-bound} below.

The forthcoming Proposition~\ref{prop:regime-shift} formalizes the regime shift property of our model. 
Intuitively, it states that, if $\alpha_0 -  M_0 > 0$, i.e.\ if $p_0$ is strongly log-concave, then 
$p_t$ is guaranteed to remain strongly log-concave.
Otherwise, if $\alpha_0 -  M_0 \le 0$, we have a regime shift result, and we are able to explicitly quantify the time at which this change takes place.
This is compatible with what has been observed in the literature for OU forward processes \citep{gentiloni2025beyond}. 
Let 
\begin{equation}\label{eq:tau_def}
    \tau(\alpha, M) \coloneqq
    \left \{
    \begin{aligned}
        & 0, & \alpha -  M > 0 \\
       &  \inf \left \{ t > 0 : \int_0^{t} e^{2\int_0^{s} f(v) \de v} g^2(s) \de s > \frac{M - \alpha}{\alpha^2}  \right \},  & \alpha -  M \leq  0
    \end{aligned}
    \right. 
\end{equation}
for $\alpha, M \in \mathbb R$. 
Since the integral in the inequality above is strictly increasing, we have
$\tau(\alpha, M) < \infty$. 

\begin{proposition}[Regime shifting]\label{prop:regime-shift}
    For $ 0 < t < T$, it holds that
    \begin{equation*}
        \begin{cases}
            p_t \, \text{ is weakly log-concave}, & t \in (0, \tau(\alpha_0, M_0) \wedge T)
            \\
            p_t \, \text{ is strongly  log-concave}, & t \in [\tau(\alpha_0, M_0) \wedge T , T)\,.
        \end{cases}
    \end{equation*}
\end{proposition}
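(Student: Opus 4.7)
The plan is to leverage Proposition~\ref{prop:weak_pt}, which already establishes $(\alpha(t), M(t))$-weak log-concavity of $p_t$, together with Lemma~\ref{lem:weak_implies_vague}, which upgrades weak to strong log-concavity as soon as $\alpha(t) - M(t) > 0$. So the entire argument reduces to tracking the sign of $\alpha(t) - M(t)$ and matching its zero-crossing to the definition of $\tau(\alpha_0, M_0)$.

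First, I would rewrite the formulas \eqref{eq:alpha_def}--\eqref{eq:M_def} in a more transparent form. Set $F(t) := \int_0^t f(s) \de s$ and $G(t) := \int_0^t e^{2 F(s)} g^2(s) \de s$. Pulling $e^{-2F(t)}$ out of the denominator of $\alpha(t)$ and simplifying, a short computation yields
\begin{equation*}
    \alpha(t) = \frac{\alpha_0 \, e^{2F(t)}}{1 + \alpha_0 G(t)}, \qquad M(t) = \frac{M_0 \, e^{2F(t)}}{\bigl(1 + \alpha_0 G(t)\bigr)^2}.
\end{equation*}

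Second, I would compute the difference with a common denominator:
\begin{equation*}
    \alpha(t) - M(t) = \frac{e^{2F(t)}}{\bigl(1 + \alpha_0 G(t)\bigr)^2} \, \bigl[\, \alpha_0 - M_0 + \alpha_0^2 \, G(t) \,\bigr].
\end{equation*}
Since the prefactor is strictly positive, $\alpha(t) - M(t) > 0$ is equivalent to $G(t) > (M_0 - \alpha_0)/\alpha_0^2$. Now the two cases of the proposition fall out naturally. If $\alpha_0 - M_0 > 0$, the right-hand side is negative while $G(t) \geq 0$, hence $\alpha(t) - M(t) > 0$ for every $t > 0$, consistent with $\tau(\alpha_0, M_0) = 0$. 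If $\alpha_0 - M_0 \leq 0$, then $G$ is continuous, strictly increasing and $G(0) = 0$, so the threshold $(M_0 - \alpha_0)/\alpha_0^2 \geq 0$ is crossed at a unique time, which by \eqref{eq:tau_def} is exactly $\tau(\alpha_0, M_0)$. Consequently $\alpha(t) - M(t) \leq 0$ on $(0, \tau(\alpha_0, M_0))$ and $\alpha(t) - M(t) > 0$ on $(\tau(\alpha_0, M_0), \infty)$.

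Finally, I would conclude: on the first interval, Proposition~\ref{prop:weak_pt} already gives weak log-concavity of $p_t$; on the second interval, Lemma~\ref{lem:weak_implies_vague} promotes this to $(\alpha(t) - M(t))$-strong log-concavity. Intersecting with $(0, T)$ gives the two regimes stated in the proposition.

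There is no real obstacle here: the proof is essentially an algebraic identity combined with the monotonicity of $G$. The only subtlety worth flagging is the boundary point $t = \tau(\alpha_0, M_0)$, at which $\alpha(t) = M(t)$; the statement places this point in the strong log-concavity regime, which is consistent with interpreting strong log-concavity non-strictly (allowing the zero convexity constant) at the switching time. Everything else is a direct application of the previously established results.
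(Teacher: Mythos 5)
Your proof is correct and follows essentially the same route as the paper: combine Proposition~\ref{prop:weak_pt} with Lemma~\ref{lem:weak_implies_vague} and reduce the sign of $\alpha(t)-M(t)$, via the same algebra, to the condition $\int_0^t e^{2\int_0^s f(v)\de v}g^2(s)\de s > (M_0-\alpha_0)/\alpha_0^2$ defining $\tau(\alpha_0,M_0)$. The only cosmetic difference is that you handle the case $\alpha_0-M_0>0$ directly from the same inequality, whereas the paper dispatches it by citing the log-concavity preservation result of \citet{gao2024convergence}; your flag about the boundary point $t=\tau(\alpha_0,M_0)$ is a fair observation that applies equally to the paper's argument.
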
    
\noindent For example, for the Ornstein-Uhlenbeck process, 
\begin{equation}\label{eq:tau-ou}
    \tau(\alpha_0, M_0) = \log \sqrt{\frac{\alpha_0^2 + M_0 - \alpha_0}{\alpha_0^2}}\,, 
\end{equation}
which matches \citet[equation (26)]{gentiloni2025beyond}.
For a derivation of equation~\eqref{eq:tau-ou}, we refer to 
Example~\ref{ex:kt_vp} in Appendix~\ref{sec:app_weak_conc}, where formulas for the general VP case are presented.

The weak (log-)concavity constant $K(t) \coloneqq \alpha(t) - M(t)$ being negative for $t=0$ and becoming positive for $t = \tau(\alpha_0, M_0)$ rises the question whether this transition progresses monotonously. This is, in fact, not necessarily the case.
See Figure \ref{fig:alphaM} in Appendix~\ref{sec:app_weak_conc} for a graphical representation of possible behaviors.

\subsection{Propagation in time of Lipschitz continuity}

Assuming weak log-concavity of $p_0$ also guarantees Lipschitz continuity of the score function $\nabla \log(p_0)$ to propagate through the forward SDE \eqref{eq:forw_SDE_solution} as the following result shows.

\begin{proposition}[Propagation of Lipschitz continuity in time]
\label{prop:weak_Lip}
    If $p_0$ is $(\alpha_0, M_0)$-weakly log-concave and $\nabla \log p_0$ is $L_0$-Lipschitz continuous, i.e.
    \begin{equation*}
        \norm{\nabla \log p_0(x) - \nabla \log p_0(y)} \le L_0 \norm{x-y},
    \end{equation*}
    then $\nabla \log p_t$ is $L(t)$-Lipschitz continous, i.e.
    \begin{equation*}
        \norm{\nabla \log p_{t}(x) - \nabla \log p_{t}(y)} \le L(t) \norm{x-y}
    \end{equation*}
    with 
    \begin{equation} \label{eq:L_def}
        L(t) = \max\qty{\min\qty{\qty(\int_0^t e^{-2\int_s^t f(v)dv} g^2(s) ds)^{-1}, e^{2\int_0^t f(s)ds} L_0}, -\Big(\alpha(t) - M(t)\Big)}.
    \end{equation}
\end{proposition}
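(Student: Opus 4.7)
The plan is to bound the Lipschitz constant of $\nabla\log p_t$ by deriving pointwise two-sided eigenvalue bounds on the Hessian $\nabla^2\log p_t(x)$. Since the Lipschitz constant of a gradient field equals the supremum over $x$ of the operator norm of its Jacobian, it suffices to show that every eigenvalue of $\nabla^2\log p_t(x)$ lies in the interval $[-\min\{\sigma_t^{-2},\,L_0 a_t^{-2}\},\,M(t)-\alpha(t)]$, where $a_t:=e^{-\int_0^t f(s)\de s}$ and $\sigma_t^2:=\int_0^t e^{-2\int_s^t f(v)\de v}g^2(s)\de s$. Taking the maximum of the absolute values of the endpoints of this interval then produces precisely the claimed $L(t)$.

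The upper bound $\nabla^2\log p_t(x)\preceq (M(t)-\alpha(t))I$ is an immediate consequence of Proposition~\ref{prop:weak_pt}. For the two lower bounds I would use the explicit solution of the forward SDE, $X_t=a_tX_0+\sigma_t Z$ with $Z\sim\mathcal N(0,I_d)$ independent of $X_0$, together with the twice-differentiated Tweedie identity
\[
\nabla^2\log p_t(x)=\frac{a_t^2}{\sigma_t^4}\,\mathrm{Cov}(X_0\mid X_t=x)-\sigma_t^{-2}I.
\]
Positive semidefiniteness of the conditional covariance directly gives the Gaussian-smoothing lower bound $\nabla^2\log p_t(x)\succeq -\sigma_t^{-2}I$. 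For the sharper contraction bound, $L_0$-Lipschitz continuity of $\nabla\log p_0$ forces $\nabla^2\log p_0\succeq -L_0 I$, so the posterior $\pi(\,\cdot\mid x)$ of $X_0$ given $X_t=x$ has log-density Hessian bounded below by $-(L_0+a_t^2\sigma_t^{-2})I$. The multivariate Cramér--Rao inequality applied to the translation family $\theta\mapsto\pi(\,\cdot-\theta\mid x)$ then lower-bounds the posterior covariance by the inverse Fisher information, giving $\mathrm{Cov}(X_0\mid X_t=x)\succeq(L_0+a_t^2\sigma_t^{-2})^{-1}I$; substituting back into the Tweedie identity yields $\nabla^2\log p_t(x)\succeq -L_0 a_t^{-2} I=-e^{2\int_0^t f(s)\de s}L_0\,I$.

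Combining the upper bound with the minimum of the two lower bounds then recovers the desired formula for $L(t)$. The main technical obstacle is the Lipschitz-contraction lower bound: because $p_0$ is only weakly log-concave, the posterior $\pi(\,\cdot\mid x)$ is merely semi-convex rather than strongly log-concave, so Brascamp--Lieb is not available in its usual direction; the reverse Cramér--Rao inequality, interpreted as a variance bound in a location model, is the natural replacement and is exactly what makes the clean factor $L_0 a_t^{-2}$ appear rather than a less favourable quantity such as $L_0/(a_t^2-L_0\sigma_t^2)$. A viable alternative would be a Grönwall argument on the evolution equation for $\nabla^2\log p_t$ under the forward Fokker--Planck flow, which recovers the same exponential growth rate $e^{2\int_0^t f(s)\de s}L_0$ on the Hessian operator norm directly, without invoking conditional covariance identities.
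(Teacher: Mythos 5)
Your proposal is correct and, although it shares the paper's overall skeleton---two-sided pointwise bounds on the Hessian $\nabla^2\log p_t$, with the upper bound $(M(t)-\alpha(t))I_d$ taken from Proposition~\ref{prop:weak_pt} and a spectral-norm sandwich at the end (your ``maximum of absolute values of the endpoints'' is exactly the paper's observation that $C\preceq A\preceq B$ implies $\norm{A}\le\max\{\norm{B},\norm{C}\}$)---it derives the lower bound by a genuinely different mechanism. The paper writes $p_t$ as a convolution $q_0^t\ast q_1^t$ of a rescaled copy of $p_0$ with a Gaussian and uses the Saumard--Wellner conditional-variance identity in both of its symmetric forms: dropping the positive-semidefinite variance term in the Gaussian form yields the $1/c_1(t)$ branch, while dropping it in the other form, combined with $\nabla^2(-\log q_0^t)\preceq c_0^2(t)L_0 I_d$, yields the $c_0^2(t)L_0$ branch. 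You instead work only with the Gaussian-side decomposition, in the guise of the second-order Tweedie identity $\nabla^2\log p_t(x)=\tfrac{a_t^2}{\sigma_t^4}\mathrm{Cov}(X_0\mid X_t=x)-\sigma_t^{-2}I_d$ (your $a_t$, $\sigma_t^2$ are the paper's $c_0^{-1}(t)$, $c_1(t)$): positive semidefiniteness of the conditional covariance gives the $1/c_1(t)$ branch, and for the other branch you need a quantitative lower bound on the posterior covariance, which you get from the Cram\'er--Rao (location-family) inequality together with the posterior log-Hessian bound $\succeq-(L_0+a_t^2\sigma_t^{-2})I_d$. This works, and your substitution in fact delivers the slightly sharper constant $L_0/(a_t^2+L_0\sigma_t^2)\le L_0a_t^{-2}=c_0^2(t)L_0$, so the claimed $L(t)$ follows a fortiori. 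What each approach buys: the paper's route avoids any estimation-theoretic input and never needs to invert or bound Fisher information, since both branches come from discarding a variance term in an exact identity; your route is arguably more self-contained on the probabilistic side (Tweedie plus a classical inequality) but requires the regularity behind Cram\'er--Rao, in particular the identity $J(\pi)=-\mathbb{E}\qty[\nabla^2\log\pi]$ for the posterior via integration by parts, which should be stated explicitly---here it is unproblematic because the Gaussian likelihood factor gives the posterior Gaussian-type decay and $\nabla\log p_0$ grows at most linearly, but it is an extra hypothesis check the paper's argument does not need.
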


\noindent This is a proper generalization of a corresponding result for strongly log-concave distributions $p_0$ given in \citet[Lemma~9]{gao2025wasserstein}, as, in that case, $\alpha(t)- M(t) > 0$ for all $t \in [0,T]$ and the maximum in \eqref{eq:L_def} is achieved at the first term matching the definition of $L(t)$ in \citet{gao2025wasserstein}. 
\section{Main result}
\label{sec:main_res}

This section presents our main result, a non-asymptotic error 
bound for the approximated probability flow \eqref{eq:flow_score}. There are three sources of error according to the approximations of the probability flow ODE \eqref{eq:flow} explained in Section~\ref{sec:prel}. The first one, the initialization error, caused by using $Y_0 \sim \hat{p}_T$ instead of $p_T$, see \eqref{eq:flow-hat}, can be reduced by choosing a large time scale $T$. The second error source resulting from the numerical discretization $\widehat{Y}_t$ of the ODE as given in \eqref{eq:flow_discr}, can be alleviated by a small step size $h$. Lastly, the score-matching error, i.e.\ the distance between the true score $\nabla \log p_t(x)$ and its estimated counterpart $s_\theta(x,t)$, needs to be controlled in order for $\widehat{Z}_t$ as defined in \eqref{eq:flow_score} to be close to $\widehat{Y}_t$. Our non-asymptotic error bound accounting for all three of these approximations can be used to derive heuristics for how to choose the time scale $T$, the step size $h$, and the admissible score-matching error, say $\mathcal{E}$, in practical applications. Note that, as opposed to $T$ and $h$, the admissible score-matching error $\mathcal{E}$ cannot be directly chosen, but rather determines how to pick $s_\theta(x,t)$. When using a neural network, for example, $\mathcal{E}$ might affect its architecture, the number of epochs used for training, and the necessary number of training samples. %
In order for our error bound to hold, we impose the following assumptions.

\begin{assumption}[Regularity of the target]
\label{ass:weak_concave_Lip}
    The density of the data distribution $p_0$ is twice differentiable and positive everywhere. Moreover, $\nabla \log p_0$ is $(\alpha_0, M_0)$-weakly concave in the sense of Definition \ref{def:conv_profile} as well as $L_0$-Lipschitz continuous, meaning that for all $x,y \in \R^d$, it holds that
    \begin{equation*}
        \norm{\nabla \log p_0(x) - \nabla \log p_0(y)} \le L_0 \norm{x-y}.
    \end{equation*}
\end{assumption}
\noindent The first part of Assumption~\ref{ass:weak_concave_Lip} has been employed in previous works such as~\cite{gentiloni2025beyond}. Notably, it is a relaxed version of strong log-concavity which is the prevailing assumption in related works, e.g.~\citet{bruno2023diffusion, li2022sqrt, gao2024convergence, gao2025wasserstein}.
The second part, i.e.\ the Lipschitz continuity of the score function, is a standard regularity condition that ensures the gradient of the log-density varies smoothly and is also considered in a large number of previous works, for example, \cite{chen2023improved, gao2024convergence, taheri2025regularization, gao2025wasserstein}. In particular, \citet[Proposition~4.1]{gentiloni2025beyond} shows that Gaussian mixtures satisfy both the weak log-concavity and log-Lipschitz conditions, highlighting the broad applicability of this assumption.
\begin{assumption}[Lipschitz continuity in time]
\label{ass:Lip_in_time}
    There exists some $L_1 > 0$ such that for all $x \in \R^d$
    \begin{equation*}
        \sup_{\substack{k \in \{1, \dots, K\}\\t_{k-1} \le t \le t_k}} \norm{\nabla \log p_{T-t}(x) - \nabla \log p_{T-t_{k-1}}(x)} \le L_1 h (1+\norm{x}).
    \end{equation*}
\end{assumption}

\noindent Assumption~\ref{ass:Lip_in_time} imposes a Lipschitz condition on the score function with respect to time, ensuring that the scores vary smoothly over time. This assumption is mainly employed to bound the discretization error (see proof of Proposition~\ref{prop:yt-zt-bound}) and has been invoked widely~\citep{gao2024convergence,gao2025wasserstein}. 
A straightforward motivation is the idealized setting $X_0 \sim \mathcal{N}(0, \sigma^2 I_d)$, in which case its validity has been shown in \citet[p.\  8-9]{gao2025wasserstein}.

\begin{assumption}[Score-matching error]
\label{ass:score_error}
    There exists some $\mathcal{E} > 0$ such that
    \begin{equation*}
        \sup_{k \in \{1, \dots, K\}} \norm{\nabla \log p_{T-t_{k-1}}\qty(\widehat{Z}_{t_{k-1}}) - s_\theta\qty(\widehat{Z}_{t_{k-1}}, T-t_{k-1})}_{L_2} \le \mathcal{E}.
    \end{equation*}
\end{assumption}

\noindent Assumption~\ref{ass:score_error} ensures the accuracy of the learned score function. Just as in similar papers on the topic~\citep{gao2024convergence, gao2025wasserstein, gentiloni2025beyond},
it allows us to separate the convergence properties of the sampling algorithm from the challenges of score estimation. Our work focuses on the algorithmic aspects under idealized score estimates; the statistical error due to learning the score from data is the subject of another rich line of research~\citep{zhang2024minimax, wibisono2024optimal, dou2024optimal}.

\subsection{Error bound for the Ornstein-Uhlenbeck process}
\label{sec:main_OU}

Since our main result, a general error bound accounting for all possible functions $f$ and $g$, is rather complex and does not allow for a direct translation into a lower bound for $T$ and upper bounds for $h$ and $\mathcal{E}$, we first consider a specific case that is readily interpretable and then turn to the general case.

\begin{theorem}[Error bound for the OU process] \label{thm:err_bound_OU}
    For the Ornstein-Uhlenbeck process, i.e.\ $f(t) \equiv 1$ and $g(t) \equiv \sqrt{2}$, it holds that
    \begin{equation*}
        \mathcal{W}_2(\mathcal{L}(\widehat{Z}_T), p_0) \le \mathcal{O}\Big( \underbrace{e^{-T} \norm{X_0}_{L_2}}_{\textup{Initialization error}} + \underbrace{e^{Th}Th(\norm{X_0}_{L_2} + \sqrt{d} + T)}_{\textup{Discretization error}} + \underbrace{e^{Th}T\mathcal{E}}_{\textup{Propagated score-matching error}} \Big).
    \end{equation*}
\end{theorem}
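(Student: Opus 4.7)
The plan is to bound $\mathcal{W}_2(\mathcal{L}(\widehat Z_T), p_0)$ by a triangle inequality and then specialize general bounds to the Ornstein--Uhlenbeck regime $f \equiv 1$, $g \equiv \sqrt{2}$. Concretely, noting that $\tilde X_T = X_0 \sim p_0$, I would write
\begin{equation*}
\mathcal{W}_2(\mathcal{L}(\widehat Z_T), p_0)
\;\le\; \underbrace{\mathcal{W}_2\bigl(\mathcal{L}(\tilde X_T), \mathcal{L}(Y_T)\bigr)}_{\text{initialization}}
\;+\; \underbrace{\mathcal{W}_2\bigl(\mathcal{L}(Y_T), \mathcal{L}(\widehat Y_T)\bigr)}_{\text{discretization}}
\;+\; \underbrace{\mathcal{W}_2\bigl(\mathcal{L}(\widehat Y_T), \mathcal{L}(\widehat Z_T)\bigr)}_{\text{score-matching}}.
\end{equation*}
Each piece is controlled by a Gr\"onwall-type coupling argument using the concavity/Lipschitz controls from Section~\ref{sec:weak_conv}, and then every occurring integral is computed explicitly for $f\equiv 1$, $g\equiv\sqrt 2$, where $\int_0^t f = t$ and $\int_0^t e^{-2(t-s)} g^2(s)\,\de s = 1-e^{-2t}$.

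For the initialization term, both $\tilde X_t$ and $Y_t$ satisfy ODE~\eqref{eq:flow-hat}, so coupling them through the same flow gives an instantaneous dissipation rate governed by $\alpha(T-t) - M(T-t)$ via Proposition~\ref{prop:weak_pt}. In the OU case, $\alpha(s) = \alpha_0/((1-\alpha_0)e^{-2s}+\alpha_0) \to 1$ and $M(s) \to 0$ exponentially, and by the regime-shift Proposition~\ref{prop:regime-shift} with $\tau(\alpha_0,M_0)$ given by \eqref{eq:tau-ou}, $p_{T-t}$ is strongly log-concave for $T-t \ge \tau(\alpha_0,M_0)$. I would split the Gr\"onwall integral at this threshold: on the (fixed-length) non-contractive window near $t=T$ one pays at most a bounded constant, while on the strongly log-concave window of length $T-\tau(\alpha_0,M_0)$ one gains $e^{-T}$. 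The residual gap is $\mathcal{W}_2(p_T, \hat p_T)$, which by the explicit solution \eqref{eq:forw_SDE_solution} is $\|e^{-T}X_0\|_{L_2} = e^{-T}\|X_0\|_{L_2}$, yielding the first term.

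For the discretization term, on each sub-interval $[t_{k-1},t_k]$ I would write $Y_t-\widehat Y_t$ via Duhamel and split the score increment $\nabla\log p_{T-t}(Y_t) - \nabla\log p_{T-t_{k-1}}(\widehat Y_{t_{k-1}})$ into a spatial part (controlled by $L(T-t)\|Y_t-\widehat Y_{t_{k-1}}\|$ via Proposition~\ref{prop:weak_Lip}) and a temporal part (controlled by $L_1 h(1+\|\widehat Y_{t_{k-1}}\|)$ via Assumption~\ref{ass:Lip_in_time}). A uniform moment bound $\|\widehat Y_{t_{k-1}}\|_{L_2} \lesssim \|X_0\|_{L_2} + \sqrt d + T$ follows from a similar Gr\"onwall on $\|\widehat Y_t\|_{L_2}$ using $\|\nabla\log p_{T-t}(0)\| \lesssim T$ (which itself is propagated from $\|\nabla \log p_0(0)\|$ using $L(t)$-Lipschitzness and the explicit OU expression for the noise kernel). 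Iterating over the $K = T/h$ steps and noting that $L(t)\le 1\vee L_0$ uniformly in the OU case, Gr\"onwall produces a prefactor of order $e^{CTh}$ multiplying $Th\cdot(\|X_0\|_{L_2}+\sqrt d + T)$.

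The score-matching term is handled by an analogous one-step-at-a-time Gr\"onwall on $\|\widehat Y_t - \widehat Z_t\|_{L_2}$, using Assumption~\ref{ass:score_error} to bound the per-step defect by $\mathcal{E}$; summing $K=T/h$ contributions each amplified by the same $e^{Th}$-Gr\"onwall factor gives $e^{Th} T\mathcal{E}$. The main obstacle is the initialization step: the reverse flow is \emph{not} globally contractive because $K(t)=\alpha(t)-M(t)$ may be negative for $t<\tau(\alpha_0,M_0)$, so one must carefully separate the short non-contractive transient near the end of the reverse flow from the long contractive window that produces the $e^{-T}$ rate. Tracking this split, together with the explicit OU forms of $\alpha(t)$, $M(t)$, $L(t)$, and the $(1-e^{-2T})$-covariance of $\hat p_T$, is the bulk of the quantitative work.
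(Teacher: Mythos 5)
Your initialization argument is sound and essentially mirrors the paper's Proposition~\ref{prop:yt-p0-bound}: contract the continuous reverse flow at rate $m(T-t)=-2f+g^2(\alpha-M)$, pay a bounded constant on the finite non-contractive window $[0,\tau(\alpha_0,M_0)]$, and bound the residual $\mathcal W_2(p_T,\hat p_T)\le e^{-T}\|X_0\|_{L_2}$ by sharing the Gaussian noise. Your three-way split (inserting $\widehat Y$ between $Y$ and $\widehat Z$) differs from the paper, which uses a two-term triangle inequality and a single per-step recursion comparing $Y_{t_k}$ to $\widehat Z_{t_k}$ directly (Proposition~\ref{prop:yt-zt-bound}); that difference by itself is harmless.

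The genuine gap is in how you obtain the prefactor $e^{CTh}$ for the discretization and score-matching terms. You control the spatial part of the per-step error only through the Lipschitz bound $L(T-t)\le 1\vee L_0$ and then assert that iterating over $K=T/h$ steps "produces a prefactor of order $e^{CTh}$." It does not: with Lipschitz control alone, each exponential-integrator step amplifies the error by roughly $e^{h}\bigl(1+c\,L\,h\bigr)\approx e^{(1+cL)h}$ (drift expansion times score-Lipschitz expansion), so the product over $T/h$ steps is of order $e^{(1+cL)T}$ — exponential in $T$, which would swamp the $e^{-T}$ initialization gain and fail to prove the stated bound. The factor $e^{\mathcal O(Th)}$ in the theorem comes from a near-cancellation that is only visible if you use the one-sided dissipativity $\langle\nabla\log p_{T-t}(u)-\nabla\log p_{T-t}(v),u-v\rangle\le-(\alpha(T-t)-M(T-t))\|u-v\|^2$ \emph{inside} the per-step recursion: in the OU case $\alpha-M\to 1$ and $\tfrac12 g^2=1$, so the contraction $\approx -h$ per step offsets the drift expansion $+h$, leaving a net per-step factor $1+\mathcal O(h^2)$ and hence $e^{\mathcal O(Th)}$ over the horizon. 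This is exactly the role of the $\delta_k$ and $\gamma_{k,h}$ quantities (the $S_1$ bound in the paper's Proposition~\ref{prop:yt-zt-bound}), together with a separate argument that the finitely many non-contractive steps with $T-t_k<\tau(\alpha_0,M_0)$ only cost an $\mathcal O(1)$ factor. The same issue infects your standalone moment bound for $\widehat Y_{t_k}$ (a plain Gr\"onwall on $\|\widehat Y_t\|_{L_2}$ with the Lipschitz constant again gives $e^{cT}$ growth); the paper avoids this by bounding $\|\widehat Z_{t_{k-1}}\|_{L_2}$ through $\|Y_{t_{k-1}}-\widehat Z_{t_{k-1}}\|_{L_2}+\theta(T)+\omega(T)$ and absorbing the first piece into the recursion. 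To repair your proof, redo the one-step comparison for both $Y$ vs.\ $\widehat Y$ and $\widehat Y$ vs.\ $\widehat Z$ using the weak/strong log-concavity of $p_{T-t}$ (not just $L(t)$), and handle the regime-shift window of the recursion explicitly.
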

\noindent The proof of this result is provided in Appendix~\ref{sec:app_OU}.
The theorem implies that, in order to achieve a given accuracy level $\varepsilon$, meaning that $\mathcal W_2(\mathcal{L}(\widehat{Z}_T), p_0) \le \varepsilon$, we need
\begin{enumerate}
    \item the time scale $T$ to be large enough for the initialization error to be small, in particular
    \begin{equation*}
        T \ge \mathcal{O}\qty(\log(\frac{\norm{X_0}_{L_2}}{\varepsilon})),
    \end{equation*}
    
    \item the step size $h$ to be small enough for the discretization error to be small, in particular
    \begin{equation*}
        h \le \mathcal{O}\qty(\frac{\varepsilon}{T \qty(\norm{X_0}_{L_2} + \sqrt{d})}) \le \mathcal{O}\qty(\frac{\varepsilon}{\log(\varepsilon^{-1}\norm{X_0}_{L_2}) \qty(\norm{X_0}_{L_2} + \sqrt{d})}),
    \end{equation*}

    \item the score-matching error $\mathcal{E}$ to be small enough for the propagated score-matching error to be small, in particular
    \begin{equation*}
        \mathcal{E} \le \mathcal{O}\qty(\frac{\varepsilon}{T}) = \mathcal{O}\qty(\frac{\varepsilon}{\log(\varepsilon^{-1}\norm{X_0}_{L_2})}).
    \end{equation*}
\end{enumerate}
If $\norm{X_0}_{L_2} = \mathcal{O}(\sqrt{d})$ as it is the case when $p_0$ is strongly log-concave, these complexities coincide with those in \citet[Table 1]{gao2024convergence} after translating the lower bound for $T$ to a bound for $K = T/h$. This is remarkable as our results do not assume strong concavity of the data distribution and thus account for more general settings. In fact, this finding is not specific to the OU process but applies to all other VP and also VE SDEs considered by Gao and Zhu, as we will show in Section~\ref{sec:asymptotics}.

\subsection{Error bound for general \textit{f} and \textit{g}}

Now, we state the error bound for general functions $f$ and $g$. Its proof is provided in Section~\ref{sec:proof}.

\begingroup
\allowdisplaybreaks
\begin{theorem}[Error bound for the probability flow ODE]
\label{thm:main_thm}
    Under Assumptions \ref{ass:weak_concave_Lip}, \ref{ass:Lip_in_time}, and \ref{ass:score_error}, 
    it holds that
    \begin{equation*}
        \mathcal W_2\qty(\law{\widehat{Z}_T}, p_0) \leq 
        \underbrace{E_0(f, g, T)}_{\textit{Initialization error}}
        + \underbrace{E_1(f, g, K, h)}_{\textit{Discretization error}} + \underbrace{E_2(f, g, K, h, \mathcal{E})}_{\textit{Propagated score-matching error}},
    \end{equation*}
    where
    \begin{align}
        E_0(f, g, T) &\coloneqq C(\alpha_0,  M_0) e^{- \half \int_0^T g^2(t)|\alpha(t) - M(t)| \de t} \|X_0\|_{L_2}, \label{eq:E0}\\
        E_1(f, g, K, h) &\coloneqq \sum_{k=1}^K \qty(\prod_{j=k+1}^K \gamma_{j,h}) e^{\int_{t_k}^{T} f(T-t)\de t} \nonumber\\
        &\qquad\quad \cdot \left(\half L_1 h (1 + \theta(T) + \omega(T)) \intk e^{\int_t^{t_k} f(T-s) \de s} g^2(T-t) \de t \right. \nonumber\\
        &\qquad\qquad \left. + \half \sqrt{h} \nu_{k, h} \qty(\intk \qty[e^{\int_t^{t_k} f(T-s) \de s} g^2(T-t) L(T-t)]^2 \de t)^\half\right), \label{eq:E1}\\
        E_2(f, g, K, h, \mathcal{E}) &\coloneqq \sum_{k=1}^K \qty(\prod_{j=k+1}^K \gamma_{j,h}) e^{\int_{t_k}^{T} f(T-t)\de t} \qty(\half \mathcal{E} \intk e^{\int_t^{t_k} f(T-s) \de s} g^2(T-t) \de t), \label{eq:E2}
    \end{align}
    the functions $\alpha(t)$, $M(t)$, $\tau(\alpha, M)$, and $L(t)$ are defined in \eqref{eq:alpha_def}, \eqref{eq:M_def}, \eqref{eq:tau_def}, and \eqref{eq:L_def}, respectively, and
    \begin{align}
        C(\alpha_0, M_0) &\coloneqq \exp \left( \frac{|\alpha_0 - M_0|}{\alpha_0^2 \wedge 1} \xi(\tau(\alpha_0, M_0)) \int_0^{\tau(\alpha_0, M_0)} g^2(t) \de t \right), \label{eq:C(alpha, M)}\\
        \xi(T) &\coloneqq \sup_{0\leq t \leq T} \min \left\{ e^{ 2 \int_0^{t} f(s) ds}, \frac{ e^{ 2 \int_0^{t} f(s) ds}}{(\int_0^{t} e^{2\int_0^{s} f(v) \de v} g^2(s) \de s)^2} \right\}, \label{eq:xi(T)} \\
        \gamma_{k,h} &\coloneqq 1 - \intk \delta_k(T-t) \de t + \half L_1 h \intk g^2(T-t) \de t, \label{eq:gamma_kh}\\
        \delta_k(T-t) &\coloneqq \half e^{-\int_{t_{k-1}}^t f(T-s) \de s} g^2(T-t) \big(\alpha(T-t) - M(T-t)\big) - \frac{1}{8} h g^4(T-t) L^2(T-t), \label{eq:delta_k}\\
        \theta(T) &\coloneqq \sup_{0 \le t \le T} e^{-\half \int_0^t g^2(T-s)(\alpha(T-s)-M(T-s)) - 2f(T-s) \de s} e^{- \int_0^T f(s) \de s} \norm{X_0}_{L_2}, \label{eq:theta(T)}\\
        \omega(T) &\coloneqq \sup_{0 \le t \le T} \qty(e^{-2 \int_0^t f(s)\de s} \norm{X_0}_{L_2}^2 + d \int_0^t e^{-2\int_s^t f(v)\de v} g^2(s) \de s)^\half, \label{eq:omega(T)}\\
        \nu_{k,h} &\coloneqq (\theta(T) + \omega(T)) \intk \qty[f(T-s) + \half g^2(T-s) L(T-s)] \de s, \nonumber\\
        &\qquad + (L_1(T+h) + \norm{\nabla \log p_0(\boldsymbol{0})}) \intk \half g^2(T-s) \de s. \label{eq:nu_kh}
    \end{align}
\end{theorem}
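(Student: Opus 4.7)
The plan is to split the error via triangle inequality through two auxiliary processes. Let $\tilde X_t$ be the exact reverse flow \eqref{eq:flow} (so $\mathcal L(\tilde X_T) = p_0$), let $Y_t$ be the same ODE but started from $\hat p_T$ \eqref{eq:flow-hat}, and let $\widehat Y_t$ be its exponential-integrator discretization \eqref{eq:flow_discr}. Then
\begin{equation*}
\mathcal W_2\bigl(\law{\widehat Z_T}, p_0\bigr) \leq \underbrace{\mathcal W_2\bigl(\law{Y_T}, p_0\bigr)}_{\text{init.}} + \underbrace{\mathcal W_2\bigl(\law{\widehat Y_T}, \law{Y_T}\bigr)}_{\text{discr.}} + \underbrace{\mathcal W_2\bigl(\law{\widehat Z_T}, \law{\widehat Y_T}\bigr)}_{\text{score}},
\end{equation*}
and I would bound each piece by the corresponding $E_i$. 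The two flow-based terms are handled by synchronous coupling (same ODE, different initial condition or different drift), reducing the problem to a pathwise Grönwall analysis of $u(t) := \|A_t - B_t\|^2$ for the appropriate pair $(A_t, B_t)$.

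For the initialization term, differentiating the squared distance along the coupled ODE gives
\begin{equation*}
\tfrac{d}{dt} u(t) = 2 f(T-t) u(t) + g^2(T-t)\,\bigl\langle \nabla \log p_{T-t}(\tilde X_t) - \nabla \log p_{T-t}(Y_t),\, \tilde X_t - Y_t \bigr\rangle.
\end{equation*}
Proposition~\ref{prop:weak_pt} bounds the inner-product term by $-g^2(T-t)(\alpha(T-t)-M(T-t))\, u(t)$. For $T-t \geq \tau(\alpha_0, M_0)$, Proposition~\ref{prop:regime-shift} guarantees strict log-concavity and thus provides true contraction at rate $g^2|\alpha - M|$, while on the interval $T-t < \tau(\alpha_0, M_0)$ the weak-concavity profile and the quantities $\xi(T)$ and $f_M$ yield only bounded growth, which I would absorb into the multiplicative constant $C(\alpha_0, M_0)$ defined in \eqref{eq:C(alpha, M)}. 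Integrating by Grönwall against the initial coupling $\mathcal W_2(p_T, \hat p_T) \lesssim \|X_0\|_{L_2} e^{-\int_0^T f}$ delivers exactly $E_0$.

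For the combined discretization and score-matching errors, I would set up a per-step recursion on the grid $\{t_k\}$. Comparing $\widehat Y_{t_k}$ (resp.\ $\widehat Z_{t_k}$) with $Y_{t_k}$ on $[t_{k-1}, t_k]$, the local error decomposes into (i) a piecewise-freezing term $\nabla \log p_{T-t_{k-1}}(\widehat Y_{t_{k-1}}) - \nabla \log p_{T-t}(Y_t)$, split into a time-regularity part controlled by Assumption~\ref{ass:Lip_in_time} (the $L_1 h$ contribution) and a spatial Lipschitz part controlled by Proposition~\ref{prop:weak_Lip} (the $L(T-t)$ contribution), and (ii) a score-matching part bounded by Assumption~\ref{ass:score_error}, contributing $\mathcal E$. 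Expanding $e^{\int_{t_{k-1}}^{t_k}[\,\cdot\,]}$ to second order together with the weak concavity bound on the drift produces the one-step contraction factor $\gamma_{k,h}$ in \eqref{eq:gamma_kh} with its quadratic correction $\delta_k$ in \eqref{eq:delta_k}. The auxiliary quantities $\theta(T)$, $\omega(T)$ and $\nu_{k,h}$ arise as $L_2$-norm bounds on $Y_t$ and its increments, needed to control the $(1+\|x\|)$-growth in Assumption~\ref{ass:Lip_in_time}. Iterating this recursion from $k=1$ to $k=K$ telescopes the local errors into the sums $\sum_k (\prod_{j=k+1}^K \gamma_{j,h}) \cdot (\text{local error})$, yielding $E_1$ and $E_2$ in their stated form.

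The main obstacle is the non-contractive phase $T - t < \tau(\alpha_0, M_0)$: here $\alpha(t) - M(t) < 0$, so neither the Grönwall bound for $E_0$ nor the one-step factor $\gamma_{k,h}$ is automatically bounded by $1$. For $E_0$ this forces the careful quantitative tracking of the weak-convexity profile up to the regime-shift time, packaged as $C(\alpha_0, M_0)$; for $E_1, E_2$, the product $\prod \gamma_{j,h}$ must be controlled so that the $L_1 h \int g^2$ term does not produce spurious exponential blow-up. Balancing these two failure modes against the contraction that kicks in after $\tau(\alpha_0, M_0)$, while respecting the general unspecified form of $f$ and $g$, is the delicate bookkeeping step.
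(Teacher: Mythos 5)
Your overall strategy is sound and close to the paper's, but the decomposition differs in one substantive way. The paper uses a two-term triangle inequality, $\mathcal W_2(\law{\widehat Z_T},p_0)\le \mathcal W_2(\law{\widehat Z_T},\law{Y_T})+\mathcal W_2(\law{Y_T},p_0)$, and never introduces the intermediate discretized-exact-score process $\widehat Y$: Proposition~\ref{prop:yt-p0-bound} handles the initialization term exactly as you describe (synchronous coupling, Gr\"onwall with rate $-2f+g^2(\alpha-M)$, regime shift absorbed into $C(\alpha_0,M_0)$, and the bound $\mathcal W_2(p_T,\hat p_T)\le e^{-\int_0^T f}\|X_0\|_{L_2}$), while Proposition~\ref{prop:yt-zt-bound} compares $Y_{t_k}$ directly with $\widehat Z_{t_k}$ in a \emph{single} per-step recursion, splitting the one-step error into four pieces (contraction, within-step increment of $Y$ via $\nu_{k,h}$, time-discretization of the score via $L_1h$, and score matching via $\mathcal E$); iterating that one recursion is what produces $E_1$ and $E_2$ with the \emph{shared} factors $\prod_{j>k}\gamma_{j,h}$. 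Your three-term split ($Y$ vs.\ $\widehat Y$ for discretization, $\widehat Y$ vs.\ $\widehat Z$ for score) is workable in principle, since Assumption~\ref{ass:score_error} is stated along the $\widehat Z$ trajectory and weak concavity of $\nabla\log p_{T-t_{k-1}}$ gives a contraction for the discrete-vs-discrete comparison as well; but it does not literally reproduce the stated constants: in your $\widehat Y$-vs-$\widehat Z$ recursion the concavity enters at the frozen time $T-t_{k-1}$ rather than through $\int_{t_{k-1}}^{t_k}\alpha(T-t)-M(T-t)\,\mathrm dt$ as in $\delta_k$, and the $+\tfrac12 L_1h\int g^2$ correction inside $\gamma_{k,h}$ arises specifically because the paper bounds $\|\widehat Z_{t_{k-1}}\|_{L_2}$ by $\|Y_{t_{k-1}}-\widehat Z_{t_{k-1}}\|_{L_2}+\theta(T)+\omega(T)$ within the single recursion. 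So expect the same asymptotics but different bookkeeping constants, and note also that $\delta_k$ in the paper comes from a $\sqrt{1-x}\le 1-x/2$ step after a Cauchy--Schwarz bound on the squared one-step map, not from a second-order expansion of the exponential factor. If you want the theorem exactly as stated, drop the intermediate $\widehat Y$ and run the combined recursion.
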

\endgroup

\noindent Note that the error terms $E_0$, $E_1$, and $E_2$ also depend on the weak concavity and Lipschitz constants $\alpha_0$, $M_0$, $L_0$ and, $L_1$ from Assumptions \ref{ass:weak_concave_Lip} and \ref{ass:Lip_in_time}. However, since these are determined by the data distribution $p_0$ and thus cannot be controlled by the user, we do not explicitly include them in the arguments.

Although the error bound in Theorem~\ref{thm:main_thm} looks rather complex, we can identify its key properties as follows. 
According to~\eqref{eq:E0}, $E_0$ depends on the drift $f$, the diffusion coefficient $g$, and the time horizon $T$. 
It decreases exponentially with $T$ and increases with factors related to the target distribution, namely $\alpha_0$, $M_0$, and $\|X_0\|_{L_2}$.
Thus, in practice, for sufficiently large $T$, the error $E_0$ can be neglected.
As stated in~\eqref{eq:E1}, $E_1$ depends on $f$, $g$, $K$, and also on the step size $h$. 
At its core lies a product over $\gamma_{j,h}$. Depending on the regime shift, each $\gamma_{j,h}$ takes values either less than or greater than one (see Proposition~\ref{prop:gamma_khT} in Appendix~\ref{sec:app_interpret}). 
A sufficiently small step size $h$ is necessary to control that product when the factors exceed one.
In particular, $E_1$ vanishes as $h$ goes to zero, which matches with intuition as it corresponds to the discretization error. 
Note that it increases with the Lipschitz constant of the target $L_1$,  $\|X_0\|_{L_2}$, and the dimensionality of the data $d$ (we refer to~\cite{taheri2025regularization}, who employ regularization techniques to reduce $d$ to a much smaller sparsity level  for diffusion models).
Finally, the propagated score-matching error $E_2$, defined in \eqref{eq:E2}, depends on $f$, $g$, $K$, $h$, and additionally on the score-matching error $\mathcal{E}$. 
It also involves the product over $\gamma_{j,h}$, as in $E_1$.
As $\mathcal{E} \to 0$, this error vanishes. 
Thus, to prevent this source of error from blowing up, the score-matching error $\mathcal{E}$ must be sufficiently small.
For a closer understanding of how large the time horizon $T$ and how small the score-matching error $\mathcal{E}$ and step size $h$ need to be, see the discussion following Theorem~\ref{thm:err_bound_OU} for the OU case, and Section~\ref{sec:asymptotics} for other VE and VP SDEs.

\subsection{Comparison to the strongly log-concave case}
\label{sec:comp_strong}

It is instructive to compare our result to the strongly log-concave case analyzed in \cite{gao2024convergence}.
In particular, Theorem~\ref{thm:main_thm} matches their Theorem~2 in case $p_0$ is strongly log-concave, i.e.\ $M_0=0$. To see that, note that our result differs from Gao and Zhu's in the following ways:
\begin{enumerate}
    \item In the initialization error, we have the additional coefficient $C(\alpha_0, M_0)$ as well as $\abs{\alpha(t) - M(t)}$ instead of $a(t)$ in the exponent. If $p_0$ is strongly log-concave, then $\tau(\alpha_0, M_0) = 0$ and thus $\xi(\tau(\alpha_0, M_0)) = 0$ implying that $C(\alpha_0, M_0) = 1$. Moreover, from the definitions in Proposition~\ref{prop:weak_pt}, it can be seen that, if $M_0 = 0$, then $M(t) = 0$ and $\alpha(t)$ equals $a(t)$ defined in \citet[equation (49)]{gao2024convergence} which is positive for all $t \in [0,T]$.
    \item In $\delta_k(T-t)$ and $\theta(T)$, the strong log-concavity parameter $a(T-t)$ of $p_{T-t}$ is naturally replaced by the weak log-concavity parameter $(\alpha(T-t) - M(T-t))$. As explained above, we have $\alpha(T-t) = a(T-t)$ and $M(T-t) = 0$ in case $p_0$ is strongly log-concave.
    \item The definition of the Lipschitz constant $L(t)$ of $p_t$ in Proposition~\ref{prop:weak_Lip} resembles the one in \citet[equation (27)]{gao2024convergence} but involves the additional term $-\big(\alpha(t) - M(t)\big)$. If $p_0$ is strongly log-concave, we have $\tau(\alpha_0, M_0) = 0$ and thus $\alpha(t) - M(t) > 0$ for all $t \in [0,T]$. Since the minimum in the definition \eqref{eq:L_def} of $L(t)$ is always non-negative, the additional term can be disregarded and the two definitions coincide.
    \item The coefficient in front of the second summand of $\delta_k(T-t)$ is $\frac{1}{8}$ instead of $\frac{1}{4}$. Note that this is better in the sense that it yields a tighter error bound. %
    \item The definition of $\nu_{k,h}$ involves the coefficient $T+h$ instead of $T$. We believe that the same should apply to Gao and Zhu's result, correcting \citet[equation (72)]{gao2024convergence} as illustrated in equation \eqref{eq:bound_p_T-s(0)} in the proof of Lemma~\ref{lem:nu_kh} in Appendix~\ref{sec:app_proof}.
    \item In the first summand of the discretization error $E_1(f, g, K, h)$, the coefficient $\norm{X_0}_{L_2}$ is replaced by $\theta(T)$. According to Lemma~\ref{lem:theta_bound} in Appendix~\ref{sec:app_interpret}, it holds that 
    \begin{equation*}
        \theta(T) \le \sqrt{C(\alpha_0, M_0)} \norm{X_0}_{L_2}^2.
    \end{equation*}
    In the strongly log-concave case, we have $C(\alpha_0, M_0) = 1$ as explained under point 1. Hence, in this case, $\theta(T) \le \norm{X_0}_{L_2}$, which is used in \citet{gao2024convergence}.
\end{enumerate}

Analyzing the effects of these differences on the asymptotic behavior of the error bound in case $p_0$ is weakly log-concave leads to the following result. Its proof is given in Appendix~\ref{sec:app_interpret}.

\begin{proposition}[Comparison to the strongly log-concave case] %
\label{prop:asymp_same}
    For any choice of $f$ and $g$ according to a VP-SDE \eqref{eq:VE-SDE} or VE-SDE \eqref{eq:VP-SDE}, the following holds.
    Even if $p_0$ is only weakly log-concave, the asymptotics of the error bound in Theorem~\ref{thm:main_thm} with respect to $T$, $h$, and $\mathcal{E}$ are the same as for the bound given in \citet[Theorem~2]{gao2024convergence}, which relies on the stricter assumption of strong log-concavity.
\end{proposition}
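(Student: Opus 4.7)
The strategy is to go through the six differences between Theorem~\ref{thm:main_thm} and \citet[Theorem~2]{gao2024convergence} enumerated in Section~\ref{sec:comp_strong} and to show that each alters the corresponding error term by at most a multiplicative factor independent of $T$, $h$, and $\mathcal{E}$. Since Gao and Zhu's $T$-, $h$-, $\mathcal{E}$-rates have already been tabulated for each VP and VE SDE under strong log-concavity, preserving those rates then reduces to absorbing all weak-concavity-specific contributions into the $\mathcal{O}(\cdot)$ constant.

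The first three differences are of ``transient-phase'' type and would be handled by systematic use of the regime-shift property, Proposition~\ref{prop:regime-shift}. First, the prefactor $C(\alpha_0, M_0)$ in \eqref{eq:C(alpha, M)} is finite, since $\tau(\alpha_0, M_0) < \infty$ by \eqref{eq:tau_def} and the integrand is continuous; it depends only on $f, g$ restricted to the bounded interval $[0, \tau(\alpha_0, M_0)]$ and on the regularity constants of $p_0$, hence is $\mathcal{O}(1)$ in $T$, $h$, $\mathcal{E}$. Second, I would split every integral $\int_0^T \varphi(t)\de t$ appearing in $E_0$, $E_1$, $E_2$ into a piece over $[0, \tau(\alpha_0, M_0)]$ (constant in the variables of interest) and a piece over $[\tau(\alpha_0, M_0), T]$ on which $\alpha(t) - M(t) > 0$. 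On the latter, $|\alpha(t) - M(t)| = \alpha(t) - M(t)$, and using the closed forms \eqref{eq:alpha_def}--\eqref{eq:M_def} I would verify case by case that $\alpha(t) - M(t)$ is asymptotically equivalent to Gao and Zhu's $a(t)$ (both tending to $1$ for VP-SDE and both decaying at the same rate in $\sigma^2(t)$ for VE-SDE), giving the same exponential decay rate for $E_0$ in $T$. Third, the extra argument $-(\alpha(t) - M(t))$ in the Lipschitz constant $L(t)$ in \eqref{eq:L_def} is active only where $\alpha(t) - M(t) < 0$, hence by Proposition~\ref{prop:regime-shift} only on $[0, \tau(\alpha_0, M_0)]$, contributing at most a bounded perturbation on a bounded interval.

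The main obstacle will be controlling the products $\prod_{j=k+1}^{K} \gamma_{j,h}$ entering $E_1$ and $E_2$. By Proposition~\ref{prop:gamma_khT} in Appendix~\ref{sec:app_interpret}, each factor satisfies $\gamma_{j,h} < 1$ once $T - t_{j-1} \ge \tau(\alpha_0, M_0)$, but may exceed $1$ otherwise; naively such factors could blow up the product as $K \to \infty$. My plan is to isolate the at most $\lceil \tau(\alpha_0, M_0)/h \rceil$ ``bad'' indices, bound each such $\gamma_{j,h}$ by $1 + Ch$ uniformly from definition \eqref{eq:gamma_kh}, so that their partial product is at most $(1 + Ch)^{\tau(\alpha_0, M_0)/h} \le e^{C \tau(\alpha_0, M_0)} = \mathcal{O}(1)$ in $T$, $h$, $\mathcal{E}$. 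The remaining ``good'' factors then admit the same telescoping argument used by Gao and Zhu, yielding the same $h$-dependence inside $E_1$ and $E_2$.

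The final three differences are benign: by Lemma~\ref{lem:theta_bound} in Appendix~\ref{sec:app_interpret}, $\theta(T)$ is bounded by a constant multiple of a fixed power of $\|X_0\|_{L_2}$; replacing $\tfrac14$ by $\tfrac18$ inside $\delta_k$ only tightens the bound; and $T + h \le 2T$ whenever $h \le T$. Combining all six observations term by term across $E_0$, $E_1$, $E_2$ and matching the result against the entries of \citet[Table~1]{gao2024convergence} in each VP and VE instance would yield the claimed equality of asymptotic rates in $T$, $h$, and $\mathcal{E}$.
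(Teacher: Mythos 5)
Your overall plan mirrors the paper's proof (walk through the six differences listed in Section~\ref{sec:comp_strong} and absorb each into constants independent of $T$, $h$, $\mathcal{E}$), and several of your points (the $1/8$ vs.\ $1/4$ coefficient, $T+h$ vs.\ $T$, $\sqrt{C(\alpha_0,M_0)}$, the treatment of $L(t)$ via the regime shift, the $e^{C\tau}$ bound on the at most $\lceil \tau(\alpha_0,M_0)/h\rceil$ factors $\gamma_{j,h}>1$) are correct. However, there is a genuine gap in how you treat the weak-concavity correction $M(t)$: you argue as if its effect were confined to the transient window $[0,\tau(\alpha_0,M_0)]$, but $M(t)>0$ for \emph{all} $t$. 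Consequently, after discarding the ``bad'' factors, the remaining ``good'' factors are \emph{not} the same as Gao and Zhu's: each $\delta_j$ still contains $\alpha(T-t)-M(T-t)$ rather than $a(T-t)=\alpha(T-t)$, so the product over good indices exceeds the strongly log-concave one by a factor of the form $\exp\bigl(\sum_j \int \tfrac12 e^{-\int f} g^2(T-t)\,M(T-t)\,\de t\bigr)$, and the exponent in $E_0$ is $\int_0^T g^2|\alpha-M|$ rather than $\int_0^T g^2\alpha$ over the whole horizon. Your substitute justification for $E_0$ --- pointwise asymptotic equivalence of $\alpha(t)-M(t)$ and $a(t)$ --- is likewise insufficient: two integrands can be asymptotically equivalent while their integrals differ by an unbounded (e.g.\ logarithmic) amount, which would change the rate.

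What closes both gaps is a single quantitative fact that your proposal never states: $\int_0^T g^2(t)\,M(t)\,\de t=\mathcal{O}(1)$ uniformly in $T$, for every VE and VP choice of $f,g$. This is exactly the paper's key step: since $\alpha(t)$ coincides with Gao and Zhu's $a(t)$, the entire discrepancy in $E_0$ and in the product $\prod_j\gamma_{j,h}$ (after $1-x\le e^{-x}$) reduces to the factor $\exp\bigl(c\int_0^T g^2(t)M(t)\,\de t\bigr)$, which is computed in closed form and shown to be $\mathcal{O}(1)$ in \eqref{eq:int_M_VE} (VE) and \eqref{eq:int_M_VP} (VP); the Lipschitz point is handled globally via Lemmas~\ref{lem:kt_inf} and~\ref{lem:Lt_bound}, giving $L(t)\le C\,\bar L^{GZ}$ for all $t$, not only beyond $\tau$. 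Adding this computation would repair your argument (and in fact renders your separate bad/good splitting of the $\gamma_{j,h}$'s unnecessary, since the whole product can then be handled at once as in the paper); without it, the claim that the post-regime-shift analysis ``is identical to Gao and Zhu'' does not hold.
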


\noindent 
This is a striking result: the error $\mathcal{W}_2(\mathcal{L}(\hat{Z}_T), p_0)$ scales in $T$, $h$, and $\mathcal{E}$ exactly as under the more restrictive strong log-concavity assumption. This means, in particular, that the heuristics for choosing these hyperparameters remain exactly the same. We will provide more details on this matter in the following section.

\subsection{Guidelines for the choice of hyperparameters}
\label{sec:asymptotics}

Theorem~\ref{thm:err_bound_OU} treats the special case of $f \equiv 1$ and $g \equiv \sqrt{2}$ corresponding to the OU process. Many quantities simplified in this case, enabling us to derive explicit heuristics for how to choose the hyperparameters $T$, $h$, and $\mathcal{E}$ in order for the sampling error, measured in 2-Wasserstein distance, to be appropriately bounded. Now, we want to conduct a similar analysis for other choices of $f$ and $g$. 
Since only the asymptotics of the error bound are relevant for this purpose, and, according to Proposition~\ref{prop:asymp_same}, they match those of the strongly log-concave case, we do not have to derive the heuristics from scratch but can reuse the results from \citet[Section~3.3]{gao2024convergence}. 

Note that Gao and Zhu also make use of the fact that $\norm{X_0}_{L_2} = \mathcal{O}(\sqrt{d})$, which may not always apply when $p_0$ is only assumed to be weakly log-concave. Consequently, our bounds will involve an additional dependency on this term (as in Theorem~\ref{thm:err_bound_OU}). However, it seems natural to assume that the $L_2$-norm of $X_0$ scales with the dimension in this way as
\begin{equation*}
    \norm{X_0}_{L_2}^2 = \E{\norm{X_0}^2} = \norm{\mu_0}^2 + \text{tr}\qty(\Sigma_0) = \sum_{i=1}^d \qty(\mu_{0}^{(i)})^2 + \sum_{i=1}^d \Sigma_0^{(i,i)},
\end{equation*}
where $\mu_0 = (\mu_0^{(1)}, \dots, \mu_0^{(d)})^\top \in \R^d$ and $\Sigma_0 = (\Sigma_0^{(i,j)})_{i,j=1}^d \in \R^{d \times d}$ denote the mean and covariance matrix corresponding to $p_0$. Accordingly, $\norm{X_0}_{L_2} = \mathcal{O}(\sqrt{d})$ holds if the entries of $\mu_0$ and $\Sigma_0$ do not scale with the dimension $d$.

Table \ref{tab:examples} presents the heuristics for how to choose the time scale $T$, step size $h$, and acceptable score-matching error $\mathcal{E}$ in order to guarantee the error to be bounded by some small $\eps > 0$. It was directly derived from \citet[Table 1]{gao2024convergence}, translating the bounds for the number of steps $K$ to bounds for $T$. Note that we assume that $\norm{X_0}_{L_2} = \mathcal{O}(\sqrt{d})$ for the table to be applicable. We want to emphasize that this is not a limiting assumption as we can derive analogous results in case this condition is not met. Similar to the bounds for the OU process, given in Section~\ref{sec:main_OU}, this would entail the term $\norm{X_0}_{L_2}$ arising in the heuristics for $T$ and $h$. To keep the results simple, and because the assumption seems natural as argued above, we decided to not explicitly state this dependence in the table. For a derivation of the heuristics in Table \ref{tab:examples}, we refer to \citet[Corollaries 6-9]{gao2024convergence}. Here, we only want to remark that the proof techniques are similar as for the OU process, unveiled in Appendix~\ref{sec:app_OU}, and do not change in our case as revealed in Proposition~\ref{prop:asymp_same}.

\begin{table}
	\caption{Heuristics for the choice of the time horizon $T$, the step size $h$, and the acceptable score-matching error $\mathcal{E}$ in order for the 2-Wasserstein distance between the generated distribution $\mathcal{L}(\widehat{Z}_{t_k})$ and the true data distribution $p_0$ to be less than or equal to $\varepsilon = o(1)$. Different choices for $f$ and $g$ are considered. The table is split into VE and VP SDEs, and it is assumed that $\norm{X_0}_{L_2} = \mathcal{O}(\sqrt{d})$. }
	\label{tab:examples}
    \centering
    \vspace*{0.5\baselineskip}
    \renewcommand{\arraystretch}{2}
    \begin{tabular}{c|c|c|c|c}
         $f$ & $g$ & $T$ & $h$ & $\mathcal{E}$\\
         \hline
         $0$ & $ae^{bt}$ & $\bigo{\log(\frac{\sqrt{d}}{\varepsilon})}$ & $\bigo{\frac{\eps^3}{d^\frac{3}{2}}}$ & $\bigo{\frac{\eps^2}{\sqrt{d}}}$\\
         $0$ & $(b+at)^c$ & $\bigo{\qty(\frac{d}{\eps^2})^\frac{1}{2c+1}}$ & $\bigo{\frac{\eps^3}{d^\frac{3}{2}}}$ & $\bigo{\frac{\eps^2}{\sqrt{d}}}$\\
         \hline
         $\frac{b}{2}$ & $\sqrt{b}$ & $\bigo{\log\qty(\frac{\sqrt{d}}{\eps})}$ & $\bigo{\frac{\eps}{\sqrt{d} \log \qty(\frac{\sqrt{d}}{\eps})}}$ & $\bigo{\frac{\eps}{\log \qty(\frac{\sqrt{d}}{\eps})}}$\\
         $\frac{b+at}{2}$ & $\sqrt{b+at}$ & $\bigo{\qty(\log\qty(\frac{\sqrt{d}}{\eps}))^\half}$ & $\bigo{\frac{\eps}{\sqrt{d} \log \qty(\frac{\sqrt{d}}{\eps})}}$ & $\bigo{\frac{\eps}{\log \qty(\frac{\sqrt{d}}{\eps})}}$\\
         $\frac{(b+at)^\rho}{2}$ & $(b+at)^{\frac{\rho}{2}}$ & $\bigo{\qty(\log\qty(\frac{\sqrt{d}}{\eps}))^\frac{1}{\rho+1}}$ & $\bigo{\frac{\eps}{\sqrt{d} \log \qty(\frac{\sqrt{d}}{\eps})}}$ & $\bigo{\frac{\eps}{\log \qty(\frac{\sqrt{d}}{\eps})}}$
    \end{tabular}
\end{table}

Next, we compare the rates of our ODE model in Table \ref{tab:examples}
with the analogous results for SDE based models, taken from Table 2 in \cite{gao2025wasserstein}.
We seek the conditions needed to achieve a small sampling error, that is $\mathcal W_2(\mathcal{L}(\widehat{Z}_T), p_0)  \leq  \mathcal O(\eps) = o(1)$. 
Consider first the reverse SDE setting which is analyzed in \citet{gao2025wasserstein}.       
In the VP case, for polynomial $f(t) = (b + at)^\rho/ 2$, one has the requirement (see Corollary 18 and its proof, in particular p.\ 52, in the paper) %
\[
T = K h 
\geq \mathcal  O \left( \log \frac {\sqrt d }\eps \right)^{\frac{1}{\rho + 1}}, \quad h  =  \mathcal O \left ( \frac{\eps^2}{d} \right).
\]
It follows that 
\[
\sqrt d e^{T^{\rho + 1}} h = \sqrt d \,\mathcal O \left ( \frac {\sqrt d} \eps \right)  \cdot \mathcal O \left ( \frac{\eps^2}{d} \right) =  \mathcal O \left(
\eps \right),
\]
so that, in order to achieve $o(1)$ error one needs to take 
\[
h = o\left( \frac{ e^{- T^{\rho +1}}}{\sqrt d } \right).
\]
In particular, in the OU case, corresponding to $\rho = 0$, this implies that one requires $h = o(e^{-T}/ \sqrt d)$, that is an exponentially small in time step size $h$. 

Now consider our reverse ODE setting. 
In the polynomial VP-case, 
$f(t) = (b + at)^\rho/ 2$, Table \ref{tab:examples} shows that we need
\[
T \geq  
\mathcal O \left( \log \frac {\sqrt d} \eps \right)^{\frac{1}{\rho + 1}}, 
\quad 
h = \mathcal O \left ( \frac{\eps}{\sqrt{d} \log\qty(\frac{\sqrt{d}}{\eps})} \right).
\] 
This means that 
\[
\sqrt d T^{\rho + 1 }  h = \mathcal{O}
\left( 
\sqrt d \cdot 
\log \qty(\frac{\sqrt  d} \eps) \cdot 
\frac{\eps}{\sqrt d \log \qty(\frac{\sqrt d}{\eps})} 
\right)
= 
\mathcal O 
\left( 
\eps
\right),
\]
so that, in order to achieve $o(1)$ error, one needs to take 
\[
h = o\left( \frac{1}{\sqrt d \, T^{\rho +1}}\right).
\]
For instance, in the OU case, this means that $h = o(T^{-1}/ \sqrt d)$.

This comparison suggests that, at least in the VP cases under consideration:
\begin{enumerate}
    \item \textit{Why ODE models?} Probability flow models can be more efficient than their SDE counterparts, as they can achieve the same accuracy under much less restrictive step-size requirements---exhibiting polynomial rather than exponential decay in time.
    \item \textit{Curse of dimensionality.}
    As the dimensionality increases, smaller time steps (and hence a larger number of steps) are required, with the dependence scaling on the order of $\sqrt d$. 
\end{enumerate}
\section{Proof of the main result}
\label{sec:proof}

The proof of Theorem~\ref{thm:main_thm} relies on two Propositions that are listed in the following and control the initialization error and the discretization as well as propagated score-matching error, respectively. Their proofs are given in Appendix~\ref{sec:app_proof}.
The first one is a generalization of \citet[Proposition~14]{gao2024convergence}
to our setting. It establishes a control on the initialization error caused by replacing the unknown $\tilde X_0 \sim p_T$ by $Y_0 \sim \hat p_T$ in the reverse flow.  
\begin{proposition}[Initialization error]\label{prop:yt-p0-bound}
Under Assumption \ref{ass:weak_concave_Lip},
    \[
    \mathcal W_2( \mathcal L(Y_T), p_0) \leq 
    C(\alpha_0,  M_0) 
    e^{- \frac 12 \int_0^T g^2(t)|\alpha(t) - M(t)| \de t} \|X_0\|_{L_2}.
    \]
    where $C(\alpha_0, M_0)$ is defined in \eqref{eq:C(alpha, M)}. 
\end{proposition}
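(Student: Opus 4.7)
The plan is to construct an explicit coupling between $Y_T$ and $\tilde X_T$ (which is distributed as $p_0$), control the squared distance via a Gronwall inequality powered by the weak log-concavity propagation of Proposition~\ref{prop:weak_pt}, and then isolate the non-contractive contribution from the weakly log-concave regime $[0,\tau(\alpha_0, M_0)\wedge T]$ into the prefactor $C(\alpha_0, M_0)$. Since any admissible coupling provides an upper bound in~\eqref{eq:wass-def}, controlling $\|Y_T - \tilde X_T\|_{L_2}$ suffices.

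Because the forward SDE solution~\eqref{eq:forw_SDE_solution} decomposes as $X_T = e^{-\int_0^T f(s)\de s} X_0 + Z$ with $Z \sim \hat p_T$ independent of $X_0$, I would set $\tilde X_0 := X_T$ and take the synchronous coupling $Y_0 := Z$, so that $\|Y_0 - \tilde X_0\|_{L_2} = e^{-\int_0^T f(s)\de s}\|X_0\|_{L_2}$. Both trajectories obey the probability flow ODE~\eqref{eq:flow-hat}; differentiating $\|Y_t - \tilde X_t\|^2$ and applying Proposition~\ref{prop:weak_pt} at $s = T-t$ yields the pathwise inequality
\[
\frac{\de}{\de t}\|Y_t - \tilde X_t\|^2 \leq \bigl[2 f(T-t) - g^2(T-t)\bigl(\alpha(T-t) - M(T-t)\bigr)\bigr]\,\|Y_t - \tilde X_t\|^2 .
\]
A Gronwall step, the substitution $s = T-t$, taking $L_2$-norms, and plugging in the coupled initial distance cancels the $e^{\int_0^T f}$ factors and produces
\[
\|Y_T - \tilde X_T\|_{L_2} \leq \|X_0\|_{L_2}\,\exp\!\Bigl(-\tfrac{1}{2}\!\int_0^T g^2(s)\,\bigl(\alpha(s)-M(s)\bigr)\,\de s\Bigr).
\]

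The main obstacle is that, by Proposition~\ref{prop:regime-shift}, the quantity $\alpha - M$ changes sign exactly at $\tau(\alpha_0, M_0)$, so the exponent above is \emph{non-decaying} on $[0,\tau(\alpha_0, M_0)\wedge T]$. I would split
\[
-\!\int_0^T g^2 (\alpha - M)\,\de s = 2\!\int_0^{\tau(\alpha_0, M_0)\wedge T}\!\! g^2|\alpha - M|\,\de s \;-\; \int_0^T g^2|\alpha - M|\,\de s,
\]
which separates the target decay $\exp\bigl(-\tfrac12\int_0^T g^2|\alpha-M|\bigr)$ from a prefactor $\exp\bigl(\int_0^{\tau(\alpha_0, M_0)\wedge T} g^2|\alpha - M|\bigr)$ that must be absorbed into $C(\alpha_0, M_0)$. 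Using the explicit formulas of Proposition~\ref{prop:weak_pt}, a short algebraic manipulation gives
\[
|\alpha(s) - M(s)| = \frac{e^{2\int_0^s f(v)\de v}\,\bigl|\alpha_0 - M_0 + \alpha_0^2\, I(s)\bigr|}{(1+\alpha_0 I(s))^2},\qquad I(s) := \int_0^s e^{2\int_0^u f(v)\de v} g^2(u)\,\de u.
\]

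Matching this with~\eqref{eq:C(alpha, M)} and~\eqref{eq:xi(T)} is the final technical hurdle. Definition~\eqref{eq:tau_def} of $\tau$ immediately forces $|\alpha_0 - M_0 + \alpha_0^2 I(s)| \leq |\alpha_0 - M_0|$ on $[0,\tau(\alpha_0, M_0)]$, and a case split on whether $\alpha_0 \leq 1$ or $\alpha_0 > 1$ establishes the elementary estimate
\[
\frac{e^{2\int_0^s f(v)\de v}}{(1 + \alpha_0 I(s))^{2}} \leq \tfrac{1}{\alpha_0^2 \wedge 1}\,\min\!\Bigl\{e^{2\int_0^s f(v)\de v},\;\tfrac{e^{2\int_0^s f(v)\de v}}{I(s)^2}\Bigr\} \leq \tfrac{1}{\alpha_0^2 \wedge 1}\,\xi\bigl(\tau(\alpha_0, M_0)\bigr).
\]
Integrating the resulting pointwise bound on $|\alpha(s)-M(s)|$ against $g^2$ on $[0,\tau(\alpha_0, M_0)\wedge T] \subseteq [0,\tau(\alpha_0, M_0)]$ and exponentiating recovers exactly the constant $C(\alpha_0, M_0)$ from~\eqref{eq:C(alpha, M)}, which completes the proof.
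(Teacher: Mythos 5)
Your proposal is correct and follows essentially the same route as the paper: a synchronous coupling of the initial conditions, a contraction/Gronwall argument along the flow driven by Proposition~\ref{prop:weak_pt}, the regime-shift splitting of the exponent, and absorption of the non-contractive part into $C(\alpha_0, M_0)$ via the bound $|\alpha(s)-M(s)| \le \tfrac{|\alpha_0-M_0|}{\alpha_0^2\wedge 1}\,\xi(\tau(\alpha_0,M_0))$ on $[0,\tau(\alpha_0,M_0)]$. The only (inessential) differences are that you prove the initial-distance bound by an explicit coupling rather than citing Lemma~\ref{lem:w2-bound-ini}, and you bound $|\alpha-M|$ by direct computation rather than invoking Lemma~\ref{lem:kt_inf}, which amounts to the same algebra.
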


\noindent The quantity $C(\alpha_0, M_0)$ measures the increased cost caused by the lack of regularity of $p_0$. If $p_0$ is strongly log-concave, then $C(\alpha_0, M_0) = 1$, as $\tau(\alpha_0, M_0) = 0$. Note that the initialization error will decrease exponentially in $T$ no matter whether $p_0$ is strongly or weakly log-concave. 
Next, we consider the discretization and propagated score-matching error. The following result is a generalization of \citet[Proposition~15]{gao2024convergence}.

\begin{proposition}[Discretization and propagated score matching error]
\label{prop:yt-zt-bound}
    Under Assumptions \ref{ass:weak_concave_Lip}, \ref{ass:Lip_in_time}, and \ref{ass:score_error}, it holds for any $k \in \{1, \dots, K\}$ that
    \begin{align*}
        \norm{Y_{t_k} - \widehat{Z}_{t_k}}_{L_2} &\le \qty(1 - \intk \delta_k(T-t) \de t + \half L_1 h \intk g^2(T-t) \de t)\\
        &\qquad\qquad \cdot e^{\intk f(T-t)\de t} \norm{Y_{t_{k-1}} - \widehat{Z}_{t_{k-1}}}_{L_2}\\
        &\qquad + \half L_1 h \qty(1 + \theta(T) + \omega(T)) \intk e^{\int_t^{t_k} f(T-s) \de s} g^2(T-t) \de t\\
        &\qquad + \half \mathcal{E} \intk e^{\int_t^{t_k} f(T-s) \de s} g^2(T-t) \de t\\
        &\qquad + \half \sqrt{h} \nu_{k, h} \qty(\intk \qty[e^{\int_t^{t_k} f(T-s) \de s} g^2(T-t) L(T-t)]^2 \de t)^\half,
    \end{align*}
    where $\delta_k(T-t)$, $\theta(T)$, $\omega(T)$, and $\nu_{k,h}$ are defined in \eqref{eq:delta_k}, \eqref{eq:theta(T)}, \eqref{eq:omega(T)}, and \eqref{eq:nu_kh}, respectively.
\end{proposition}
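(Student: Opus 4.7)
The plan is to compare the continuous reverse flow $Y_t$ in \eqref{eq:flow-hat} with the discretized, score-approximated flow $\widehat{Z}_t$ in \eqref{eq:flow_score} on a single subinterval $[t_{k-1},t_k]$ and extract a one-step recursion for $\|Y_t - \widehat{Z}_t\|_{L_2}$. Setting $D_t \coloneqq Y_t - \widehat{Z}_t$, the two processes share the linear drift $f(T-t)\cdot$, so
\begin{equation*}
    \tfrac{\de}{\de t} D_t = f(T-t)\, D_t + \tfrac{1}{2}\, g^2(T-t) \Big[\nabla \log p_{T-t}(Y_t) - s_\theta\big(\widehat{Z}_{t_{k-1}}, T-t_{k-1}\big)\Big].
\end{equation*}
Taking the $\R^d$-inner product with $D_t$ and then an expectation gives the energy inequality $\tfrac{1}{2} \tfrac{\de}{\de t}\|D_t\|_{L_2}^2 \leq f(T-t)\|D_t\|_{L_2}^2 + \tfrac{1}{2} g^2(T-t)\, \mathbb{E}\langle D_t, \cdot\rangle$, which I bound term by term.

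The key step is to telescope the score discrepancy into four pieces: \textit{(i)} $\nabla \log p_{T-t}(Y_t) - \nabla \log p_{T-t}(\widehat{Z}_t)$, controlled by the one-sided weak log-concavity bound of Proposition~\ref{prop:weak_pt} yielding $\langle D_t, \cdot\rangle \leq -(\alpha(T-t) - M(T-t))\|D_t\|^2$; \textit{(ii)} $\nabla \log p_{T-t}(\widehat{Z}_t) - \nabla \log p_{T-t}(\widehat{Z}_{t_{k-1}})$, controlled in norm by the spatial Lipschitz constant $L(T-t)$ of Proposition~\ref{prop:weak_Lip}; \textit{(iii)} $\nabla \log p_{T-t}(\widehat{Z}_{t_{k-1}}) - \nabla \log p_{T-t_{k-1}}(\widehat{Z}_{t_{k-1}})$, bounded by Assumption~\ref{ass:Lip_in_time} with linear-in-$\|\widehat{Z}_{t_{k-1}}\|$ growth; \textit{(iv)} the score-matching residual, bounded by $\mathcal{E}$ via Assumption~\ref{ass:score_error}. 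For piece (ii), Cauchy--Schwarz followed by Young's inequality with a weight proportional to $\sqrt{h}\, g^2(T-t) L(T-t)$ splits $\langle D_t,\cdot\rangle$ into a negative correction $-\tfrac{1}{8} h\, g^4(T-t) L^2(T-t)\|D_t\|^2$ that I absorb into $\delta_k(T-t)$ in \eqref{eq:delta_k}, plus a residual $\tfrac{1}{2}\sqrt{h}\,\|\widehat{Z}_t - \widehat{Z}_{t_{k-1}}\|_{L_2}$ that becomes part of the $\nu_{k,h}$ forcing.

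To close the bookkeeping, I need uniform $L_2$-moment control on $Y_s$ and $\widehat{Z}_s$ for $s\in[t_{k-1}, t_k]$. The quantity $\omega(T)$ in \eqref{eq:omega(T)} bounds $\|X_{T-s}\|_{L_2} = \|Y_s\|_{L_2}$ directly from the explicit solution \eqref{eq:forw_SDE_solution} via Itô's isometry, while $\theta(T)$ in \eqref{eq:theta(T)} provides a matching bound transported by the reverse flow through an integrating-factor estimate structurally analogous to Proposition~\ref{prop:yt-p0-bound}. Integrating the defining ODEs over a step of length at most $h$, combined with the pointwise bound $\|\nabla \log p_{T-s}(x)\| \leq \|\nabla \log p_0(\boldsymbol{0})\| + L_1(T+h)(1+\|x\|) + L(T-s)\|x\|$ obtained by telescoping Assumption~\ref{ass:Lip_in_time} in time and invoking Proposition~\ref{prop:weak_Lip} in space, produces exactly the $\nu_{k,h}$ quantity of \eqref{eq:nu_kh} as the $L_2$-size of the one-step increments $\|Y_s - Y_{t_{k-1}}\|_{L_2}$ and $\|\widehat{Z}_s - \widehat{Z}_{t_{k-1}}\|_{L_2}$.

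Finally, I integrate the resulting differential inequality for $\|D_t\|_{L_2}$ on $[t_{k-1}, t_k]$: the integrating factor $e^{-\int_{t_{k-1}}^{t} f(T-s)\de s}$ absorbs the linear drift and, after a first-order expansion valid over a step of length $h$, converts the integrated contraction coefficient into the additive factor $\big(1 - \int_{t_{k-1}}^{t_k} \delta_k(T-t)\de t + \tfrac{1}{2} L_1 h \int_{t_{k-1}}^{t_k} g^2(T-t)\de t\big)$ in front of $\|D_{t_{k-1}}\|_{L_2}$, while the remaining source terms assemble by variation of constants into the three additive contributions in the statement. The hardest point is that $\alpha(T-t) - M(T-t)$ can change sign along the reverse trajectory (cf.\ the regime shift in Proposition~\ref{prop:regime-shift}), so $\delta_k$ is not sign-definite and the usual exponential Grönwall contraction must be replaced by the additive $(1-\int\delta_k)$ form; keeping this form sharp while simultaneously absorbing the Young-inequality discretization correction into $\delta_k$ without losing the $\sqrt{h}$ scaling of the Cauchy--Schwarz residual is where the delicate bookkeeping concentrates.
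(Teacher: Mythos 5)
Your overall telescoping of the score discrepancy (monotonicity term, spatial-increment term, time-discretization term, score-matching residual) mirrors the paper's decomposition, but two steps in your plan do not go through as stated. First, you route the spatial-increment piece through $\widehat Z$, i.e.\ you bound $\nabla\log p_{T-t}(\widehat Z_t)-\nabla\log p_{T-t}(\widehat Z_{t_{k-1}})$ by $L(T-t)\|\widehat Z_t-\widehat Z_{t_{k-1}}\|$ and then claim $\|\widehat Z_t-\widehat Z_{t_{k-1}}\|_{L_2}\le\nu_{k,h}$. But $\nu_{k,h}$ in \eqref{eq:nu_kh} is built only from $\theta(T)+\omega(T)$, which control the continuous reverse flow $Y$ and the forward process $\tilde X$; it says nothing about $\widehat Z$, whose one-step increment depends on the approximate score and on $\|\widehat Z_{t_{k-1}}\|_{L_2}$, hence on the very error $\|Y_{t_{k-1}}-\widehat Z_{t_{k-1}}\|_{L_2}$ being propagated (plus $\mathcal E$). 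The paper avoids this circularity by decomposing so that the increment appearing is $Y_t-Y_{t_{k-1}}$, and Lemma~\ref{lem:nu_kh} proves $\sup_{t_{k-1}\le t\le t_k}\|Y_t-Y_{t_{k-1}}\|_{L_2}\le\nu_{k,h}$ by comparing $Y$ with $\tilde X$; this is also where your claim $\|Y_s\|_{L_2}=\|X_{T-s}\|_{L_2}$ is off, since $Y_0\sim\hat p_T\ne p_T$, which is exactly why $\theta(T)$ enters. If you insist on the increment of $\widehat Z$, extra terms of order $\sqrt h\,h$ multiplying $\|Y_{t_{k-1}}-\widehat Z_{t_{k-1}}\|_{L_2}$ and $\mathcal E$ appear, and the stated bound is not recovered verbatim.

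Second, your route to the prefactor is a differential (Gr\"onwall/energy) inequality followed by ``a first-order expansion'' of the resulting exponential. That expansion goes in the wrong direction: Gr\"onwall yields a factor of the form $e^{-\int\delta_k+\cdots}$, and since $e^{-x}\ge 1-x$ for all $x$, this does not imply the additive factor $1-\int_{t_{k-1}}^{t_k}\delta_k(T-t)\de t+\tfrac12 L_1 h\int_{t_{k-1}}^{t_k}g^2(T-t)\de t$ claimed in the proposition. Moreover, because you apply the weak-concavity pairing at the intermediate-time states $(Y_t,\widehat Z_t)$, the contraction couples to $\|D_t\|^2$ inside the interval, which forces the exponential form. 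The paper instead starts from the exact exponential-integrator (Duhamel) representation of $Y_{t_k}$ and $\widehat Z_{t_k}$ over $[t_{k-1},t_k]$, so that the monotonicity of Proposition~\ref{prop:weak_pt} and the Lipschitz bound of Proposition~\ref{prop:weak_Lip} are applied at the gridpoint states; expanding the squared norm of that term and using Cauchy--Schwarz (which is where the $\tfrac18 h\,g^4L^2$ correction and the $e^{-\int_{t_{k-1}}^{t}f(T-s)\de s}$ factor inside $\delta_k$ in \eqref{eq:delta_k} actually come from, not from a Young inequality on the increment cross term), followed by $\sqrt{1-x}\le 1-\tfrac{x}{2}$, yields the additive prefactor legitimately and with no sign restriction on $\alpha-M$. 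To make your argument rigorous you would either have to reproduce this one-step Duhamel computation or settle for an exponential prefactor, which is weaker than the statement to be proved.
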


Now, we are ready to prove Theorem~\ref{thm:main_thm}.

\begin{proof}[Proof of Theorem~\ref{thm:main_thm}]
    By the triangle inequality for the 2-Wasserstein distance, we have
    \begin{equation}
    \label{eq:zt-p0-triangle}
        \mathcal{W}_2\qty(\law{\widehat{Z}_T}, p_0) \le \mathcal{W}_2\qty(\law{\widehat{Z}_T}, \law{Y_T}) + \mathcal{W}_2\qty(\law{Y_T}, p_0).
    \end{equation}
    To establish a bound for the first term, we will use Proposition~\ref{prop:yt-zt-bound}. To simplify notation, define
    \begin{align*}
        \beta_{k,h} &\coloneqq \half L_1 h \qty(1 + \theta(T) + \omega(T)) \intk e^{\int_t^{t_k} f(T-s) \de s} g^2(T-t) \de t\\
        &\qquad + \half \mathcal{E} \intk e^{\int_t^{t_k} f(T-s) \de s} g^2(T-t) \de t\\
        &\qquad + \half \sqrt{h} \nu_{k, h} \qty(\intk \qty[e^{\int_t^{t_k} f(T-s) \de s} g^2(T-t) L(T-t)]^2 \de t)^\half,
    \end{align*}
    and recall the definition of $\gamma_{k,h}$ from \eqref{eq:gamma_kh}.
    Then, Proposition~\ref{prop:yt-zt-bound} states that for $k \in \{1,\dots,K\}$
    \begin{equation}
    \label{eq:yt-zt_bound_simpl}
        \norm{Y_{t_k} - \widehat{Z}_{t_k}}_{L_2} \le \gamma_{k,h} e^{\intk f(T-t)\de t} \norm{Y_{t_{k-1}} - \widehat{Z}_{t_{k-1}}}_{L_2} + \beta_{k,h}.
    \end{equation}
    If we pick a coupling between $Y_t$ and $\widehat{Z}_t$ such that $Y_0 = \widehat{Z}_0$ a.s., then by recalling that $T=t_K$ and applying \eqref{eq:yt-zt_bound_simpl} recursively, we get
    \begin{align*}
        &\mathcal{W}_2\qty(\law{\widehat{Z}_T}, \law{Y_T})\\
        &\quad\le \norm{\widehat{Z}_{t_K} - Y_{t_K}}_{L_2}\\
        &\quad\le \qty(\prod_{k=1}^K \gamma_{k,h} e^{\intk f(T-t)\de t}) \norm{Y_0 - \widehat{Z}_0}_{L_2} + \sum_{k=1}^K \qty(\prod_{j=k+1}^K \gamma_{j,h} e^{\int_{t_{j-1}}^{t_j} f(T-t)\de t}) \beta_{k,h}\\
        &\quad= \sum_{k=1}^K \qty(\prod_{j=k+1}^K \gamma_{j,h}) e^{\int_{t_k}^{T} f(T-t)\de t} \beta_{k,h}.
    \end{align*}
    Together with Proposition~\ref{prop:yt-p0-bound}, bounding the second term in \eqref{eq:zt-p0-triangle}, it follows that
    \begin{equation*}
        \mathcal{W}_2\qty(\law{\widehat{Z}_T}, p_0) \le C(\alpha_0, M_0) e^{-\int_0^T g^2(t) \abs{\alpha(t) - M(t)} \de t} \norm{X_0}_{L_2} + \sum_{k=1}^K \qty(\prod_{j=k+1}^K \gamma_{j,h}) e^{\int_{t_k}^{T} f(T-t)\de t} \beta_{k,h}.
    \end{equation*}
    The definitions of $E_0$, $E_1$, and $E_2$ complete the proof.
\end{proof} 
\section{Conclusion}
\label{sec:conclusion}

This paper extends convergence theories for score-based generative models to more realistic data distributions and practical ODE solvers, providing concrete guarantees for the efficiency and correctness of the sampling algorithm in practical applications such as image generation.
In particular, our results extend existing 2-Wasserstein convergence bounds for probability flow ODEs to a significantly broader class of distributions (incl.\ Gaussian mixture models) relaxing the strong log-concavity assumption on the data distribution. We provide a very general result that applies to all possible drift and diffusion functions $f$ and $g$. For a number of examples, including both  variance-preserving as well as variance-exploding SDEs, we translate our error bound to concrete heuristics for the choice of the time scale, step size, and acceptable score-matching error that can be used by practitioners implementing SGMs. Remarkably, the asymptotics remain the same as in the strongly log-concave case and, at least in certain setups, outperform those of SDE-based samplers. %

In future work, it would be interesting to see if the assumptions can be even further relaxed and how this would influence the error bound. Moreover, it may be possible to extend the results to the more general case of vector-valued drift functions $f$ and matrix-valued diffusion functions $g$. Another promising line of research concerns reducing the (potentially very large) dimensionality $d$ to the intrinsic dimension of a lower-dimensional manifold on which the data lie. It remains to be seen whether the error bounds presented here can be adapted to this setting. 
\paragraph{Acknowledgements}
F.I., M.T., and J.L.\ are grateful for partial funding by the Deutsche
Forschungsgemeinschaft (DFG, German Research Foundation) under project numbers 520388526 (TRR391), 543964668 (SPP2298), and 502906238. 
\appendix
\section*{Appendix}

The Appendix is structured as follows:
\begin{itemize}
    \item Appendix~\ref{sec:app_weak_conc} provides the proofs of Proposition~\ref{prop:weak_pt}, \ref{prop:regime-shift}, and \ref{prop:weak_Lip} dealing with the propagation in time of Assumption \ref{ass:weak_concave_Lip}. We start by establishing general results on weak concavity that are used in these proofs and also include bounds for the weak concavity constant $K(t) = \alpha(t) - M(t)$ and the Lipschitz constant $L(t)$. Moreover, we provide an example of a (constructed) distribution that is sub-gaussian but not weakly log-concave.
    \item Appendix~\ref{sec:app_OU} treats the specific case of the Ornstein-Uhlenbeck process and provides the derivation of the corresponding error bound given in Theorem~\ref{thm:err_bound_OU}.
    \item Appendix~\ref{sec:app_interpret} deals with the interpretation of our main result (Theorem~\ref{thm:main_thm}). We establish a regime shift result for the contraction rate $\gamma_{k,h}$, derive a bound for $\theta(T)$ that is used in the arguments of Section~\ref{sec:comp_strong}, and provide the proof of Proposition~\ref{prop:asymp_same}, comparing the asymptotics of our error bound with the one in \cite{gao2024convergence}, which imposes a strong log-concavity assumption.
    \item Appendix~\ref{sec:app_proof} provides the proofs of Proposition~\ref{prop:yt-p0-bound} and \ref{prop:yt-zt-bound}, which establish bounds for the different error sources and constitute the key ingredients for the proof of our main result (Theorem~\ref{thm:main_thm}).
\end{itemize} %
\section{Propagation in time of Assumption \ref{ass:weak_concave_Lip}}
\label{sec:app_weak_conc}

We start this section with general properties of weak concavity that will be used in the proof for its propagation in time. The following result relates the weak convexity profile $\kappa_g(r)$ introduced in Definition \ref{def:conv_profile} to the classical definition of strong convexity. In particular, it says that $(\alpha, M)$-weak concavity implies $(\alpha-M)$-strong concavity if $\alpha-M > 0$.

\begin{lemma}
\label{lem:weak_implies_vague}
    Let $g \in C^1(\R^d)$ and $k: [0,\infty) \to \R$. The following two statements are equivalent:
    \begin{enumerate}[(i)]
        \item $\kappa_g(r) \ge k(r)$ for all $r>0$,
        \item $ \dotprod{\nabla g(x) - \nabla g(y)}{x-y} \ge k(\norm{x-y})\norm{x-y}^2$ for all $x, y \in \R^d$.
    \end{enumerate}
    In particular, if $g$ is $(\alpha, M)$-weakly concave, then
    \begin{align*}
        \dotprod{\nabla g(x) - \nabla g(y)}{x-y} &\le -\alpha\norm{x-y}^2 + \norm{x-y}f_M(\norm{x-y})\\
        &\le -(\alpha - M) \norm{x-y}^2.
    \end{align*}
\end{lemma}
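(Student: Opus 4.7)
The proof has two independent parts: the equivalence of (i) and (ii), and the chain of inequalities in the ``in particular'' statement. I would treat them in that order.

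For the equivalence, the plan is to unfold the definition of $\kappa_g$ from Definition~\ref{def:conv_profile} and note that both conditions express exactly the same inequality on the ratio $\langle \nabla g(x) - \nabla g(y), x-y\rangle / \|x-y\|^2$, only stratified differently. For (i)$\Rightarrow$(ii), I would fix arbitrary $x, y \in \R^d$, set $r = \|x-y\|$, and use that by definition of the infimum $\kappa_g(r) \le \langle \nabla g(x) - \nabla g(y), x-y\rangle / \|x-y\|^2$; combined with (i) this gives the pointwise bound in (ii). For (ii)$\Rightarrow$(i), I would fix $r > 0$, divide the inequality in (ii) by $\|x-y\|^2 = r^2$ for every pair with $\|x-y\| = r$, and take the infimum over such pairs.

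For the first inequality in the ``in particular'' clause, I would apply the equivalence to the function $-g$ with $k(r) = \alpha - f_M(r)/r$, which is exactly the condition that $-g$ is $(\alpha, M)$-weakly convex, i.e.\ that $g$ is $(\alpha, M)$-weakly concave. This directly yields
\[
\langle -\nabla g(x) + \nabla g(y), x-y\rangle \ge \alpha \|x-y\|^2 - \|x-y\| f_M(\|x-y\|),
\]
which, after multiplying by $-1$, is exactly the first bound claimed. For the second inequality, it suffices to show $r f_M(r) \le M r^2$ for all $r > 0$, or equivalently $2\sqrt{M}\tanh(\tfrac{1}{2}\sqrt{M}r) \le M r$. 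This reduces, after dividing by $2\sqrt{M}$ and substituting $t = \tfrac{1}{2}\sqrt{M}r$, to the elementary inequality $\tanh(t) \le t$ for $t \ge 0$, which follows since $\tanh(0) = 0$ and $(\tanh t)' = 1 - \tanh^2 t \le 1$.

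Nothing here is genuinely hard; the only point that requires care is not to overcomplicate the equivalence, since one is simply reformulating an infimum bound as a pointwise bound. The slightly less obvious step is the inequality $r f_M(r) \le M r^2$, but this is immediate from $\tanh t \le t$; I would state it as a one-line observation rather than a separate lemma.
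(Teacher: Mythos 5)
Your proposal is correct and follows essentially the same route as the paper: reformulating the infimum bound as a pointwise bound for the equivalence, then applying it to $-g$ with $k(r) = \alpha - f_M(r)/r$ and using $\tanh(t) \le t$ to get $f_M(r) \le Mr$ for the two inequalities. The only difference is that you spell out the elementary verification of $\tanh(t)\le t$, which the paper simply cites.
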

\begin{proof}[Proof of Lemma~\ref{lem:weak_implies_vague}]
    We can rewrite $\kappa_g(r) \ge k(r)$ as
    $$  \inf_{x, y \in \R : \norm{x-y} = r} \left\{ \dotprod{\nabla g(x) - \nabla g(y)}{x-y} \right\} \ge k(r)r^2, \quad r>0. $$
    Since the infimum over a set is bounded below by a constant if and only if each element of the set is greater than or equal to this constant, and the inequality holds for all possible values of $r$, the above display is equivalent to
    $$ \dotprod{\nabla g(x) - \nabla g(y)}{x-y} \ge k(\norm{x-y})\norm{x-y}^2 $$
    for all $x, y \in \R^d$.

    The second part of the statement follows from the fact that $\tanh(t) \le t$ for any $t > 0$ and hence $f_M(\norm{x-y}) = 2\sqrt{M} \tanh(\half \sqrt{M} \norm{x-y}) \le M\norm{x-y}$.
\end{proof}

The next result establishes an equivalence between convexity of a function and boundedness of its Hessian. 

\begin{lemma}
\label{lem:vag_hessian}
    Let $g \in C^2(\R^d)$ and $\beta \in \R$. The following two statements are equivalent:
    \begin{enumerate}[(i)]
        \item $\dotprod{\nabla g(x) - \nabla g(y)}{x-y} \ge \beta \norm{x-y}^2$ for all $x, y \in \R^d$,
        \item $\nabla^2 g(x) \succeq \beta I_d$ for all $x \in \R^d$.
    \end{enumerate}
\end{lemma}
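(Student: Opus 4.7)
The plan is to establish this classical equivalence between a first-order monotonicity condition on $\nabla g$ and a second-order bound on $\nabla^2 g$ using the fundamental theorem of calculus in one direction and a limiting Taylor-type argument in the other.

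For the direction (ii) $\Rightarrow$ (i), I would fix arbitrary $x, y \in \R^d$ and parametrize the line segment between them by $\phi(t) = \nabla g(y + t(x-y))$ for $t \in [0,1]$. Since $g \in C^2$, the fundamental theorem of calculus yields
\begin{equation*}
\nabla g(x) - \nabla g(y) = \int_0^1 \nabla^2 g(y + t(x-y))(x-y)\,\de t.
\end{equation*}
Taking the Euclidean inner product with $x - y$, commuting inner product and integral, and using the hypothesis $\nabla^2 g(z) \succeq \beta I_d$ pointwise, I obtain
\begin{equation*}
\dotprod{\nabla g(x) - \nabla g(y)}{x-y} = \int_0^1 \dotprod{\nabla^2 g(y + t(x-y))(x-y)}{x-y}\,\de t \geq \int_0^1 \beta \|x-y\|^2 \,\de t = \beta \|x-y\|^2.
\end{equation*}

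For the converse (i) $\Rightarrow$ (ii), I would fix an arbitrary point $x \in \R^d$ and a unit vector $v \in \R^d$, then set $y = x + tv$ for small $t > 0$. By hypothesis,
\begin{equation*}
\dotprod{\nabla g(x+tv) - \nabla g(x)}{tv} \geq \beta t^2.
\end{equation*}
A first-order Taylor expansion around $x$ gives $\nabla g(x + tv) = \nabla g(x) + t \nabla^2 g(x) v + o(t)$ as $t \to 0$. Substituting, dividing by $t^2$, and letting $t \downarrow 0$ produces $\dotprod{\nabla^2 g(x) v}{v} \geq \beta$. Since $v$ was an arbitrary unit vector and the Rayleigh quotient characterization of the smallest eigenvalue gives $\lambda_{\min}(\nabla^2 g(x)) = \inf_{\|v\|=1} \dotprod{\nabla^2 g(x) v}{v}$, I conclude $\nabla^2 g(x) \succeq \beta I_d$, and $x$ was arbitrary.

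Neither direction presents a genuine obstacle: both are standard and rely only on the $C^2$ regularity of $g$, which justifies the interchange of integration and inner product in one step and the Taylor expansion with $o(t)$ remainder in the other. The only small care is in the converse, where one must use that $\nabla^2 g$ is continuous (from $g \in C^2$) so that the limit of the Taylor remainder terms is well controlled and the quadratic form at $x$ is recovered exactly in the limit $t \downarrow 0$.
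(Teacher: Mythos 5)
Your proof is correct and takes essentially the same route as the paper: for (i) $\Rightarrow$ (ii) the paper evaluates the limit of the same difference quotient that your Taylor expansion encodes, and for (ii) $\Rightarrow$ (i) it applies the mean value theorem to $h(t)=\langle \nabla g(x+t(y-x)), x-y\rangle$, which is just the mean-value form of your fundamental-theorem-of-calculus integral of the Hessian along the segment. No gaps.
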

\begin{proof}[Proof of Lemma~\ref{lem:vag_hessian}]
    First, assume that (i) holds. Then, for any $v \in \R^d$, we have
    \begin{align*}
        v^T \nabla^2 g(x) v &= \lim_{t\to 0} \frac{\dotprod{\nabla g(x+tv) - \nabla g(x)}{v}}{t}\\
        &= \lim_{t \to 0} \frac{\dotprod{\nabla g(x+tv) - \nabla g(x)}{x+tv-x}}{t^2}\\
        &\ge \lim_{t \to 0} \frac{\beta \norm{tv}^2}{t^2}\\
        &= \beta \norm{v}^2\\
        &= v^t(\beta I_d)v.
    \end{align*}
    On the other hand, assume that (ii) holds, and define
    $$h(t) = \dotprod{\nabla g(x+t(y-x))}{x-y},$$ 
    so that 
    $$h'(t) = (x-y)^T \nabla^2g(x+t(y-x)) (x-y).$$
    By the mean value theorem, it follows that
    $$\dotprod{\nabla g(x) - \nabla g(y)}{x-y} = h(1) - h(0) = h'(\tau)$$
    for some $\tau \in [0,1]$, and hence
    \begin{align*}
        \dotprod{\nabla g(x) - \nabla g(y)}{x-y} &= (x-y)^T \nabla^2g(x+\tau(y-x)) (x-y)\\
        &\ge (x-y)^T (\beta I_d) (x-y)\\
        &= \beta \norm{x-y}^2. \qedhere
    \end{align*}
\end{proof}

An example for weakly log-concave distribution are Gaussian mixture models.

\begin{example} \label{ex:gmm}
    Let $p(x)$ denote the density function of a one-dimensional Gaussian mixture model with three components given by 
    \begin{equation*}
        0.2\cdot \mathcal{N}(-2, 0.8^2) + 0.5\cdot \mathcal{N}(2, 1^2) + 0.3\cdot \mathcal{N}(5, 0.3^2).
    \end{equation*}
    As proved in \citet[Proposition 4.1]{gentiloni2025beyond}, this is an example of a weakly log-concave distribution. An illustration of the density, log-density, score and derivative of the score is given in Figure \ref{fig:gmm}. It clearly shows that the log-density is strongly concave at ``large scales'' with some local fluctuations. Accordingly, the Hessian $\nabla^2 \log p(x)$ is negative for large enough values of $|x|$ and globally bounded from above.
    
    \begin{figure}
        \centering
        \includegraphics[width=0.9\textwidth]{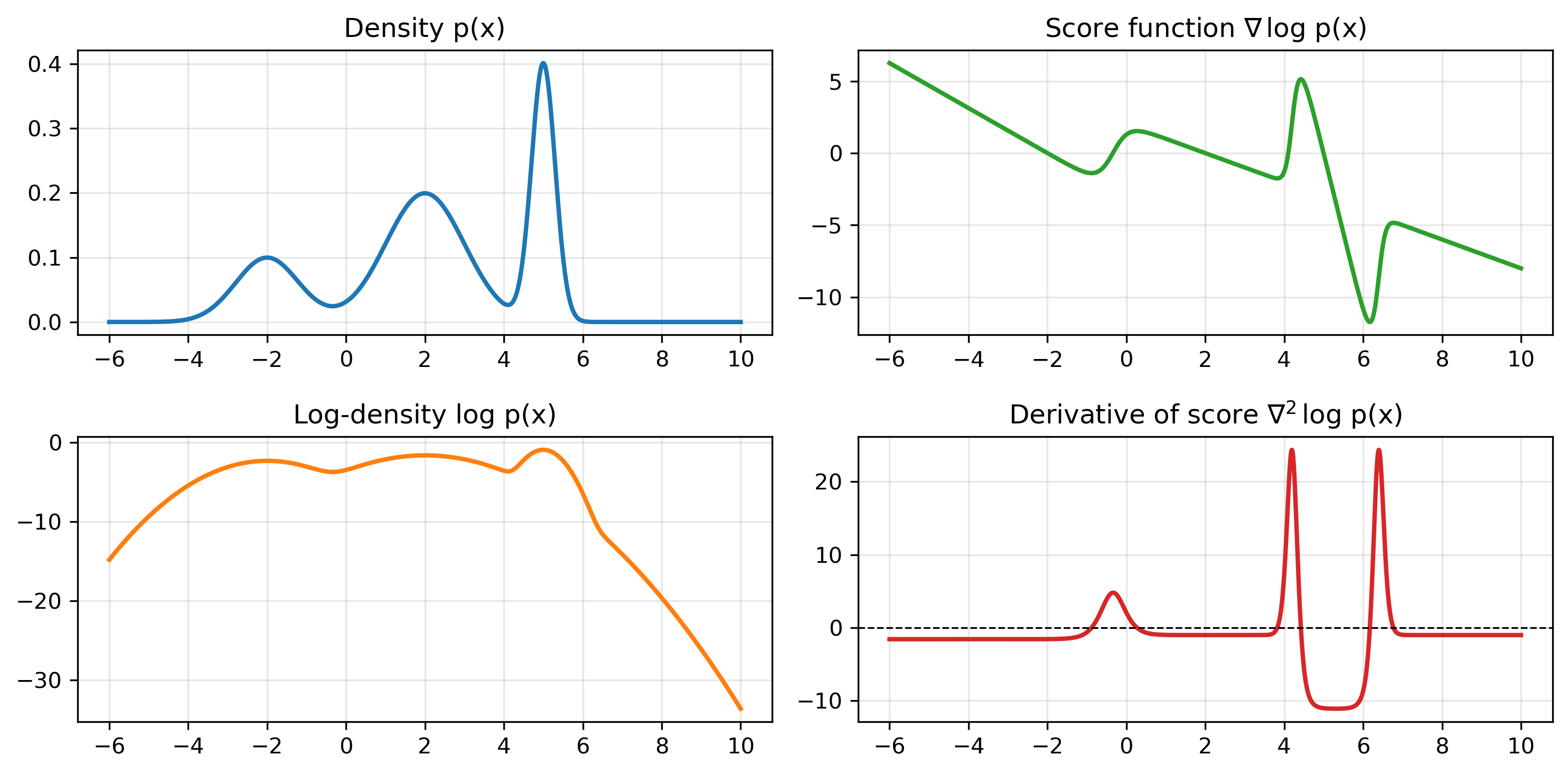}
        \caption{Plots corresponding to a Gaussian mixture model. See Example~\ref{ex:gmm} for more details.}
        \label{fig:gmm}
    \end{figure}
\end{example}

Next, we provide an example of a probability density function that has sub-gaussian tails but does not satisfy the weak log-concavity assumption. Note that it is a very constructed example explicitly meant to reveal the nature of our assumption.

\begin{example} \label{ex:non_weak_concave}
    Consider the probability density function
    \begin{equation*}
        p(x) = \frac{1}{Z} e^{-x^2}\qty(1 + \abs{x}^{\frac{3}{2}}), \quad x \in \R,
    \end{equation*}
    where the normalization constant $Z = \int_{-\infty}^\infty e^{-x^2}(1 + \abs{x}^{\frac{3}{2}}) \de x < \infty$ guarantees its total mass of one. Since, for any $x \in \R$,
    \begin{equation*}
        p(x) \le \frac{1}{Z} e^{-x^2} \qty(1 + e^{\half x^2}) \le \frac{2}{Z} e^{-\half x^2},
    \end{equation*}
    the corresponding distribution is sub-gaussian. However, as
    \begin{equation*}
        \log p(x) = -\log(Z) - x^2 + \log\qty(1+|x|^{\frac{3}{2}})
    \end{equation*}
    and thus
    \begin{equation*}
        \nabla \log p(x) = \begin{cases}
            -2x + \frac{\frac{3}{2} \text{sign}(x) \sqrt{\abs{x}}}{1 + \abs{x}^{3/2}}, &x \ne 0\\
            0, &x = 0 
        \end{cases},
    \end{equation*}
    the score function is infinitely steep at $x=0$. Hence, the Hessian $\nabla^2 \log p(x)$ is unbounded, implying that the distribution cannot be weakly log-concave (cf.\ Lemma~\ref{lem:weak_implies_vague} and \ref{lem:vag_hessian}). An illustration of the involved functions is given in Figure \ref{fig:density_plots}.
    
    \begin{figure}
        \centering
        \includegraphics[width=0.9\textwidth]{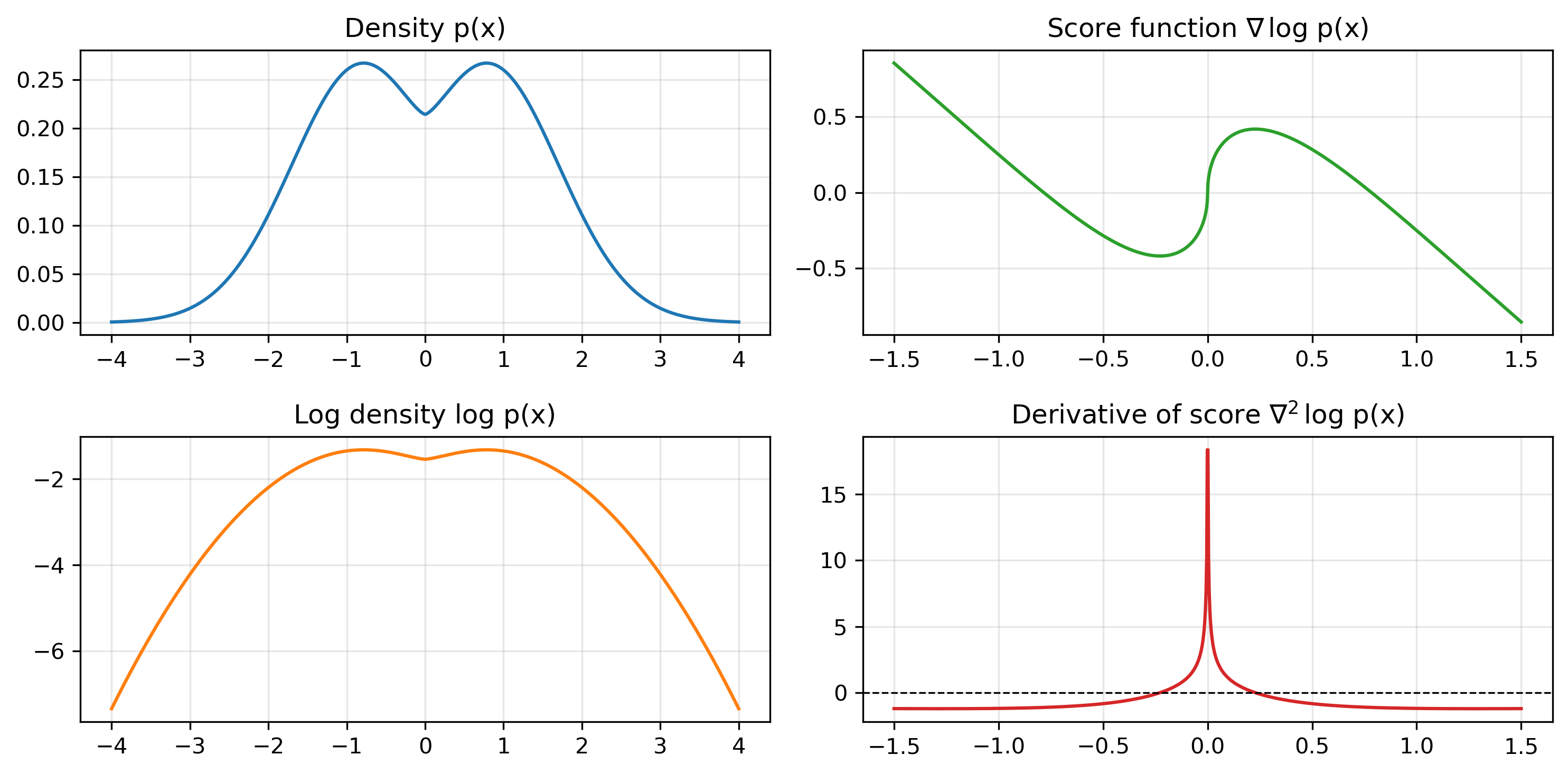}
        \caption{Plots corresponding to a constructed probability density function that is sub-gaussian but not weakly log-concave. See Example~\ref{ex:non_weak_concave} for more details.}
        \label{fig:density_plots}
    \end{figure}
\end{example}

In the following lemma, we list several properties of the convexity profile $\kappa_g(r)$ introduced in Definition \ref{def:conv_profile}. Since the proofs are rather trivial, we do not explicitly state them here.

\begin{lemma}
\label{lem:weak_con_props}
    Let $r>0, \gamma \in \R, c \in \R^d$, and $g, g_1, g_2 \in C^1(\R^d)$. It holds
    \begin{enumerate}[(i)]
        \item \label{it:profile_sum}
        $\kappa_{g_1 + g_2}(r) \ge \kappa_{g_1}(r) + \kappa_{g_2}(r)$,
        \item \label{it:profile_mult_const}
        $\kappa_{\gamma g}(r) = \gamma \kappa_{g}(r)$ for $\gamma > 0$,
        \item \label{it:profile_add_const}
        $\kappa_{g+c}(r) = \kappa_{g}(r)$,
        \item \label{it:profile_mult_input}
        $\kappa_{g(\gamma x)}(r) = \gamma^2 \kappa_g(\abs{\gamma} r)$,
        \item \label{it:profile_norm}
        $\kappa_{\gamma \norm{\cdot}^2}(r) = 2\gamma$.
    \end{enumerate}
\end{lemma}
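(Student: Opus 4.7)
The plan is to work directly from the definition
\begin{equation*}
\kappa_g(r) = \inf_{x,y \in \R^d : \norm{x-y} = r} \frac{\dotprod{\nabla g(x) - \nabla g(y)}{x-y}}{\norm{x-y}^2},
\end{equation*}
and, for each item, exploit only elementary properties of the gradient (linearity, chain rule, vanishing on constants) together with standard facts about infima. Each proof reduces to a short direct manipulation of the quotient inside the infimum; the authors' remark that the proofs are trivial is accurate.

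For item (\ref{it:profile_sum}), I would use $\nabla(g_1+g_2) = \nabla g_1 + \nabla g_2$ to split the quotient in the definition of $\kappa_{g_1+g_2}(r)$ into a sum of the two analogous quotients, and then apply the standard bound $\inf(F_1+F_2) \ge \inf F_1 + \inf F_2$ over the common feasible set $\{(x,y)\in\R^d\times\R^d : \norm{x-y}=r\}$. Item (\ref{it:profile_mult_const}) is similar: from $\nabla(\gamma g) = \gamma\nabla g$ and $\gamma>0$ the positive constant pulls out of the infimum as an equality, $\inf(\gamma F) = \gamma\inf F$. Item (\ref{it:profile_add_const}) is immediate, since $\nabla(g+c)=\nabla g$ makes the quotients identical pointwise, hence their infima agree.

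For item (\ref{it:profile_mult_input}), set $h(x) = g(\gamma x)$ and apply the chain rule to obtain $\nabla h(x) = \gamma \nabla g(\gamma x)$. Substituting into the quotient and then changing variables via $u = \gamma x$, $v = \gamma y$ transforms the constraint $\norm{x-y}=r$ into $\norm{u-v}=\abs{\gamma}r$, while the quotient picks up an overall factor $\gamma^2$. Since the map $(x,y)\mapsto(\gamma x,\gamma y)$ is a bijection of the two feasible sets (also when $\gamma<0$), the infima are exchanged without loss, giving $\kappa_h(r) = \gamma^2 \kappa_g(\abs{\gamma}r)$. For item (\ref{it:profile_norm}), taking $g(x) = \gamma\norm{x}^2$ gives $\nabla g(x) = 2\gamma x$, so the quotient equals the constant $2\gamma$ for every admissible pair $(x,y)$, and its infimum is therefore $2\gamma$.

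No real obstacle arises; the only points requiring a moment's care are the sign condition in (\ref{it:profile_mult_const})---for $\gamma<0$ one would need to exchange $\inf$ with $\sup$, breaking the identity---and the verification in (\ref{it:profile_mult_input}) that the linear rescaling is indeed a bijection of the constraint set, so that the infima over the original and transformed sets genuinely coincide. Beyond that, the lemma is a bookkeeping exercise, as the authors indicate.
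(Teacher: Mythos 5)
Your proof is correct and is exactly the direct computation the paper intends — the authors omit the argument entirely, declaring it trivial, and your handling of the only delicate points (the sign restriction $\gamma>0$ in (\ref{it:profile_mult_const}) and the bijectivity of the rescaling in (\ref{it:profile_mult_input})) is right. One cosmetic remark on item (\ref{it:profile_add_const}): since $c \in \R^d$, the natural reading of $g+c$ is $x \mapsto g(x) + \dotprod{c}{x}$, in which case $\nabla(g+c) = \nabla g + c$ rather than $\nabla g$; but the gradient \emph{differences} $\nabla(g+c)(x)-\nabla(g+c)(y) = \nabla g(x)-\nabla g(y)$ are unchanged either way, so your conclusion stands under either interpretation.
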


As we will see in the proof of Proposition~\ref{prop:weak_pt}, the density $p_t$ can be written as a convolution of $p_0$ with a Gaussian distribution. We are interested in how the weak log-concavity of $p_0$ is carried over to $p_t$ by this transformation. The following theorem provides an important result in this context. %
It was originally published in \cite[Theorem~2.1]{conforti2024weak} and restated in \cite[Theorem~B.3]{gentiloni2025beyond}. 

\begin{theorem}
\label{thm:weak_convol}
    Fix $M > 0$ and define
    $$ \mathcal{F}_M = \{g \in C^1(\R^d): \kappa_g(r) \ge -r^{-1} f_M(r)\}. $$ 
    Then for all $0 < v < \infty$, it holds that
    $$ -\log g \in \mathcal{F}_M \Rightarrow -\log(S_v g) \in \mathcal{F}_M, $$
    where $(S_v)_{v \ge 0}$ denotes the semigroup generated by a standard Brownian motion on $\R^d$, defined as
    $$S_v g(x) = \int (2\pi v)^{-d/2} \exp \qty(-\frac{\norm{x-y}^2}{2v}) g(y) dy.$$
\end{theorem}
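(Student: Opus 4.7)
The approach is via a Cole--Hopf (log-transform) reduction to a viscous Hamilton--Jacobi equation, followed by a parabolic comparison argument. Setting $u_v := -\log(S_v g)$, the heat equation $\partial_v(S_v g) = \tfrac{1}{2}\Delta(S_v g)$ transforms, after a direct computation, into
$$\partial_v u_v \;=\; \tfrac{1}{2}\Delta u_v \;-\; \tfrac{1}{2}\|\nabla u_v\|^2, \qquad u_0 = -\log g.$$
By Lemma~\ref{lem:weak_implies_vague}, membership $-\log(S_v g) \in \mathcal{F}_M$ is equivalent to the two-point inequality
$$\Phi_v(x,y) \;:=\; \langle \nabla u_v(x) - \nabla u_v(y),\, x-y\rangle \;+\; \|x-y\|\, f_M(\|x-y\|) \;\ge\; 0$$
for all $x,y \in \R^d$, which holds at $v=0$ by hypothesis; the task is to propagate this along the flow.

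Next I would derive an evolution inequality for $\Phi_v$ by differentiating the HJB equation in $x$ to obtain $\partial_v \nabla u_v = \tfrac{1}{2}\Delta \nabla u_v - (\nabla^2 u_v)\nabla u_v$, taking the inner product with $x-y$, and subtracting the analogous quantity at $y$. After regrouping, the Laplacian contributions reassemble into the generator of two independent Brownian motions on $\R^d\times\R^d$, so that
$$\partial_v \Phi_v(x,y) \;\ge\; \tfrac{1}{2}\big(\Delta_x + \Delta_y\big)\Phi_v(x,y) \;+\; R_v(x,y),$$
where the remainder $R_v$ collects the Hessian-times-gradient nonlinearity together with the derivatives of the barrier term $\|x-y\|f_M(\|x-y\|)$. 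The crucial cancellation is that, exactly on the zero set $\{\Phi_v = 0\}$, the Riccati identity $f_M'(r) + \tfrac{1}{4}f_M(r)^2 = M$ (equivalently $2f_M''(r) + f_M(r) f_M'(r) = 0$, the structural condition of the remark following Definition~\ref{def:conv_profile}) forces $R_v \ge 0$. A standard parabolic maximum principle on $\R^d \times \R^d$ (with a mild penalization at infinity, justified by the smoothness and Gaussian bounds for $S_v g$ when $v>0$) then rules out any first contact with zero, yielding $\Phi_v \ge 0$ for every $v > 0$.

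A conceptually cleaner alternative is stochastic: via the Fleming log-transform one has the representation
$$u_v(x) \;=\; \inf_{\alpha}\, \mathbb{E}\!\left[-\log g(X_v^\alpha) + \tfrac{1}{2}\int_0^v \|\alpha_s\|^2\,ds\right], \qquad dX_s = \alpha_s\,ds + dB_s,\ X_0 = x.$$
Running the optimal controls from $x$ and $y$ under a common (synchronous) Brownian motion reduces the gradient comparison to a one-dimensional comparison for the distance process $R_s := \|X_s - Y_s\|$, in which the same Riccati identity for $f_M$ provides the required drift barrier. This avoids manipulating Hessians in two variables at the expense of handling the existence and regularity of optimal controls.

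The principal obstacle is precisely the algebraic cancellation that makes the specific profile $f_M(r) = 2\sqrt{M}\tanh(\tfrac{1}{2}\sqrt{M}\,r)$ compatible with the quadratic nonlinearity $-\tfrac{1}{2}\|\nabla u_v\|^2$ in the HJB equation. At a hypothetical first contact point $(x_\star, y_\star, v_\star)$, one must carefully separate contributions along the direction $x_\star - y_\star$ from those transverse to it, using the joint second-order necessary conditions in both variables; the transverse directions are what pin down the exact shape of $f_M$ and explain the admissibility class $\widehat{\mathcal{G}}$ in the same remark. Anything looser than the Riccati identity above breaks the cancellation, so the difficulty is genuinely sharp rather than merely technical.
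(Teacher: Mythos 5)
First, a point of reference: the paper does not prove Theorem~\ref{thm:weak_convol} at all --- it is imported from \citet[Theorem~2.1]{conforti2024weak} (restated as Theorem~B.3 in \citealp{gentiloni2025beyond}) --- so your attempt has to be judged on its own terms. Your Cole--Hopf reduction to $\partial_v u_v = \tfrac{1}{2}\Delta u_v - \tfrac{1}{2}\|\nabla u_v\|^2$, the Fleming control representation, and the identification of the structural identity $f_M' + \tfrac{1}{4}f_M^2 = M$, equivalently $2f_M'' + f_M f_M' = 0$, are all correct and do sit at the heart of the matter. The genuine gap is that your central two-point inequality is asserted rather than derived, and in the form stated it is false. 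With $\Phi_v(x,y) = \langle\nabla u_v(x)-\nabla u_v(y), x-y\rangle + \phi(\|x-y\|)$, $\phi(r)=r f_M(r)$, and $\partial_v\nabla u_v = \tfrac{1}{2}\Delta\nabla u_v - \nabla^2 u_v\,\nabla u_v$, a direct computation gives (writing $r=\|x-y\|$)
\begin{align*}
\partial_v\Phi_v - \tfrac{1}{2}\left(\Delta_x+\Delta_y\right)\Phi_v
&= -\bigl\langle \nabla^2u_v(x)\nabla u_v(x) - \nabla^2u_v(y)\nabla u_v(y),\,x-y\bigr\rangle \\
&\qquad - \Delta u_v(x) - \Delta u_v(y) - \phi''(r) - (d-1)\,\frac{\phi'(r)}{r},
\end{align*}
because $\Delta_x\langle\nabla u_v(x),x-y\rangle = \langle\Delta\nabla u_v(x),x-y\rangle + 2\Delta u_v(x)$ (and symmetrically in $y$). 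So the ``remainder'' you call $R_v$ necessarily contains $-\Delta u_v(x)-\Delta u_v(y)$, on which the hypothesis gives no useful control (weak convexity is only a lower bound on the Hessian), plus a dimension-dependent barrier term $-(d-1)\phi'/r<0$; no algebraic identity for $f_M$ makes this nonnegative on $\{\Phi_v=0\}$, and any such loss would destroy the exact preservation of the $\tanh$ profile that the theorem asserts. The cancellation you are after does not come from the independent operator $\tfrac{1}{2}(\Delta_x+\Delta_y)$: at a contact point one must exploit the joint second-order conditions with \emph{mirror} perturbations $(w,-w)$ along $e=(x-y)/\|x-y\|$ and \emph{synchronous} perturbations $(w,w)$ transverse to $e$, which is what cancels the $\Delta u_v$-terms and the $(d-1)\phi'/r$ term and produces the coefficient $2$ in $2f_M''$. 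That choice of directions is exactly the step your sketch defers, and it is the whole proof.

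Your ``conceptually cleaner'' stochastic route has the same defect in sharper form: under a synchronous coupling the Brownian increments cancel in $X_s-Y_s$, so $R_s=\|X_s-Y_s\|$ has no martingale part and the one-dimensional comparison sees only first-order (drift) terms; the second-order contribution $2f_M''$, i.e.\ precisely the ODE $2f_M''+f_Mf_M'=0$ that singles out $f_M(r)=2\sqrt{M}\tanh(\tfrac{1}{2}\sqrt{M}r)$ and that you correctly flag as essential, can never enter, so the barrier argument cannot close. The proof of \citet[Theorem~2.1]{conforti2024weak} that this paper relies on combines the control representation with \emph{coupling by reflection} of the controlled diffusions, under which the distance process carries a Brownian part of quadratic variation $4\,\de s$ and the supermartingale condition becomes exactly $2f_M''+f_Mf_M'\le 0$ (the condition defining $\widehat{\mathcal{G}}$ in the remark --- note it is an inequality, so your claim that ``anything looser breaks the cancellation'' is also overstated). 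Replacing your synchronous coupling by reflection coupling, and handling regularity of (near-)optimal controls, is not a cosmetic refinement but the essential mechanism; as written, neither of your two routes closes.
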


The connection between $\mathcal{F}_M$ and weak convexity is revealed in the following lemma.

\begin{lemma}
\label{lem:weak_FM}
    If $h$ is $(\alpha, M)$-weakly convex, then $h - \half \alpha \norm{\cdot}^2 \in \mathcal{F}_M$. 
\end{lemma}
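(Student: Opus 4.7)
The claim reduces to a direct computation with the elementary properties of the convexity profile collected in Lemma~\ref{lem:weak_con_props}. By definition, membership in $\mathcal{F}_M$ requires
\[
\kappa_{h - \tfrac{\alpha}{2}\|\cdot\|^2}(r) \;\ge\; -\tfrac{1}{r} f_M(r) \qquad \text{for all } r > 0,
\]
so the goal is to compute (or bound below) the left-hand side using what we know about $h$.

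The plan is to split $h - \tfrac{\alpha}{2}\|\cdot\|^2$ as a sum and use subadditivity of $\kappa$. By Lemma~\ref{lem:weak_con_props}\ref{it:profile_sum},
\[
\kappa_{h - \tfrac{\alpha}{2}\|\cdot\|^2}(r) \;\ge\; \kappa_h(r) + \kappa_{-\tfrac{\alpha}{2}\|\cdot\|^2}(r).
\]
For the second term, I would either invoke Lemma~\ref{lem:weak_con_props}\ref{it:profile_norm} with $\gamma = -\alpha/2$, or, to avoid any issue about the sign of $\gamma$, compute it directly from the definition of $\kappa_g$: for $g(x) = -\tfrac{\alpha}{2}\|x\|^2$ one has $\nabla g(x) = -\alpha x$, so
\[
\frac{\langle \nabla g(x) - \nabla g(y),\, x - y\rangle}{\|x-y\|^2} \;=\; \frac{-\alpha \|x-y\|^2}{\|x-y\|^2} \;=\; -\alpha,
\]
uniformly in $x,y$, which gives $\kappa_{-\tfrac{\alpha}{2}\|\cdot\|^2}(r) = -\alpha$.

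Finally, the $(\alpha,M)$-weak convexity of $h$ from Definition~\ref{def:conv_profile} states exactly that $\kappa_h(r) \ge \alpha - \tfrac{1}{r} f_M(r)$. Combining,
\[
\kappa_{h - \tfrac{\alpha}{2}\|\cdot\|^2}(r) \;\ge\; \alpha - \tfrac{1}{r} f_M(r) + (-\alpha) \;=\; -\tfrac{1}{r} f_M(r),
\]
which is precisely the condition defining $\mathcal{F}_M$. Since $h \in C^1(\R^d)$ implies $h - \tfrac{\alpha}{2}\|\cdot\|^2 \in C^1(\R^d)$, the proof is complete. There is no real obstacle here; the only point requiring mild care is confirming that property~\ref{it:profile_norm} (or the direct computation) applies for the negative multiplier $-\alpha/2$, which I resolve via the direct calculation above.
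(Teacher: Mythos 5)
Your proof is correct and follows essentially the same route as the paper: split $h - \half\alpha\norm{\cdot}^2$ via the subadditivity of the convexity profile (Lemma~\ref{lem:weak_con_props}\eqref{it:profile_sum}) and evaluate the quadratic part using \eqref{it:profile_norm} (which is stated for arbitrary $\gamma \in \R$, so your extra direct computation for the negative multiplier, while a nice sanity check, is not strictly needed).
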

\begin{proof}[Proof of Lemma~\ref{lem:weak_FM}]
    By Lemma~\ref{lem:weak_con_props}\eqref{it:profile_sum} and \eqref{it:profile_norm} together with the weak convexity of $h$, we have
    \begin{equation*}
        \kappa_{h - \half \alpha \norm{\cdot}^2}(r) \ge \kappa_h(r) + \kappa_{-\half \alpha \norm{\cdot}^2} \ge  \alpha - r^{-1}f_M(r) + 2 \qty(-\half \alpha) = -r^{-1} f_M(r). \qedhere
    \end{equation*}
\end{proof}

\subsection{Propagation in time of weak log-concavity}

Now, we are ready to present the proof of Proposition~\ref{prop:weak_pt}, establishing the weak log-concavity of $p_t$ given that $p_0$ is $(\alpha_0, M_0)$-weakly log-concave.

\begin{proof}[Proof of Proposition~\ref{prop:weak_pt}]
    Observe that $p_t(x) = \int p_{t|0}(x|y) p_0(y) dy$, where $p_{t|0}(\cdot|y)$ denotes the conditional density of $X_t$ given $X_0 = y$. From equation \eqref{eq:forw_SDE_solution}, it follows that
    \begin{equation*}
        p_{t|0}(x|y) = (2\pi c_1(t))^{-d/2} \exp \qty(- \frac{\norm{x - c_0^{-1}(t)y}^2}{2c_1(t)})
    \end{equation*}
    with
    \begin{equation}\label{eq:def_c0_c1}
        c_0(t) = e^{\int_0^{t} f(s) ds}, \quad
        c_1(t) = \int_0^{t} e^{-2\int_s^{t} f(v)dv} g^2(s) ds,
    \end{equation}
    which yields
    \begin{equation}
    \label{eq:pt_conv}
        p_{t}(x) = \int (2\pi c_1(t))^{-d/2} \exp \qty(- \frac{\norm{x - c_0^{-1}(t)y}^2}{2c_1(t)}) p_0(y) dy.
    \end{equation} 
    We can write the argument of the exponential function within $p_{t}$ as
    \begin{align*}
        - \frac{\norm{x - c_0^{-1}(t)y}^2}{2c_1(t)}
        &= - \frac{\norm{x - c_0^{-1}(t)y}^2}{2c_1(t)} - \half \alpha_0 \norm{y}^2 + \half \alpha_0 \norm{y}^2\\
        &= - \frac{\norm{x - c_0^{-1}(t)y}^2 + \alpha_0 c_1(t)\norm{y}^2}{2c_1(t)} + \half \alpha_0 \norm{y}^2\\
        &= - \frac{\norm{x}^2 - 2c_0^{-1}(t)\dotprod{x}{y} + c_0^{-2}(t) \norm{y}^2 + \alpha_0 c_1(t) \norm{y}^2}{2c_1(t)} + \half \alpha_0 \norm{y}^2.
    \end{align*}
    Defining $c_\alpha(t) = c_0^{-2}(t) + \alpha c_1(t)$ and completing the square further yields
    \begin{align*}
        - \frac{\norm{x - c_0^{-1}(t)y}^2}{2c_1(t)}
        &= - \frac{\norm{x}^2 - 2c_0^{-1}(t)\dotprod{x}{y} + c_{\alpha_0}(t) \norm{y}^2}{2c_1(t)} + \half \alpha_0 \norm{y}^2\\
        &= - \frac{c_{\alpha_0}^{-1}(t) \norm{x}^2 - 2(c_{\alpha_0}(t) c_0(t))^{-1}\dotprod{x}{y} + \norm{y}^2}{2 c_{\alpha_0}^{-1}(t) c_1(t)} + \half \alpha_0 \norm{y}^2\\
        &= - \frac{\norm{(c_{\alpha_0}(t) c_0(t))^{-1} x - y}^2}{2 c_{\alpha_0}^{-1}(t) c_1(t)} + \frac{(c_{\alpha_0}(t) c_0(t))^{-2} - c_{\alpha_0}^{-1}(t)}{2 c_{\alpha_0}^{-1}(t) c_1(t)} \norm{x}^2 + \half \alpha_0 \norm{y}^2\\
        &= - \frac{\norm{c_4(\alpha_0, t) x - y}^2}{2 c_3(\alpha_0, t)} - \half c_2(\alpha_0, t) \norm{x}^2 + \half \alpha_0 \norm{y}^2,
    \end{align*}
    where 
    \begin{align*} %
        c_2(\alpha, t) &\coloneqq - \frac{(c_\alpha(t) c_0(t))^{-2} - c_\alpha^{-1}(t)}{c_\alpha^{-1}(t) c_1(t)},\\
        c_3(\alpha, t) &\coloneqq c_\alpha^{-1}(t) c_1(t), \notag \\
        c_4(\alpha, t) &\coloneqq (c_\alpha(t) c_0(t))^{-1}.\notag 
    \end{align*}
    Altogether, we get 
    \begin{align*}
        p_{t}(x) 
        &= \int (2\pi c_1(t))^{-d/2} \exp \qty(- \frac{\norm{c_4(\alpha_0,t) x - y}^2}{2 c_3(\alpha_0,t)} - \half c_2(\alpha_0,t) \norm{x}^2 + \half \alpha_0 \norm{y}^2) p_0(y) dy\\
        &= \exp \qty(-\half c_2(\alpha_0,t) \norm{x}^2) \qty(\frac{c_1(t)}{c_3(t)})^{-d/2}\\
        &\qquad \cdot\int (2\pi c_3(t))^{-d/2} \exp \qty(- \frac{\norm{c_4(\alpha_0,t) x - y}^2}{2 c_3(\alpha_0,t)} + \half \alpha_0 \norm{y}^2) p_0(y) dy\\
        &= \exp \qty(-\half c_2(\alpha_0,t) \norm{x}^2) \qty(\frac{c_1(t)}{c_3(t)})^{-d/2} 
        S_{c_3(\alpha_0,t)} \qty(\exp\qty(\half \alpha_0 \norm{\cdot}^2) p_0) \qty(c_4(\alpha_0,t) x),
    \end{align*}
    or equivalently
    \begin{equation*} %
        - \log p_{t} (x) = \half c_2(\alpha_0,t) \norm{x}^2 + \frac{d}{2} \log \qty(\frac{c_1(t)}{c_3(t)}) - \log S_{c_3(\alpha_0,t)} \qty(\exp\qty(\half \alpha_0 \norm{\cdot}^2) p_0) \qty(c_4(\alpha_0,t) x).
    \end{equation*}
    By Lemma~\ref{lem:weak_con_props}\eqref{it:profile_sum} and \eqref{it:profile_mult_input}, this implies that
    \begin{equation*}
        \kappa_{-\log p_{t}}(r) \ge \kappa_{\half c_2(\alpha_0, \tilde t) \norm{\cdot}^2}(r) + c_4^2(\alpha_0, t) \kappa_{- \log S_{c_3(\alpha_0,t)} \qty(\exp\qty(\half \alpha_0 \norm{\cdot}^2) p_0)} \big(c_4(\alpha_0, t) r\big).
    \end{equation*}
    Since $p_0$ is assumed to be $(\alpha_0, M_0)$-weakly log-concave, it follows by Lemma~\ref{lem:weak_FM} that
    $$ -\log \qty(\exp\qty(\half \alpha_0 \norm{\cdot}^2) p_0) = -\log p_0 + \half \alpha_0 \norm{\cdot}^2 \in \mathcal{F}_{M_0} $$
    and thus, by Theorem~\ref{thm:weak_convol}, that
    $$ - \log S_{c_3(\alpha_0,t)} \qty(\exp\qty(\half \alpha_0 \norm{\cdot}^2) p_0) \in \mathcal{F}_{M_0}. $$
    This result together with Lemma~\ref{lem:weak_con_props}\eqref{it:profile_norm} further yields
    \begin{align*}
        \kappa_{-\log p_{t}}(r) &\ge c_2(\alpha_0, t) - c_4^2(\alpha_0, t) \big(c_4(\alpha_0, t) r\big)^{-1} f_{M_0} \big(c_4(\alpha_0, t) r\big)\\
        &= c_2(\alpha_0, t) - r^{-1} f_{M_0 c_4^2(\alpha_0, t)} (r),
    \end{align*}
    where in the last equality we used the fact that by definition $cf_M(cr) = f_{c^2M}(r)$ for any $c, M, r > 0$.

    \begingroup
    \allowdisplaybreaks
    The following simple but tedious calculations finally show that $\alpha(t) = c_2(\alpha_0, t)$ and $M(t) = M_0 c_4^2(\alpha_0, t)$, completing the proof. In particular, we have
    \begin{align*}
        c_2(\alpha, t) &= - \frac{(c_\alpha(t) c_0(t))^{-2} - c_\alpha^{-1}(t)}{c_\alpha^{-1}(t) c_1(t)}\\
        &= \frac{1 - c_\alpha^{-1}(t) c_0^{-2}(t)}{c_1(t)}\\
        &= \frac{1}{c_1(t)} \qty(1 - \frac{1}{c_\alpha(t) c_0^2(t)})\\
        &= \frac{1}{c_1(t)} \qty(1 - \frac{1}{\qty(c_0^{-2}(t) + \alpha c_1(t)) c_0^2(t)})\\
        &= \frac{1}{c_1(t)} \qty(1 - \frac{1}{1 + \alpha c_0^2(t) c_1(t)})\\
        &= \frac{\alpha c_0^2(t)}{1 + \alpha c_0^2(t) c_1(t)}\\
        &= \frac{1}{\alpha^{-1} c_0^{-2}(t) + c_1(t)},
    \end{align*}
    and
    \begin{equation*}
        c_4(\alpha, t) = \frac{1}{c_\alpha(t) c_0(t)}
        = \frac{1}{\big(c_0^{-2}(t) + \alpha c_1(t)\big) c_0(t)}
        = \frac{c_0(t)}{1 + \alpha c_0^2(t) c_1(t) }. \qedhere
    \end{equation*}
    \endgroup
\end{proof}

\begin{remark}
    It can be easily checked that $c_0(t)$, $c_\alpha(t)$, $c_2(\alpha, t)$, and $c_4(\alpha, t)$ are positive for any $\alpha > 0$ and $t \ge 0$. Moreover, $c_1(t)$ and $c_3(\alpha, t)$ are strictly positive for any  $\alpha > 0$ and $t > 0$ and zero for $t=0$.
\end{remark}

Next, we prove the regime shifting result, Proposition~\ref{prop:regime-shift}, dealing with the switch of $p_t$ from being weakly to strongly log-concave around $t = \tau(\alpha_0, M_0)$.

\begin{proof}[Proof of Proposition~\ref{prop:regime-shift}]
    If $\alpha_0 - M_0>0$, the result trivially holds with $\tau(\alpha_0, M_0) = 0$, due to the log-concavity preservation result in  \cite[Proposition~7]{gao2024convergence}. 
    So we only need to consider the case that $\alpha_0 - M_0\leq 0$.
    By Lemma~\ref{lem:weak_implies_vague}, $p_t$ is $(\alpha(t) - M(t))$-strongly log-concave if $\alpha(t) - M(t) > 0$ which holds if and only if
    \begin{align*}
        \frac{\alpha_0 c_0^2(t)}{1 + \alpha_0 c_0^2(t) c_1(t)} - \frac{M_0 c_0^2(t)}{\big(1 + \alpha_0 c_0^2(t) c_1(t)\big)^2} &> 0 \\
        \Leftrightarrow\quad \alpha_0 c_0^2(t) \Big(1 + \alpha_0 c_0^2(t) c_1(t)\Big) - M_0 c_0^2(t) &> 0\\
        \Leftrightarrow\quad \alpha_0 \Big(1 + \alpha_0 c_0^2(t) c_1(t)\Big) - M_0 &> 0\\
        \Leftrightarrow\quad \alpha_0 + \alpha_0^2 c_0^2(t) c_1(t) &> M_0\\
        \Leftrightarrow\quad c_0^2(t) c_1(t) &> \frac{M_0 - \alpha_0}{\alpha_0^2}. \numberthis \label{eq:tau-ineq}
    \end{align*}
    By recalling the definition \eqref{eq:def_c0_c1} of $c_0$ and $c_1$, we have that
    \begin{equation}\label{eq:c0-c1-rel}
        c_1(t) = \int_0^{t} e^{-2\int_s^{t} f(v)dv} g^2(s) \de s = 
        e^{-2\int_0^{t} f(s) \de s}
        \int_0^{t} e^{2\int_0^{s} f(v) \de v} g^2(s) \de s = 
        \frac{1}{c_0^2(t)}   \int_0^{t} e^{2\int_0^{s} f(v) \de v} g^2(s) \de s.
    \end{equation}
    Hence, condition \eqref{eq:tau-ineq} can be rewritten as 
    \begin{equation*}%
        \int_0^{t} e^{2\int_0^{s} f(v) \de v} g^2(s) \de s > \frac{M_0 - \alpha_0}{\alpha_0^2}. \qedhere
    \end{equation*}
\end{proof}

The following lemma provides a lower bound for the weak concavity constant $K(t) = \alpha(t) - M(t)$. It is used at several occasions within the paper: when comparing our error bound to the strongly log-concave case in Section~\ref{sec:comp_strong}, to establish the more accessible error bound for the OU process in Theorem~\ref{thm:err_bound_OU}, and in the proof of Proposition~\ref{prop:yt-p0-bound} bounding the initialization error.

\begin{lemma}\label{lem:kt_inf}
    Let $K(t) = \alpha(t) - M(t), \, t\geq 0$. Then the following holds: 
    \begin{equation}\label{eq:kt_bound}
        K(t)     \geq - |\alpha_0 - M_0|  \, 
        \min 
        \left\{ 
        e^{ 2 \int_0^{t} f(s) ds}
        ,
        \frac{ e^{ 2 \int_0^{t} f(s) ds}}{(\alpha_0 \int_0^{t} e^{2\int_0^{s} f(v) \de v} g^2(s) \de s)^2}
        \right\} \, , 
        \qquad t \geq 0. 
    \end{equation}
    In particular, for any finite time $T>0$, it holds that 
    \begin{equation}\label{eq:kt_inf}
       \inf_{0 \leq t \leq T} K(t) \geq - \frac{|\alpha_0 - M_0|}{\alpha_0^2 \wedge 1}
       \xi(T),
    \end{equation}
    where $\xi(T)$ is defined in \eqref{eq:xi(T)}.

\end{lemma}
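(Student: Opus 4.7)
The starting point is to put $K(t) = \alpha(t) - M(t)$ in a workable closed form. Using the notation $c_0(t) = e^{\int_0^t f(s)\de s}$ already introduced in the proof of Proposition~\ref{prop:weak_pt} together with relation \eqref{eq:c0-c1-rel}, so that $c_0^2(t) c_1(t) = \int_0^t e^{2\int_0^s f(v)\de v} g^2(s)\de s$, I would substitute the formulas \eqref{eq:alpha_def}, \eqref{eq:M_def} into $K(t) = \alpha(t) - M(t)$ and combine over the common denominator $(1 + \alpha_0 c_0^2(t) c_1(t))^2$ to obtain
\begin{equation*}
K(t) \;=\; \frac{c_0^2(t)\bigl[(\alpha_0 - M_0) + \alpha_0^2\, c_0^2(t) c_1(t)\bigr]}{\bigl(1 + \alpha_0\, c_0^2(t) c_1(t)\bigr)^2}.
\end{equation*}
If $\alpha_0 \geq M_0$, then $K(t) \geq 0$ and \eqref{eq:kt_bound} is trivial, so only the case $\alpha_0 < M_0$ remains.

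For the pointwise bound \eqref{eq:kt_bound}, I would drop the non-negative summand $\alpha_0^2 c_0^2(t) c_1(t)$ in the numerator to get $K(t) \geq -\,c_0^2(t)|\alpha_0 - M_0|\,/(1 + \alpha_0 c_0^2(t) c_1(t))^2$, and then use the two elementary inequalities $(1+x)^2 \geq 1$ and $(1+x)^2 \geq x^2$ (applied with $x = \alpha_0\, c_0^2(t) c_1(t) \geq 0$) to deduce
\begin{equation*}
\frac{c_0^2(t)}{(1 + \alpha_0 c_0^2(t) c_1(t))^2} \;\leq\; \min\!\left\{ c_0^2(t),\; \frac{c_0^2(t)}{\bigl(\alpha_0\, c_0^2(t)c_1(t)\bigr)^{2}}\right\}.
\end{equation*}
Re-expressing $c_0^2(t)$ and $c_0^2(t)c_1(t)$ in terms of $f,g$ then gives exactly \eqref{eq:kt_bound}.

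Passing to the infimum over $t \in [0,T]$, the only subtlety is reconciling the factor $\alpha_0$ inside the second term of the $\min$ with the definition of $\xi(T)$, which carries no $\alpha_0$. I would factor $\alpha_0^{-2}$ out of the second term and split into two cases. If $\alpha_0 \geq 1$, then $\alpha_0^{-2} \leq 1$, so $\min\{a, b\alpha_0^{-2}\} \leq \min\{a,b\}$; taking the sup gives $\xi(T)$, and $1/(\alpha_0^2 \wedge 1) = 1$. If $\alpha_0 < 1$, then $\alpha_0^{-2} > 1$, so $a \leq a\alpha_0^{-2}$ and hence $\min\{a, b\alpha_0^{-2}\} \leq \alpha_0^{-2}\min\{a,b\}$; the sup is bounded by $\xi(T)/\alpha_0^2$, and $1/(\alpha_0^2 \wedge 1) = 1/\alpha_0^2$. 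In both cases the bound collapses to \eqref{eq:kt_inf}.

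The only step requiring real care is this last case split — the algebra around inserting/removing the $\alpha_0^{-2}$ inside the $\min$ and matching it with $\alpha_0^2 \wedge 1$. Everything else is bookkeeping around Proposition~\ref{prop:weak_pt} and two applications of $1+x \geq \max\{1,x\}$ for $x \geq 0$.
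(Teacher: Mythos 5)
Your proof is correct and follows essentially the same route as the paper: both reduce to the intermediate bound $K(t) \ge -\lvert\alpha_0-M_0\rvert\, c_0^2(t)/(1+\alpha_0 c_0^2(t)c_1(t))^2$ (you via the common-denominator closed form and dropping the non-negative summand, the paper via its equivalent step \eqref{eq:kt_step}), then bound the denominator below by $1$ and by $(\alpha_0 c_0^2(t) c_1(t))^2$, and finally absorb the factor $\alpha_0^{-2}$ into $1/(\alpha_0^2\wedge 1)$ before taking suprema over $[0,T]$. The only cosmetic differences are that you handle the case $\alpha_0\ge M_0$ directly from the closed form instead of citing the log-concavity preservation result of \citet{gao2024convergence}, and that your case split on whether $\alpha_0\ge 1$ is an explicit spelling-out of the paper's one-line comparison of the two minima.
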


For example, in the OU case, for small $t$,  \eqref{eq:kt_bound} would read $K(t) \geq - |\alpha_0 - M_0| e^{2t}$. This is very tight around $t=0$, where the bound is close to the exact value $K(0) = \alpha_0 - M_0$. 
In the VP case, for large $t$, \eqref{eq:kt_bound}  reads \[
K(t) \geq - \frac{|\alpha_0 - M_0|}{\alpha_0^2} \frac{e^{\mathcal B(t)}}{(e^{\mathcal B(t)} -1)^2}, \]
which is close to zero for large $t$. This is enough for our purpose, as, intuitively, our results only require a control of $K(t) = \alpha(t) - M(t)$ when it is negative, that is when $p_t$ deviates from strong log-concavity. But, thanks to the regime shifting result in Proposition~\ref{prop:regime-shift}, we know this can only happen up to a finite time $\tau(\alpha_0, M_0)$. See also Example~\ref{ex:kt_vp} below for more details on the VP case. 

\begin{proof}[Proof of Lemma~\ref{lem:kt_inf}]
    If $\alpha_0 - M_0 > 0$, it holds that $K(t) > 0$ as a consequence of log-concavity preservation \citep[Proposition~7]{gao2024convergence} and then \eqref{eq:kt_bound} is trivially satisfied. So we only need to consider the case that $\alpha_0 - M_0 < 0$. For any $t \ge 0$, by means of simple algebra, we can write 
    \begin{align*}
        K(t) &= \frac{\alpha_0 c^2_0(t)}{1 + \alpha_0 c^2_0(t) c_1(t)} - \frac{M_0 c^2_0(t)}{\big(1 + \alpha_0 \, c^2_0(t) c_1(t)\big)^2}
        \\
        & \geq 
        \frac{\alpha_0 c^2_0(t)}{\left( 1 + \alpha_0 c^2_0(t) c_1(t)\right )^2} - \frac{M_0 c^2_0(t)}{\big(1 + \alpha_0 \, c^2_0(t) c_1(t)\big)^2}
        \\
        &= 
        - \frac{ |\alpha_0 - M_0| c_0^2(t)}{\left (1 + \alpha_0 c^2_0(t) c_1(t) \right)^2 } \numberthis \label{eq:kt_step}
        \\
        & \geq - |\alpha_0 - M_0| c_0^2(t). 
    \end{align*}
    Alternatively, starting from \eqref{eq:kt_step}, we have
     \begin{equation*}
          K(t) \geq - \frac{|\alpha_0 - M_0|}{\alpha_0^2} \frac{c_0^2(t)}{(c_0^2(t) c_1(t))^2}.
    \end{equation*}
    Finally, by combining the inequalities above we conclude:
    \begin{align*}
        K(t)     &\geq 
     - |\alpha_0 - M_0|
     \min
     \left\{ 
     c_0^2(t),
      \frac{1}{(\alpha_0 c_0(t) c_1(t))^2}
     \right\} \, \numberthis \label{eq:kt_combi1}
     \\ & \geq 
     - \frac{|\alpha_0 - M_0|}{\alpha_0^2 \wedge 1}
     \min
     \left\{ 
     c_0^2(t),
      \frac{1}{(c_0(t) c_1(t))^2}
     \right\}. \numberthis \label{eq:kt_combi2}
    \end{align*}
    Equation \eqref{eq:kt_combi1} can be rewritten as \eqref{eq:kt_bound} by recalling the definitions of $c_0$ and $c_1$ given in \eqref{eq:def_c0_c1}.
    By taking infima over $t \ge 0$ in \eqref{eq:kt_combi2}, we get \eqref{eq:kt_inf}.
\end{proof}

\begin{example}\label{ex:kt_vp}
    We derive explicit expressions for the regime-shift time 
    and the weak-concavity constant in the VP case, i.e.\ for $f(t) = \beta(t)/2$ and $g(t) = \sqrt {\beta(t)}$. 
    
    Let $\mathcal B(t) = \int_0^t \beta(s) \de s$.
    Then, from the definition \eqref{eq:tau_def} of $\tau(\alpha_0, M_0)$,  we get
    \begin{align*}
        \int_0^{t} e^{2\int_0^{s} f(v) \de v} g^2(s) \de s &> \frac{M_0 - \alpha_0}{\alpha_0^2} \\
        \Leftrightarrow\quad e^{\mathcal B(t) } -1 &> \frac{M_0 - \alpha_0}{\alpha^2_0},
    \end{align*}
    and consequently
    \begin{align*}         
    \tau(\alpha_0, M_0) &= \mathcal B^{-1}\left( \log \left(\frac{\alpha_0^2 + M_0 - \alpha_0}{\alpha_0^2} \right)\right)\,,
     \end{align*}
     where the inverse function $\mathcal B^{-1}(\cdot)$ is well-defined as $\mathcal B(\cdot)$ is continuous and strictly increasing. 
     In particular, for the Ornstein-Uhlenbeck process, i.e.\ $f(t)=1$ and $g(t) = \sqrt 2$, we have 
    \begin{align*}
        \tau(\alpha_0, M_0) = \log \sqrt{\frac{\alpha_0^2 + M_0 - \alpha_0}{\alpha_0^2}}. 
    \end{align*}

    Next, we turn to the weak concavity constant $K(t)$.
    By recalling the definition \eqref{eq:def_c0_c1} of 
    $c_0(t)$, $c_1(t)$, and by relation \eqref{eq:c0-c1-rel}, we have
    \[
    c_0^2(t) = e^{\mathcal B(t)}\, , \qquad c_0^2(t) c_1(t) = e^{\mathcal B(t)} - 1.
    \]
    Hence, from the definitions \eqref{eq:alpha_def}, \eqref{eq:M_def} 
    of $\alpha(t)$, $M(t)$ we get
\begin{equation}\label{eq:kt-vp}
    K(t) = \alpha(t) - M(t) = \frac{\alpha_0 e^{\mathcal B(t)}}{1 +\alpha_0( e^{\mathcal B(t)} -1) } -
 \frac{M_0 e^{\mathcal B(t)}}{\left(1 +\alpha_0( e^{\mathcal B(t)} -1) \right)^2 } \,, \quad t \geq 0,
\end{equation}
for positive $\alpha_0, M_0$. We remark that, as $t \to \infty$,  if $\mathcal B (t) \to \infty$,  one has $K(t) \to 1$, in agreement with the limiting standard Gaussian behavior of the forward diffusion process. If, in addition, $\alpha_0 =1 $, then $K(t)$ is guaranteed to be strictly increasing, since  $\mathcal B(t)$ is strictly increasing.  
See Figure \ref{fig:alphaM} for a graphical representation of possible behaviors.
\end{example}

\begin{figure}
    \centering
    \includegraphics[width=0.5\linewidth]{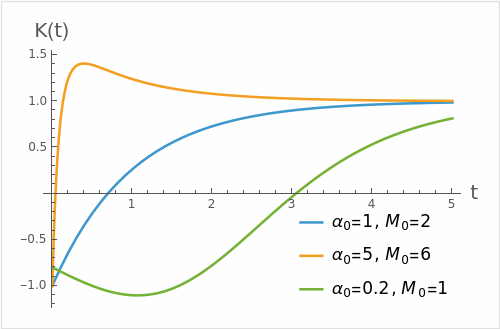}
    \caption{Plot of $K(t) = \alpha(t) - M(t), \, t\geq 0$ for different values of $\alpha_0, M_0$, in the OU case.}
    \label{fig:alphaM}
\end{figure}

\subsection{Propagation in time of Lipschitz continuity}

Next, we present the proof of Proposition~\ref{prop:weak_Lip} which establishes the Lipschitz smoothness of $\log p_t$ given that Assumption \ref{ass:weak_concave_Lip} holds, i.e.\ assuming that $\log p_0$ is $(\alpha_0, M_0)$-weakly concave and $L_0$-smooth.

\begin{proof}[Proof of Proposition~\ref{prop:weak_Lip}]
    We use similar arguments as in the proof of \cite[Lemma~9]{gao2025wasserstein}. With a change of variable, we can rewrite \eqref{eq:pt_conv} as
    \begin{equation*}
        p_{t}(x) = \qty(c_0(t))^d \int (2\pi c_1(t))^{-d/2} \exp \qty(- \frac{\norm{x - y}^2}{2c_1(t)}) p_0(c_0(t) y) dy
    \end{equation*}
    with $c_0(t)$ and $c_1(t)$ defined in \eqref{eq:def_c0_c1}. Letting
    \begin{equation*}
        q^t_0(x) \coloneqq p_0(c_0(t)x), \quad
        q^t_1(x) \coloneqq (2\pi c_1(t))^{-d/2} \exp \qty(- \frac{\norm{x}^2}{2c_1(t)}),
    \end{equation*}
    and $q_0^t \ast q_1^t$ denote their convolution, this implies that
    \begin{equation*}
        \nabla^2 \log p_t(x) = \nabla^2 \log \qty(q_0^t \ast q_1^t)(x).
    \end{equation*}
    We further define $\varphi_k^t = -\log q_k^t$ for $k \in \{0,1\}$. An intermediate result of \citet[Proposition~7.1]{saumard2014log}, that does not make use of the strong log-concavity assumption, yields
    \begin{align*}
        \nabla^2 (-\log p_{t})(z) &= -\Var{\nabla \varphi_0^t(X) | X+Y=z} + \E{\nabla^2 \varphi_0^t(X) | X+Y = z}\\
        &= -\Var{\nabla \varphi_1^t(Y) | X+Y=z} + \E{\nabla^2 \varphi_1^t(Y) | X+Y = z}.
    \end{align*}
    Let $v \in \R^d$. By Cauchy-Schwartz inequality and the $L_0$-Lipschitz continuity of $\nabla \log p_0$, we have
    \begin{align*}
        v^t \nabla^2 \varphi_0^t(x) v &= \lim_{t \to 0} \frac{\dotprod{\nabla \varphi_0^t(x+tv) - \nabla \varphi_0^t(x)}{v}}{t}\\
        &\le \lim_{t \to 0} \frac{\norm{\nabla \varphi_0^t(x+tv) - \nabla \varphi_0^t(x)} \cdot \norm{v}}{t}\\
        &= \lim_{t \to 0} \frac{\abs{c_0(t)} \cdot \norm{\nabla \log p_0^t(c_0(t)(x+tv)) - \nabla \log p_0^t(c_0(t)x)} \cdot \norm{v}}{t}\\
        &\le \lim_{t\to 0} \frac{\abs{c_0(t)} \cdot L_0 \norm{c_0(t)tv}\cdot \norm{v}}{t}\\
        &= c_0^2(t) L_0 \norm{v}^2\\
        &= v^T c_0^2(t) L_0 I_d v.
    \end{align*}
    Hence, for all $x \in \R^d$,
    \begin{equation*}
        \nabla^2 \varphi_0^t(x) \preceq c_0^2(t) L_0 I_d.
    \end{equation*}
    Moreover, recall that
    \begin{equation*}
        \varphi_1^t(x) = - \log q_1^t(x) = \frac{d}{2} \log\qty(2\pi c_1(t)) + \frac{\norm{x}^2}{2c_1(t)}
    \end{equation*}
    and thus
    \begin{equation*}
        \nabla^2 \varphi_1^t(x) = \frac{1}{c_1(t)} I_d
    \end{equation*}
    for all $x \in \R^d$.
    Since covariance matrices are always positive semi-definite, this finally leads to 
    \begin{equation*}
        \nabla^2 (-\log p_t)(z) \preceq \min\left\{\frac{1}{c_1(t)}, c_0^2(t) L_0\right\} I_d.
    \end{equation*}
    
    Note that from $A \preceq B$, we cannot directly follow that $\norm{A} \le \norm{B}$. However, if $C \preceq A \preceq B$, then we have $\norm{A} \le \max\{\norm{B}, \norm{C}\}$. In particular, $0 \preceq A \preceq B$ implies $\norm{A} \le \norm{B}$. This result can be easily proven using the fact that the (spectral) norm of a symmetric matrix is given by its largest absolute eigenvalue. 
    
    From Proposition~\ref{prop:weak_pt} together with Lemma~\ref{lem:vag_hessian}, we get
    \begin{equation*}
        \nabla^2 (-\log p_{t})(z) \succeq (\alpha(t) - M(t)) I_d.
    \end{equation*}
    In case $\alpha(t) - M(t) > 0$, this yields
    \begin{align*}
        \norm{\nabla^2 \log p_{t}(x)} &\le \min\left\{ \frac{1}{c_1(t)}, c_0^2(t) L_0 \right\}\\
        &= \max \left\{ \min\left\{ \frac{1}{c_1(t)}, c_0^2(t) L_0 \right\}, -\qty(\alpha(t) - M(t)) \right\} = L(t).
    \end{align*}
    If, on the other hand, $\alpha(t) - M(t) < 0$, it follows that
    \begin{align*}
        \norm{\nabla^2 \log p_{t}(x)} &\le \max \left\{ \min\left\{ \frac{1}{c_1(t)}, c_0^2(t) L_0 \right\}, \abs{\alpha(t) - M(t)} \right\}\\
        &= \max \left\{ \min\left\{ \frac{1}{c_1(t)}, c_0^2(t) L_0 \right\}, -\qty(\alpha(t) - M(t)) \right\} = L(t). \qedhere
    \end{align*}
\end{proof}

The following lemma provides an upper bound for the Lipschitz constant $L(t)$. It is used when comparing our error bound to the strongly log-concave case in Section~\ref{sec:comp_strong} and to establish the more accessible error bound for the OU process in the proof of Theorem~\ref{thm:err_bound_OU}.

\begin{lemma}[Upper bound for $L(t)$]
    \label{lem:Lt_bound}
    It holds that
    \begin{equation*}
        \sup_{0 \le t \le T} L(t) \le \max\left\{L_0 \vee 1, \frac{\abs{\alpha_0 - M_0}}{\alpha_0^2 \wedge 1}\right\} \eta(T),
    \end{equation*}
    where 
    \begin{equation} \label{eq:eta}
        \eta(T) \coloneqq \sup_{0 \le t \le T} \min \left\{ e^{2 \int_0^{t} f(s) ds}, \frac{1}{\int_0^{t} e^{-2\int_s^{t} f(v)dv} g^2(s) ds} \right\}.
    \end{equation}
    Moreover, for $\xi(T)$ defined in Lemma~\ref{lem:kt_inf}, we have $\xi(T) \le \eta(T)$.
\end{lemma}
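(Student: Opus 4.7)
The plan is to bound each of the three terms inside $L(t) = \max\{\min\{1/c_1(t),\, c_0^2(t)L_0\},\, -(\alpha(t)-M(t))\}$ separately by a multiple of $\eta(T)$, where $c_0(t) = e^{\int_0^t f(s)\de s}$ and $c_1(t) = \int_0^t e^{-2\int_s^t f(v)\de v} g^2(s)\de s$, so that $\eta(T) = \sup_{0 \le t \le T} \min\{c_0^2(t),\, 1/c_1(t)\}$.

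First I would handle the min-term. A simple case split on whether $L_0 \ge 1$ or $L_0 < 1$ gives the pointwise estimate
\begin{equation*}
    \min\{1/c_1(t),\, c_0^2(t) L_0\} \;\le\; (L_0 \vee 1)\, \min\{1/c_1(t),\, c_0^2(t)\},
\end{equation*}
so taking $\sup_{0 \le t \le T}$ bounds this contribution by $(L_0 \vee 1)\,\eta(T)$. Next, for the second entry of the max, Lemma~\ref{lem:kt_inf} yields directly
\begin{equation*}
    \sup_{0 \le t \le T} \bigl(-(\alpha(t)-M(t))\bigr) \;\le\; \frac{|\alpha_0 - M_0|}{\alpha_0^2 \wedge 1}\,\xi(T).
\end{equation*}

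The main step, and the comparison stated in the second assertion of the lemma, is then to show $\xi(T) \le \eta(T)$. Using the identity $c_0^2(t) c_1(t) = \int_0^t e^{2\int_0^s f(v)\de v} g^2(s)\de s$ already exploited in the proof of Proposition~\ref{prop:regime-shift}, $\xi(T)$ can be rewritten as $\sup_{0 \le t \le T} \min\{c_0^2(t),\, 1/(c_0^2(t) c_1^2(t))\}$. Setting $A = c_0^2(t)$, $B = 1/c_1(t)$, so that $1/(c_0^2(t) c_1^2(t)) = B^2/A$, the task reduces to the pointwise inequality
\begin{equation*}
    \min\{A,\, B^2/A\} \;\le\; \min\{A,\, B\},
\end{equation*}
which follows by a two-line case split on whether $A \le B$ or $A > B$ (in the latter case $B^2/A < B$). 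This is the key algebraic step, and although elementary, it is the piece I expect to require the most care, since the two expressions are not obviously comparable before the substitution.

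Combining the three bounds and taking the maximum over the two entries in the outer max defining $L(t)$, we obtain
\begin{equation*}
    \sup_{0 \le t \le T} L(t) \;\le\; \max\Bigl\{ (L_0 \vee 1),\, \tfrac{|\alpha_0 - M_0|}{\alpha_0^2 \wedge 1}\Bigr\}\, \eta(T),
\end{equation*}
which concludes the proof.
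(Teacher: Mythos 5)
Your proposal is correct and follows essentially the same route as the paper: decompose $L(t)$ into the two entries of the max, bound $\min\{1/c_1(t), c_0^2(t)L_0\}$ by $(L_0\vee 1)\eta(T)$, invoke Lemma~\ref{lem:kt_inf} for $-(\alpha(t)-M(t))$, and finish by proving $\xi(T)\le\eta(T)$. Your substitution $A=c_0^2(t)$, $B=1/c_1(t)$ reducing the last step to $\min\{A,B^2/A\}\le\min\{A,B\}$ is just a cleaner packaging of the same elementary case split the paper carries out directly.
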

\begin{proof}[Proof of Lemma~\ref{lem:Lt_bound}]
    Using the definition of $c_0(t)$ and $c_1(t)$ in \eqref{eq:def_c0_c1}, Proposition~\ref{prop:weak_Lip} states that
    \begin{align*}
        L(t) = \max \left\{ \min\left\{ \frac{1}{c_1(t)}, c_0^2(t) L_0\right\}, -(\alpha(t)-M(t))\right\}.
    \end{align*}
    By Lemma~\ref{lem:kt_inf}, it holds that
    \begin{align*}
        \sup_{0 \le t \le T} -(\alpha(t)-M(t)) \le \frac{\abs{\alpha_0 - M_0}}{\alpha_0^2 \wedge 1} \xi(T)
    \end{align*}
    with (see \eqref{eq:kt_combi2})
    \begin{equation*}
        \xi(T) = \sup_{0 \le t \le T} \min \left\{ c_0^2(t), \frac{1}{(c_0(t) c_1(t))^2} \right\}.
    \end{equation*}
    Furthermore, we have
    \begin{align*}
         \sup_{0 \le t \le T} \min\left\{ \frac{1}{c_1(t)}, c_0^2(t) L_0\right\} &\le (L_0 \vee 1) \sup_{0 \le t \le T} \min\left\{ \frac{1}{c_1(t)}, c_0^2(t) \right\}\\
         &= (L_0 \vee 1) \eta(T).
    \end{align*}
    The result follows if we can show that $\xi(T) \le \eta(T)$.
    For that, consider $t \ge 0$ for which
    \begin{equation*}
        \frac{1}{c_1(t)} \le c_0^2(t).
    \end{equation*}
    For those values of $t$, it also holds that
    \begin{equation*}
        \frac{1}{(c_0(t) c_1(t))^2} = \frac{1}{c_0^2(t)} \qty(\frac{1}{c_1(t)})^2 \le \frac{1}{c_0^2(t)} \qty(c_0^2(t))^2 = c_0^2(t)
    \end{equation*}
    and
    \begin{equation*}
        \frac{1}{(c_0(t) c_1(t))^2} = \frac{1}{c_0^2(t)} \frac{1}{c_1(t)} \frac{1}{c_1(t)} \le \frac{1}{c_0^2(t)} \frac{1}{c_1(t)} c_0^2(t) = \frac{1}{c_1(t)}.
    \end{equation*}
    It follows that
    \begin{equation*}
        \min \left\{ c_0^2(t), \frac{1}{(c_0(t) c_1(t))^2} \right\} \le \min\left\{ \frac{1}{c_1(t)}, c_0^2(t) \right\}
    \end{equation*}
    and consequently $\xi(T) \le \eta(T)$.
\end{proof}

\section{Error bound for the Ornstein-Uhlenbeck process}
\label{sec:app_OU}

In this section, we derive the explicit error bound given in Theorem~\ref{thm:err_bound_OU} for the specific case of $f(t) \equiv 1$ and $g(t) \equiv \sqrt{2}$, resulting in the OU process.
Many quantities simplify in this case. In particular, the bounds for the different error types in Theorem~\ref{thm:main_thm} read
\begin{align}
    E_0(f,g,T) &= C(\alpha_0, M_0) e^{- \int_0^T \abs{\alpha(t) - M(t)} \de t} \norm{X_0}_{L_2}, \label{eq:OU_E0} \\
    E_1(f,g,K,h) &= \sum_{k=1}^K \qty(\prod_{j=k+1}^K \gamma_{j,h}) e^{T-t_k} \nonumber\\
    &\qquad\quad \cdot \Bigg( L_1 h \big( 1 + \theta(T) + \omega(T) \big) \qty(e^h -1) \nonumber\\
    &\qquad\qquad+ \sqrt{h} \nu_{k,h} \qty(\intk e^{2(t_k-t)} L^2(T-t) \de t)^{\half}\Bigg), \label{eq:OU_E1}\\
    E_2(f,g,K,h,\mathcal{E}) &= \sum_{k=1}^K \qty(\prod_{j=k+1}^K \gamma_{j,h}) e^{T-t_k} \mathcal{E} \qty(e^h -1). \label{eq:OU_E2}
\end{align}
To prove Theorem~\ref{thm:err_bound_OU}, we further simplify these terms in order to arrive at an interpretable error bound clearly indicating the dependence on the parameters $T$, $h$, and $\mathcal{E}$.

\begin{proof}[Proof of Theorem~\ref{thm:err_bound_OU}]
    Using the substitution $B(t) = 1 + \alpha_0 (e^{2t} - 1)$, we can write
    \begin{align*}
        &\int_{t_k}^{T} \alpha(T-t) - M(T-t) \de t\\
        &\quad= \int_{0}^{T-t_k} \alpha(t) - M(t) \de t\\
        &\quad= \int_{0}^{T-t_k} \frac{\alpha_0e^{2t}}{1 + \alpha_0\qty(e^{2t}-1)} - \frac{M_0 e^{2t}}{\qty(1 + \alpha_0\qty(e^{2t}-1))^2} \de t\\
        &\quad= \half \int_{0}^{T-t_k} \frac{B'(t)}{B(t)} - \frac{M_0}{\alpha_0}\frac{B'(t)}{B^2(t)} \de t\\
        &\quad= \half \qty[\log(B(T-t_k)) - \log(B(0)) - \frac{M_0}{\alpha_0} \qty(\frac{1}{B(0)} - \frac{1}{B(T-t_k)})]\\
        &\quad= \half \log \qty(1 + \alpha_0\qty(e^{2(T-t_k)}-1)) - \frac{M_0}{2\alpha_0} \qty(1 - \frac{1}{1 + \alpha_0\qty(e^{2(T-t_k)}-1)}) \numberthis \label{eq:OU_alpha_M}
    \end{align*}
    For the initialization error \eqref{eq:OU_E0}, this yields
    \begin{align*}
        E_0(f,g,T) &\le C(\alpha_0, M_0) e^{- \int_0^T \alpha(t) - M(t) \de t} \norm{X_0}_{L_2}\\
        &= C(\alpha_0, M_0) \qty(1 + \alpha_0\qty(e^{2T}-1))^{-\half} e^{ \frac{M_0}{2\alpha_0} \qty(1 - \frac{1}{1 + \alpha_0\qty(e^{2T}-1)})} \norm{X_0}_{L_2}\\
        &= \mathcal{O}\qty(e^{-T} \norm{X_0}_{L_2})
    \end{align*}

    Next, we turn to the discretization error \eqref{eq:OU_E1}.
    According to the definitions \eqref{eq:omega(T)} and \eqref{eq:eta}, we have
    \begin{equation} \label{eq:OU_omega}
        \omega(T) = \sup_{0 \leq t \leq T} \left( e^{-2t} \|X_0\|^2_{L^2} + d (1 - e^{-2t}) \right)^\frac 12 \leq \|X_0\|_{L^2} + \sqrt d
    \end{equation}
    and
    \begin{equation*}    
        \eta(T) = \sup_{0 \leq t \leq T } \min \left\{
            e^{2t}, \frac 1{1 - e^{-2t}}
            \right \}  
            = \begin{cases}
                e^{2T} & T < \log \sqrt 2 \, \\
                2 & T \geq \log \sqrt 2 \, 
            \end{cases} \leq 2.
    \end{equation*}
    By Lemma~\ref{lem:kt_inf} and \ref{lem:Lt_bound}, it follows that
    \begin{align*}
        \sup_{0 \le t \le T} L(t) &\le \max\left\{(L_0 \vee 1) \eta(T), \frac{\abs{\alpha_0 - M_0}}{\alpha_0^2 \wedge 1} \xi(T) \right\} \le 2 \mathfrak a_0, \numberthis \label{eq:OU_L(t)_bound}\\
        \sup_{0 \leq t \leq T} -(\alpha(t) - M(t)) &\leq \frac{|\alpha_0 - M_0|}{\alpha_0^2 \wedge 1} \xi(T) \leq 2 \mathfrak a_0, \label{OU_K(t)_bound}
    \end{align*}
    where we define
    \begin{equation*}
        \mathfrak a_0 \coloneqq \max\left\{(L_0 \vee 1), \frac{\abs{\alpha_0 - M_0}}{\alpha_0^2 \wedge 1}\right\}. %
    \end{equation*}
    The upper bound for $L(t)$ in \eqref{eq:OU_L(t)_bound} together with the definition of $\nu_{k,h}$ in \eqref{eq:nu_kh} as well as Lemma~\ref{lem:theta_bound} further yields
    \begin{align*}
        \nu_{k,h} &= h \cdot \Big( \big(\theta(T) + \omega(T)\big) \big( 1 + L(T - t_k)\big) + \,  L_1(T+h) + \norm{\nabla \log p_0(\boldsymbol{0})} \Big)\\ 
        &\leq h \cdot \Big( \big(\sqrt{C(\alpha_0, M_0)} \norm{X_0}_{L_2} + \|X_0\|_{L^2} + \sqrt d\big) \big(1 + 2 \mathfrak a_0\big) + L_1(T + h) + \norm{\nabla \log p_0(\boldsymbol{0})}\Big)\\
        &= \mathcal{O}\qty( h \big(\norm{X_0}_{L_2} + \sqrt d + T\big)) \numberthis \label{eq:OU_nu}
    \end{align*}
    and
    \begin{equation} \label{eq:OU_E1_int}
        \intk e^{2(t_k-t)} L^2(T-t) \de t \le 4\mathfrak{a}_0^2 e^{2t_k} \intk e^{-2t} \de t = 2 \mathfrak{a}_0^2 \qty(e^{2h} - 1)
    \end{equation}
    Moreover, since $1-x \le e^{-x}$ for all $x$, we have
    \begin{align*}
        \prod_{j=k+1}^K \gamma_{j,h} %
        &= \prod_{j=k+1}^K \qty(1 - \intj \delta_j(T-t) \de t + L_1 h^2)\\
        &\le \prod_{j=k+1}^K \exp\qty(- \intj \delta_j(T-t) \de t + L_1 h^2)\\
        &= \exp\qty(- \sum_{j=k+1}^K \intj \delta_j(T-t) \de t) \exp\qty((K-k)L_1h^2) \numberthis \label{eq:OU_gamma}
    \end{align*}
    Further, we can compute
    \begin{align*}
        \sum_{j=k+1}^K \intj \delta_j(T-t) \de t &= \sum_{j=k+1}^K \intj e^{-(t-t_{j-1})} \big(\alpha(T-t) - M(T-t)\big) - \half h L^2(T-t) \de t\\
        &= \sum_{j=k+1}^K \intj e^{-(t-t_{j-1})} \big(\alpha(T-t) - M(T-t)\big) \de t\\
        &\qquad\quad - \half h \int_{t_k}^{T} L^2(T-t) \de t\\
        &\ge e^{-h} \int_{t_k}^{T} \alpha(T-t) - M(T-t) \de t - \half h \int_{t_k}^{T} L^2(T-t) \de t. \numberthis \label{eq:OU_delta}
    \end{align*}
    Combining \eqref{eq:OU_alpha_M}, \eqref{eq:OU_gamma}, \eqref{eq:OU_delta} and using the upper bound for $L(t)$ given in \eqref{eq:OU_L(t)_bound}, we get
    \begin{align*}
        \prod_{j=k+1}^K \gamma_{j,h} &\le \exp\qty(-e^{-h} \half \log \qty(1 + \alpha_0\qty(e^{2(T-t_k)}-1)) - e^{-h} \frac{M_0}{2\alpha_0} \qty(1 - \frac{1}{1 + \alpha_0\qty(e^{2(T-t_k)}-1)}))\\
        &\qquad \cdot \exp\qty(\half h \int_{t_k}^{T} L^2(T-t) \de t + (K-k) L_1 h^2)\\
        &\le \qty(1 + \alpha_0\qty(e^{2(T-t_k)}-1))^{-\half e^{-h}} \cdot \exp\qty(-e^{-h} \frac{M_0}{2\alpha_0} \qty(1 - \frac{1}{1 + \alpha_0\qty(e^{2(T-t_k)}-1)}))\\
        &\qquad \cdot \exp\qty(\mathfrak{a}_0 (T-t_k) h + L_1 (T-t_k) h)\\
        &= \mathcal{O}\qty(e^{-(T-t_k)e^{-h}} \cdot \exp(-e^{-h} \qty(1 - \frac{1}{e^{T-t_k}})) \cdot \exp\qty((T-t_k)h)).
    \end{align*}
    \allowdisplaybreaks
    From this result together with the upper bounds given in Lemma~\ref{lem:theta_bound}, \eqref{eq:OU_omega}, \eqref{eq:OU_nu}, and \eqref{eq:OU_E1_int}, it follows for the discretization error \eqref{eq:OU_E1} that
    \begin{align*}
        &E_1(f,g,K,h)\\
        &\quad\le \sum_{k=1}^K \mathcal{O}\qty(e^{-(T-t_k)e^{-h}} \cdot \exp(-e^{-h} \qty(1 - \frac{1}{e^{T-t_k}})) \cdot \exp\qty((T-t_k)h)) e^{T-t_k}\\
        &\quad\qquad\quad \cdot \Bigg( L_1 h \big( 1 + \mathcal{O}(\norm{X_0}_{L_2}) + \mathcal{O}(\norm{X_0}_{L_2} + \sqrt{d}) \big) \qty(e^h -1)\\
        &\quad\qquad\qquad+ \sqrt{h} \mathcal{O}\qty( h \big(\norm{X_0}_{L_2} + \sqrt d + T\big)) \Big(\mathcal{O}\qty(e^{2h} - 1)\Big)^{\half}\Bigg).\\
        &\quad= \sum_{k=1}^K \mathcal{O}\qty(e^{(T-t_k)(1-e^{-h})} \cdot \exp(-e^{-h} \qty(1 - \frac{1}{e^{T-t_k}})) \cdot e^{(T-t_k)h})\\
        &\quad\qquad\quad \cdot \qty(\mathcal{O}\qty(h (\norm{X_0}_{L_2} + \sqrt{d}) (e^h -1)) + \sqrt{h} \mathcal{O}\qty( h \big(\norm{X_0}_{L_2} + \sqrt d + T\big)) \Big(\mathcal{O}\qty(e^{2h} - 1)\Big)^{\half})\\
        &\quad\le \mathcal{O}\qty(K \cdot e^{T(1-e^{-h})} \cdot \exp\qty(-e^{-h} \qty(1 - e^{-T})) \cdot e^{Th})\\
        &\quad\qquad\quad \cdot \qty(\mathcal{O}\qty(h (\norm{X_0}_{L_2} + \sqrt{d}) (e^h -1)) + \sqrt{h} \mathcal{O}\qty( h \big(\norm{X_0}_{L_2} + \sqrt d + T\big)) \Big(\mathcal{O}\qty(e^{2h} - 1)\Big)^{\half}).
    \end{align*}
    The fact that $\mathcal{O}\qty(e^{ah}-1) = \mathcal{O}(h)$ for any $a > 0$ and $\mathcal{O}\qty(1 - e^{-T}) = \mathcal{O}(1)$ further simplifies the expression on the right-hand side, finally yielding
    \begin{align*}
        E_1(f, g, K, h) &\le \mathcal{O}\qty(K \cdot e^{Th} \cdot \exp\qty(-e^{-h}) \cdot e^{Th}) \cdot \mathcal{O}\qty(h^2 \big(\norm{X_0}_{L_2} + \sqrt d + T\big)))\\
        &= \mathcal{O}\qty(e^{Th} Th \qty(\norm{X_0}_{L_2}+\sqrt{d}+T)).
    \end{align*}
    
    Similarly, we get for the propagated score matching error \eqref{eq:OU_E2}
    \begin{align*}
        E_2(f,g,K,h,M) &= \mathcal{O}\qty(K \cdot e^{T(1-e^{-h})} \cdot \exp\qty(e^{-h} \qty(e^{-T} - 1)) \cdot e^{Th}) \mathcal{E} \qty(e^h -1)\\
        &= \mathcal{O}\qty(e^{Th} T \mathcal{E}). \qedhere
    \end{align*}
\end{proof}

\section{Interpretation of the main result}
\label{sec:app_interpret}

As $\gamma_{k,h}$ plays the role of a contraction rate for the discretization and propagated score matching error, i.e.\ the $L_2$-distance between $Y_{t_k}$ and $\widehat{Z}_{t_k}$ (see Proposition~\ref{prop:yt-zt-bound}), it is crucial to investigate whether or when it lies between 0 and 1. The following proposition establishes a regime shifting result (similar to Proposition~\ref{prop:regime-shift}) for this contraction rate.

\begin{proposition}[Regime shift for $\gamma_{k,h}$]
\label{prop:gamma_khT}
    Assuming %
    \begin{equation*}
        h < \bar{h} \coloneqq \min \left\{ \frac{\log(2)}{\max_{0 \le t \le T} f(t)}, \min_{t > \tau(\alpha_0, M_0)} \left\{ \frac{\frac{1}{4} g^2(t) (\alpha(t) - M(t))}{\frac{1}{8} g^4(t) L^2(t) + \half L_1 g^2(t)} \right\} \right\},
    \end{equation*}
    we have
    \begin{equation*}
        \begin{cases}
            \gamma_{k,h} \in (0,1), & k \in \left\{1, 2, \dots, \left\lfloor  K - \frac{\tau(\alpha_0, M_0)}{h} \right\rfloor \right\}
            \\
            \gamma_{k,h} > 1, & k \in \left\{\left\lceil K - \frac{\tau(\alpha_0, M_0)}{h} \right\rceil + 1, \dots, K-1, K\right\}.
        \end{cases}
    \end{equation*}
    Moreover, it holds that for $\tilde{T} = (K+\ell)h$, $\tilde{\gamma}_{k+\ell,h} =  \gamma_{k, h}$.
\end{proposition}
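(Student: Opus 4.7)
The plan is to read off the sign of $\gamma_{k,h}-1$ from the sign of $\alpha(T-t) - M(T-t)$ on the subinterval $[t_{k-1}, t_k]$, which by Proposition~\ref{prop:regime-shift} flips at the regime-shift time $\tau(\alpha_0, M_0)$. In reverse time this corresponds to $t = T - \tau$, hence to the discrete index $k^* := K - \tau/h$. For $k \le \lfloor k^* \rfloor$ we obtain the \emph{contractive} regime: $T-t \ge \tau$ throughout $[t_{k-1},t_k]$, so $\alpha(T-t) - M(T-t) > 0$. For $k \ge \lceil k^* \rceil + 1$ we obtain the \emph{expansive} regime: $t_{k-1} \ge T - \tau$, so $T-t \le \tau$ throughout and $\alpha(T-t) - M(T-t) \le 0$.

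In the expansive case, both summands composing $\delta_k(T-t)$ are non-positive, so $\intk \delta_k(T-t)\de t \le 0$. Combined with the strictly positive correction $\half L_1 h \intk g^2(T-t)\de t > 0$, this immediately yields $\gamma_{k,h} > 1$ and this case is done.

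The contractive case is where the hypothesis on $\bar h$ is actually used. The goal is the pointwise inequality $\delta_k(T-t) > \half L_1 h g^2(T-t)$ on $[t_{k-1}, t_k]$, which upon integration gives $\gamma_{k,h} < 1$. The two parts of $\bar h$ are designed exactly for this step: first, $h < \log(2)/\max_{[0,T]} f$ forces the prefactor $e^{-\int_{t_{k-1}}^t f(T-s) \de s} > 1/2$, so the leading summand of $\delta_k$ is bounded below by $\frac{1}{4} g^2 (\alpha - M)$; second, $h < \min_{t>\tau} \frac{\frac{1}{4} g^2(\alpha - M)}{\frac{1}{8} g^4 L^2 + \half L_1 g^2}$ is exactly the clean pointwise comparison $\frac{1}{4} g^2(\alpha - M) > \frac{1}{8} h g^4 L^2 + \half L_1 h g^2$ needed to absorb the two remaining correction terms. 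Positivity $\gamma_{k,h} > 0$ follows from a crude estimate $\intk \delta_k(T-t) \de t \le \half h \sup_{[0,T]} g^2 (\alpha - M)^+ = O(h)$, which stays below $1$ throughout the admissible range of $h$.

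For the shift identity $\tilde\gamma_{k+\ell, h} = \gamma_{k,h}$, I would apply the change of variables $u = t - \ell h$ to every integral defining $\tilde\gamma_{k+\ell,h}$. This maps $[\tilde t_{k+\ell-1}, \tilde t_{k+\ell}]$ onto $[t_{k-1}, t_k]$, and since $\tilde T = (K+\ell) h$ gives $\tilde T - t = T - u$ (and analogously $\tilde T - s = T - (s - \ell h)$ inside the nested exponential), every factor $g$, $f$, $\alpha - M$, $L$ evaluated at the reverse time transforms into the corresponding factor of $\gamma_{k,h}$. The identity then follows by direct comparison of the two expressions. The main delicacy of the proof is the contractive regime, whose pointwise-to-integral argument dictates the two-part structure of $\bar h$; the other two claims reduce to routine manipulations.
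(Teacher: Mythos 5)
Your proposal follows the paper's proof almost step for step: the sign of $\gamma_{k,h}-1$ is read off from the sign of $\alpha(T-t)-M(T-t)$ via the regime shift (with the same index bookkeeping $T-t_k\ge\tau$ resp.\ $T-t_{k-1}\le\tau$), the two parts of $\bar h$ are used exactly as in the paper (the $\log 2$ condition to force $e^{-\int_{t_{k-1}}^{t} f(T-s)\de s}>\half$, the second condition to absorb $\frac18 h g^4L^2+\half L_1 h g^2$ and get the pointwise bound $\delta_k(T-t)>\half L_1 h\, g^2(T-t)$, hence $\gamma_{k,h}<1$ after integrating), and the shift identity $\tilde\gamma_{k+\ell,h}=\gamma_{k,h}$ is the same change of variables the paper carries out.

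The one point where you deviate, and where your write-up has a gap, is the positivity claim $\gamma_{k,h}>0$. The paper does not re-derive it: it imports $1-\intk\delta_k(T-t)\de t\ge 0$ from inequality \eqref{eq:S1_bound} in the proof of Proposition~\ref{prop:yt-zt-bound}, where this nonnegativity was already established, and then adds the positive term $\half L_1 h\intk g^2(T-t)\de t$. Your crude estimate $\intk\delta_k(T-t)\de t\le \half h\sup_{0\le t\le T} g^2(t)\,(\alpha(t)-M(t))^{+}$ is fine, but the assertion that this ``stays below $1$ throughout the admissible range of $h$'' does not follow from the statement of $\bar h$ without a further argument. It is true, but only because $L(t)\ge\alpha(t)-M(t)$ on the strongly log-concave regime $t>\tau(\alpha_0,M_0)$ (e.g.\ since $\alpha(t)-M(t)\le\alpha(t)\le\min\{c_1^{-1}(t),\alpha_0 c_0^2(t)\}$ and $\alpha_0\le L_0$, or via the two-sided Hessian bounds used in the proofs of Propositions~\ref{prop:weak_pt} and~\ref{prop:weak_Lip}); this yields $\bar h\le\min_{t>\tau}\frac{\frac14 g^2(t)(\alpha(t)-M(t))}{\frac18 g^4(t)(\alpha(t)-M(t))^2}=\frac{2}{\sup_{t>\tau} g^2(t)(\alpha(t)-M(t))}$, and only then does $h<\bar h$ imply $\half h\sup_{t} g^2(t)(\alpha(t)-M(t))^{+}<1$. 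Either supply this comparison explicitly or, as the paper does, cite \eqref{eq:S1_bound}; as written, the positivity of $\gamma_{k,h}$ is asserted rather than proved.
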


\begin{proof}[Proof of Proposition~\ref{prop:gamma_khT}]
    To simplify notation, we write
    \begin{equation*}
        \gamma_{k,h} = 1 - \intkT \delta_k(t) \de t + \half L_1 h \intkT g^2(t) \de t
    \end{equation*}
    and
    \begin{equation*}
        \delta_k(t) = \half e^{-\int_t^{T-t_{k-1}} f(s) \de s} g^2(t) \big(\alpha(t) - M(t)\big) - \frac{1}{8} h g^4(t) L^2(t).
    \end{equation*}
    By definition of the regime shift, if $t < \tau(\alpha_0, M_0)$ then $\alpha(t) - M(t) < 0$, and hence $\delta_k(t) < 0$. It follows that $\gamma_{k,h} > 1$ for all $k$ with $T - t_{k-1} \le \tau(\alpha_0, M_0)$, i.e.\ $k \ge K + 1 - h^{-1} \tau(\alpha_0, M_0)$.

    On the other hand, assume that $k \le K - h^{-1} \tau(\alpha_0, M_0)$ and thus $T-t_k \ge \tau(\alpha_0, M_0)$. Note that $h \le \bar{h}$ implies that $e^{-h \max_{0 \le t \le T} f(t)} > \half $ and thus 
    \begin{equation*}
        h < \min_{t > \tau(\alpha_0, M_0)} \left\{ \frac{\half e^{-h \max_{0 \le t \le T} f(t)} g^2(t) (\alpha(t)-M(t))}{\frac{1}{8} g^4(t) L^2(t) + \half L_1 g^2(t)} \right\}.
    \end{equation*}
    It follows that for $t > \tau(\alpha_0, M_0)$, in particular $t \in [T-t_k, T-t_{k-1}]$, it holds
    \begin{equation*}
        \delta_k(t) 
        > h \cdot \qty(\frac{1}{8} g^4(t) L^2(t) + \half L_1 g^2(t)) - \frac{1}{8} h g^4(t) L^2(t)
        = \half L_1 h g^2(t),
    \end{equation*}
    and hence $\gamma_{k,h} < 1$. 
    Moreover, inequality \eqref{eq:S1_bound} in the proof of Proposition~\ref{prop:yt-zt-bound} implies that
    \begin{equation*}
        1 - \intk \delta_k(T-t) \de t \ge 0.
    \end{equation*}
    Consequently, $\gamma_{k,h} > 0$ since we assumed $g$ to be positive for all $t > 0$.

    Now, let $\tilde{T} = (K+\ell)h$. Then we have $\tilde{T} - t_{k+\ell} = T - t_k$ and hence
    \begin{align*}
        \tilde{\gamma}_{k+\ell,h} &= 1 - \int_{\tilde{T} - t_{k+\ell}}^{\tilde{T} - t_{k+\ell-1}} \tilde{\delta}_{k+\ell}(t) \de t + \half L_1 h \int_{\tilde{T} - t_{k+\ell}}^{\tilde{T} - t_{k+\ell-1}} g^2(t) \de t\\
        &= 1 - \intkT \tilde{\delta}_{k+\ell}(t) \de t + \half L_1 h \intkT g^2(t) \de t,
    \end{align*}
    and
    \begin{align*}
        \tilde{\delta}_{k+\ell}(t) &= \half e^{-\int_t^{\tilde{T}-t_{k+\ell-1}} f(s) \de s} g^2(t) \big(\alpha(t) - M(t)\big) - \frac{1}{8} h g^4(t) L^2(t)\\
        &= \half e^{-\int_t^{T-t_{k-1}} f(s) \de s} g^2(t) \big(\alpha(t) - M(t)\big) - \frac{1}{8} h g^4(t) L^2(t)\\
        &= \delta_k(t),
    \end{align*}
    which completes the proof.
\end{proof}

Note that, if $\tau(\alpha_0, M_0)$ is not evenly divisible by $h$, it is not clear whether $\gamma_{k,h}$ will be less or greater than one for $k = \left\lceil K - \frac{\tau(\alpha_0, M_0)}{h} \right\rceil$.
The second part of Proposition~\ref{prop:gamma_khT} means that, when increasing $T=Kh$ to $\tilde{T} = (K+\ell)h$ for some integer $\ell \ge 1$, we have
\begin{equation*}
    \prod_{k=1}^{K+\ell} \tilde{\gamma}_{k,h} < \prod_{k=1}^K \gamma_{k,h},
\end{equation*}
which lies at the core of the discretization error $E_1(f,g,K,h)$ defined in \eqref{eq:E1}.

The following lemma provides an upper bound for $\theta(T)$, another term involved in the discretization error $E_1$. It is used when
comparing our error bound to the strongly log-concave case in Section~\ref{sec:comp_strong} and to establish the more accessible error bound for the OU process in the proof of Theorem~\ref{thm:err_bound_OU}.

\begin{lemma}[Upper bound for $\theta(T)$]
\label{lem:theta_bound}
    It holds that
    \begin{equation*}
        \theta(T) \le \sqrt{C(\alpha_0, M_0)} \norm{X_0}_{L_2},
    \end{equation*}
    where $C(\alpha_0, M_0, T)$ is defined in \eqref{eq:C(alpha, M)}.
\end{lemma}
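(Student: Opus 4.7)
The plan is to change variables so that $\theta(T)$ becomes a supremum over a more transparent integrand, then use Proposition \ref{prop:regime-shift} to split the integral by the sign of $K(u) \coloneqq \alpha(u) - M(u)$ and apply Lemma \ref{lem:kt_inf} only on the weakly log-concave part.

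With the substitutions $u = T-s$ inside the integral and $v = T-t$ outside, the logarithm of the factor $e^{-\half \int_0^t [\,\cdots\,]\de s}\, e^{-\int_0^T f(s) \de s}$ appearing in the definition of $\theta(T)$ rewrites as
\begin{equation*}
A(v) \coloneqq -\tfrac12 \int_v^T g^2(u) K(u) \de u - \int_0^v f(s) \de s,
\end{equation*}
so that $\theta(T) = \sup_{0 \le v \le T} e^{A(v)} \norm{X_0}_{L_2}$. The term $-\int_0^v f(s) \de s$ is non-positive because $f \ge 0$, and can therefore be discarded. For the remaining term I would split the range at $\tau \coloneqq \tau(\alpha_0, M_0)$. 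By Proposition \ref{prop:regime-shift}, $K(u) \ge 0$ for $u \in [\tau, T]$, so the integral over that sub-interval is non-negative and after multiplication by $-\tfrac12$ becomes non-positive; it can also be dropped. What remains is the contribution of $u \in [v \vee 0,\, \tau \wedge T] \subseteq [0, \tau]$, on which $K(u) \le 0$, so that $-\tfrac12 g^2(u) K(u) = \tfrac12 g^2(u) |K(u)|$.

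On this interval, Lemma \ref{lem:kt_inf} combined with the monotonicity of $\xi(\cdot)$ yields the pointwise bound $|K(u)| \le \frac{|\alpha_0 - M_0|}{\alpha_0^2 \wedge 1} \xi(\tau)$, so enlarging the domain of integration to $[0, \tau]$ gives
\begin{equation*}
A(v) \le \tfrac12 \frac{|\alpha_0 - M_0|}{\alpha_0^2 \wedge 1} \xi(\tau) \int_0^\tau g^2(u) \de u = \tfrac12 \log C(\alpha_0, M_0).
\end{equation*}
Exponentiating and taking the supremum over $v$ yields $\theta(T) \le \sqrt{C(\alpha_0, M_0)} \norm{X_0}_{L_2}$. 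There is no real obstacle in the argument; the only delicate point is observing that the cross-over time produced by Proposition \ref{prop:regime-shift} coincides exactly with the integration range in the definition of $C(\alpha_0, M_0)$, so that the only integral actually requiring estimation is precisely the one whose exponential defines $C(\alpha_0, M_0)$.
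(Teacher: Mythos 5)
Your proposal is correct and follows essentially the same route as the paper's proof: both rewrite $\theta(T)$ by a change of variables, discard the non-positive $f$-contribution, split the remaining integral at $\tau(\alpha_0,M_0)$ using Proposition~\ref{prop:regime-shift}, drop the non-negative part on $[\tau(\alpha_0,M_0),T]$, and bound $|\alpha-M|$ on the pre-regime-shift interval via Lemma~\ref{lem:kt_inf}, which yields exactly $\tfrac12\log C(\alpha_0,M_0)$ in the exponent. No gaps; your handling of the edge cases (empty interval when $v>\tau(\alpha_0,M_0)$ and the monotonicity of $\xi$) is consistent with what the paper does implicitly.
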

\begin{proof}[Proof of Lemma~\ref{lem:theta_bound}]
    By the definition of $\theta(T)$ in \eqref{eq:theta(T)} and the non-negativity of $f(t)$, we have
    \begin{align*}
        \theta(T) &= \sup_{0 \le t \le T} e^{-\half \int_0^t g^2(T-s)(\alpha(T-s)-M(T-s)) - 2f(T-s) \de s} e^{- \int_0^T f(T-s) \de s} \norm{X_0}_{L_2}\\
        &= \sup_{0 \le t \le T} e^{-\half \int_0^t g^2(T-s)(\alpha(T-s)-M(T-s)) \de s} e^{- \int_t^T f(T-s) \de s} \norm{X_0}_{L_2}\\
        &\le \sup_{0 \le t \le T} e^{-\half \int_{T-t}^T g^2(s)(\alpha(s)-M(s)) \de s} \norm{X_0}_{L_2}\\
        &= \sup_{0 \le t \le T} e^{-\half \int_{t}^T g^2(s)(\alpha(s)-M(s)) \de s} \norm{X_0}_{L_2}.
    \end{align*}
    Since $\alpha(s) - M(s) > 0$ for any $s > \tau(\alpha_0, M_0)$, it follows that
    \begin{equation*}
        e^{-\half \int_{t}^T g^2(s)(\alpha(s)-M(s)) \de s} < 1
    \end{equation*}
    for all $t > \tau(\alpha_0, M_0)$, and therefore
    \begin{align*}
        \theta(T) &\le \max\left\{1, \sup_{0 \le t \le \tau(\alpha_0, M_0)} e^{-\half \int_{t}^{T} g^2(s)(\alpha(s)-M(s)) \de s} \right\} \norm{X_0}_{L_2}\\
        &= \max\left\{1, \sup_{0 \le t \le \tau(\alpha_0, M_0)} e^{-\half \int_{t}^{\tau(\alpha_0, M_0)} g^2(s)(\alpha(s)-M(s)) \de s} e^{-\half \int_{\tau(\alpha_0, M_0)}^T g^2(s)(\alpha(s)-M(s)) \de s} \right\} \norm{X_0}_{L_2}\\
        &\le \max\left\{1, \sup_{0 \le t \le \tau(\alpha_0, M_0)} e^{-\half \int_{t}^{\tau(\alpha_0, M_0)} g^2(s)(\alpha(s)-M(s)) \de s} \right\} \norm{X_0}_{L_2}\\
        &= \max\left\{1, \sup_{0 \le t \le \tau(\alpha_0, M_0)} e^{\half \int_{t}^{\tau(\alpha_0, M_0)} g^2(s) \abs{\alpha(s)-M(s)} \de s} \right\} \norm{X_0}_{L_2}\\
        &= e^{\half \int_{0}^{\tau(\alpha_0, M_0)} g^2(s) \abs{\alpha(s)-M(s)} \de s} \norm{X_0}_{L_2}.
    \end{align*}
    Using Lemma~\ref{lem:kt_inf}, we can further say that
    \begin{equation*}
        \theta(T) \le e^{\half \frac{\abs{\alpha_0 - M_0}}{\alpha_0^2 \wedge 1} \xi(\tau(\alpha_0, M_0)) \int_0^{\tau(\alpha_0, M_0)} g^2(s) \de s} \norm{X_0}_{L_2} = \sqrt{C(\alpha_0, M_0, T)} \norm{X_0}_{L_2}. \qedhere
    \end{equation*}
\end{proof}

Next, we provide the proof of Proposition~\ref{prop:asymp_same}, establishing the remarkable finding that the asymptotics of our error bound given in Theorem~\ref{thm:main_thm} are the same as under the stricter assumption of strong log-concavity.

\begin{proof}[Proof of Proposition~\ref{prop:asymp_same}]
    To analyze the differences in the asymptotics with respect to $T$, $h$, and $\mathcal{E}$ of the bound in Theorem~\ref{thm:main_thm} if $p_0$ is only weakly log-concave compared to the strongly log-concave case analyzed in \citet[Theorem~2]{gao2024convergence}, we just need to consider the consequences of the differences in the error bounds as listed in Section~\ref{sec:comp_strong}. We discuss the effect of each difference point-by-point.
    \begin{enumerate}
        \item The constant $C(\alpha_0, M_0)$ does not influence the asymptotics. For the exponential term in the initialization error, we have
        \begin{align*}
            e^{- \int_0^T g^2(t)|\alpha(t) - M(t)| \de t} &\le e^{-\int_0^T g^2(t) \alpha(t) \de t} \cdot e^{\int_0^T g^2(t) M(t) \de t}.
        \end{align*}
        So, in order to identify the difference to the strongly log-concave case, we need to analyze the second coefficient involving $M(t)$. For a VE-SDE, i.e.\ $f(t) \equiv 0$, we have
        \begin{align*}
            \int_0^T g^2(t) M(t) \de t &= \int_0^T \frac{M_0 g^2(t)}{\qty(1+\alpha_0\int_0^t g^2(s) \de s)^2} \de t\\
            &= - \frac{M_0}{2 \alpha_0} \qty(\frac{1}{1 + \alpha_0\int_0^T g^2(s) \de s}-1)\\
            &= \mathcal{O}\qty(1 - \frac{1}{\int_0^T g^2(s) \de s}) = \mathcal{O}(1), \numberthis \label{eq:int_M_VE}
        \end{align*}
        where we used the fact that $g$ is positive and $T$ diverges.
        In the VP case, i.e.\ $f(t) = \half \beta(t)$ and $g(t) = \sqrt{\beta(t)}$, on the other hand, it follows from the substitution $B(t) = 1+\alpha_0(e^{\mathcal{B}(t)} - 1)$ that
        \begin{align*}
            \int_0^T g^2(t) M(t) \de t &= \int_0^T \frac{M_0 \beta(t) e^{\mathcal{B}(t)}}{\qty(1+\alpha_0(e^{\mathcal{B}(t)} - 1))^2} \de t\\
            &= \frac{M_0}{\alpha_0} \int_0^T \frac{B'(t)}{B^2(t)} \de t\\
            &= -\frac{M_0}{\alpha_0} \qty(\frac{1}{B(T)} - \frac{1}{B(0)}) \de t\\
            &= -\frac{M_0}{\alpha_0} \qty(\frac{1}{1 + \alpha_0(e^{\mathcal{B}(T)} - 1)}-1) \\
            &= \mathcal{O}\qty(1 - e^{-\mathcal{B}(T)}) = \mathcal{O}(1), \numberthis \label{eq:int_M_VP}
        \end{align*}
        where we reused the definition $\mathcal{B}(t) = \int_0^t \beta(s) \de s$ and the value of $M(t)$ given in \eqref{eq:kt-vp} from Example~\ref{ex:kt_vp}.
        In both cases, there is no change in the asymptotics.
        
        \item To determine how the change in $\delta_k(T-t)$ influences the limit behavior, we recapitulate how it is analyzed in \citet[Corollary 6-9]{gao2024convergence}. The coefficient $\prod_{j=k+1}^K \gamma_{j,h}$ in the error bound given in Theorem~\ref{thm:main_thm} is upper bounded using the fact that $1-x \le e^{-x}$ for all $x \in \mathbb{R}$. Accordingly, we have
        \begin{align*}
            \prod_{j=k+1}^K \gamma_{j,h} &= \prod_{j=k+1}^K 1 - \intj \delta_j(T-t) \de t + \half L_1 h \intj g^2(T-t) \de t\\
            &\le \prod_{j=k+1}^K \exp\qty(- \intj \delta_j(T-t) \de t + \half L_1 h \intj g^2(T-t) \de t)\\
            &= \exp\qty(- \sum_{j=k+1}^K \intj \delta_j(T-t) \de t + \half L_1 h \int_{t_k}^{T} g^2(T-t) \de t).
        \end{align*}
        The only new term in the above display emerging in the weak log-concave case is 
        \begin{align*}
            &\exp\qty(\sum_{j=k+1}^K \intj \half e^{-\int_{t_{j-1}}^t f(T-s) \de s} g^2(T-t) M(T-t) \de t)\\
            &\quad\le \exp\qty(\half \int_{t_k}^{T} g^2(T-t) M(T-t) \de t)\\
            &\quad\le \exp\qty(\half \int_{0}^{T} g^2(t) M(t) \de t),
        \end{align*}
        where we used the non-negativity of $f(t)$ and $M(t)$. Both, in the VE and VP case, the term on the far right-hand side is in $\mathcal{O}(1)$ as shown in \eqref{eq:int_M_VE} and \eqref{eq:int_M_VP}. Thus, the asymptotic behavior remains unchanged. For a discussion of $\theta(T)$, see point \ref{enum:theta}.
        
        \item When analyzing the limit behavior in \citet[Corollary 6-9]{gao2024convergence}, the time-dependent Lipschitz constant is dealt with by finding an upper bound for $L(t)$ in the VP case and $g^2(t)L(t)$ in the VE case. Denote the upper bound for the Lipschitz constant $L^{GZ}(t)$ in Gao and Zhu's paper by $\bar{L}^{GZ}$. Note that we have
        \begin{equation*}
            L(t) = \max \left\{ L^{GZ}(t), -\qty(\alpha(t)-M(t)) \right\},
        \end{equation*}
        so it suffices to show that $-\qty(\alpha(t)-M(t))$ is appropriately bounded. By Lemma~\ref{lem:kt_inf}, we have
        \begin{align*}
            -\qty(\alpha(t)-M(t)) &\le \frac{|\alpha_0 - M_0|}{\alpha_0^2 \wedge 1} \min \left\{ e^{ 2 \int_0^{t} f(s) ds}, \frac{ e^{ 2 \int_0^{t} f(s) ds}}{(\int_0^{t} e^{2\int_0^{s} f(v) \de v} g^2(s) \de s)^2} \right\}\\
            &\le \frac{|\alpha_0 - M_0|}{\alpha_0^2 \wedge 1} \min \left\{ e^{2 \int_0^{t} f(s) ds}, \frac{1}{\int_0^{t} e^{-2\int_s^{t} f(v)dv} g^2(s) ds} \right\},
        \end{align*}
        where the last inequality follows from the arguments in Lemma~\ref{lem:Lt_bound}. Since
        \begin{equation*}
            \bar{L}^{GZ} \ge L^{GZ}(t) \ge (L_0 \wedge 1) \min \left\{ e^{2 \int_0^{t} f(s) ds}, \frac{1}{\int_0^{t} e^{-2\int_s^{t} f(v)dv} g^2(s) ds} \right\},
        \end{equation*}
        it follows that
        \begin{equation*}
            -\qty(\alpha(t)-M(t)) \le \frac{|\alpha_0 - M_0|}{\alpha_0^2 \wedge 1} \frac{1}{L_0 \wedge 1} \bar{L}^{GZ},
        \end{equation*}
        and hence
        \begin{equation*}
            L(t) \le \max\left\{1, -\frac{|\alpha_0 - M_0|}{\alpha_0^2 \wedge 1} \frac{1}{L_0 \wedge 1}\right\} \bar{L}^{GZ}.
        \end{equation*}
        Similar arguments lead to an upper bound for $g^2(t)L(t)$. As the bound only differs by some coefficient that is independent of $T$, $h$, and $\mathcal{E}$, the asymptotics are not affected.
        
        \item The difference of the coefficients does not have any effect on the asymptotics.
        
        \item Since $h = o(1)$, we have $\mathcal{O}(T+h) = \mathcal{O}(T)$.
        
        \item \label{enum:theta} The constant coefficient $\sqrt{C(\alpha_0, M_0)}$ does not influence the asymptotics. \qedhere
    \end{enumerate}
\end{proof} %
\section{Proof of the main result}
\label{sec:app_proof}

As shown in Section~\ref{sec:proof}, the proof of Theorem~\ref{thm:main_thm} is based on Proposition~\ref{prop:yt-p0-bound} and \ref{prop:yt-zt-bound}, splitting the overall error $\mathcal{W}_2(\mathcal{L}(\widehat{Z}_T), p_0)$ into the initialization error $\mathcal W_2(\mathcal{L}(Y_T), p_0)$ and the combined discretization and propagated score-matching error $\mathcal W_2(\mathcal{L}(\widehat{Z}_T), \mathcal{L}(Y_T))$. Here, we provide the proofs of the two propositions.

\subsection{Proof of Proposition~\ref{prop:yt-p0-bound}}

We start by analyzing the initialization error.
Recall the following result from \citet[Lemma~16]{gao2024convergence}.
\begin{lemma}\label{lem:w2-bound-ini}
    
    It holds that 
    \[
\mathcal W_2\bigl(p_T,\hat p_T\bigr)
\;\le\;
e^{-\int_{0}^{T} f(s) \de s }\,
\|X_0\|_{L_2}.
\]
\end{lemma}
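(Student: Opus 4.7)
The plan is to produce this bound via an explicit coupling built from the closed-form solution \eqref{eq:forw_SDE_solution} of the forward SDE. Writing
\[
X_T \;=\; e^{-\int_0^T f(s)\de s}\,X_0 \;+\; Z_T, \qquad Z_T \;:=\; \int_0^T e^{-\int_s^T f(v)\de v} g(s)\,\de B_s,
\]
the excerpt already records that $Z_T \sim \hat p_T$ and that $Z_T$ is independent of $X_0$. So the pair $(X_T, Z_T)$ is a valid coupling of $p_T$ and $\hat p_T$, since the marginal law of $X_T$ is $p_T$ by definition, and the marginal law of $Z_T$ is $\hat p_T$ by the Itô-isometry computation already done in the preliminaries.

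Given this coupling, I would apply the definition \eqref{eq:wass-def} of the $2$-Wasserstein distance, which gives
\[
\mathcal W_2^2(p_T,\hat p_T) \;\le\; \mathbb E\,\|X_T - Z_T\|^2 \;=\; \mathbb E\,\bigl\|e^{-\int_0^T f(s)\de s}\,X_0\bigr\|^2 \;=\; e^{-2\int_0^T f(s)\de s}\,\|X_0\|_{L_2}^2,
\]
and then take square roots to obtain the stated inequality. The deterministic factor $e^{-\int_0^T f(s)\de s}$ comes out of the norm because $f$ is scalar-valued, and the only expectation left is precisely $\|X_0\|_{L_2}^2$ by definition of the $L_2$ norm of a random vector.

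There is essentially no obstacle here: the key insight is just the choice of coupling (couple by using the same Brownian motion that already generated $X_T$, and pick off the Gaussian part $Z_T$), and everything else is a one-line computation. The only thing to double-check is that $Z_T$ really has the stated Gaussian law with covariance $\int_0^T e^{-2\int_s^T f(v)\de v}g^2(s)\de s\cdot I_d$; but this is exactly the identity displayed right after \eqref{eq:forw_SDE_solution} in the preliminaries, obtained from Itô's isometry. No weak log-concavity or Lipschitz assumption is needed for this lemma—only the structure of the forward SDE.
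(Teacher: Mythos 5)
Your proof is correct and is exactly the intended argument: the paper itself does not prove this lemma but imports it from \citet[Lemma~16]{gao2024convergence}, and the standard proof there is precisely your coupling $(X_T, Z_T)$ built from the closed-form solution \eqref{eq:forw_SDE_solution}, with $X_T - Z_T = e^{-\int_0^T f(s)\de s}X_0$ giving the bound immediately. No gaps; your observation that no log-concavity or Lipschitz assumptions are needed is also accurate.
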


\begin{proof}[Proof of Proposition~\ref{prop:yt-p0-bound}]
The result is a consequence of the propagation over time of the weak log-concavity, combined with the regime change results from Section~\ref{sec:weak_conv_prop}.
We start by following the steps in \citet[Proposition~14]{gao2024convergence}. 
Let 
\[
m(t) = -2 f(t) + g^2(t) (\alpha(t) - M(t)).
\]
By computing the derivative, using \eqref{eq:flow} and \eqref{eq:flow-hat}, and by Proposition~\ref{prop:weak_pt}, we get
\begin{align*}
    \frac{\de }{\de t} 
   & \left( \lVert \tilde{X}_t - Y_t\rVert^{2}\,
               e^{\int_{0}^{t} m(T-s) \de s} \right) 
               \\
&= m(T-t)\,e^{\int_{0}^{t} m(T-s)\,\mathrm{d}s}\,
 \lVert \tilde{X}_t - Y_t\rVert^{2}\,
 + 2\,e^{\int_{0}^{t} m(T-s)\,\mathrm{d}s}\,
   \bigl\langle \tilde{X}_t - Y_t,\,
                 \frac{\de }{\de t} \tilde{X}_t -  \frac{\de }{\de t} Y_t\bigr\rangle \\
&= m(T-t)\,e^{\int_{0}^{t} m(T-s)\,\mathrm{d}s}\,
 \lVert \tilde{X}_t - Y_t\rVert^{2} 
 + 2\,e^{\int_{0}^{t} m(T-s)\,\mathrm{d}s}\,
   \bigl\langle \tilde{X}_t - Y_t,\,
                f(T-t)\bigl(\tilde{X}_t - Y_t\bigr)\bigr\rangle\,
   \\
&\quad + 2\,e^{\int_{0}^{t} m(T-s)\,\mathrm{d}s}\,
 \Bigl\langle \tilde{X}_t - Y_t,\,
   \tfrac12 g^2(T-t)
   \bigl(\nabla\log p_{T-t}(\tilde{X}_t) -
         \nabla\log p_{T-t}(Y_t)\bigr)\Bigr\rangle\,
 \\
&\le
 e^{\int_{0}^{t} m(T-s)\,\mathrm{d}s}\,\lVert \tilde{X}_t - Y_t\rVert^{2}
 \left[ m(T-t)+2f(T-t)- g^2 (T-t)(\alpha(T-t) - M(T-t)) \right]     \\ 
&= 0 .
\end{align*}
Hence, for any $t \in [0,T]$, 
\begin{equation}
\label{eq:xt-yt_x0-y0}
    \|\tilde{X}_t - Y_t\|^2 e^{\int_0^t m(T-s) \, ds} \leq \|\tilde X_0 - Y_0\|^2,
\end{equation}
so that
\[
\mathbb{E} \|\tilde{X}_T - Y_T\|^2 \leq e^{-\int_0^T m(T-s) \, ds} \mathbb{E} \|\tilde X_0 - Y_0\|^2.
\]

Next, consider a coupling of \((\tilde X_0, Y_0)\) such that \(\tilde X_0 \sim p_T\), \(Y_0 \sim \hat{p}_T\), and \(\mathbb{E} \|\tilde X_0 - Y_0\|^2 = \mathcal W_2^2(p_T, \hat{p}_T)\). By combining the previous result with Lemma~\ref{lem:w2-bound-ini} and by the definition of the Wasserstein distance~\eqref{eq:wass-def}, we have 
\begin{align*}
    \mathcal W_2^2(\mathcal{L}(Y_T), p_0) &= W_2^2(\mathcal{L}(Y_T), \mathcal{L}(\tilde{X}_T)) \leq \mathbb{E} \|\tilde{X}_T - Y_T\|^2 \\
    &\leq e^{-\int_0^T m(T-s) \, \de s} \mathcal W_2^2(p_T, \hat{p}_T) \\
    &\leq e^{-\int_0^T m(s) \de s} e^{-2 \int_0^T f(s) \de s} \|X_0\|_{L_2}^2 \\
    &= e^{- \int_0^T g^2(t) (\alpha(t) - M(t)) \de t} \|X_0\|_{L_2}^2. \numberthis \label{eq:W2_yt-p0}
\end{align*}
Recall the regime shift result from Proposition~\ref{prop:regime-shift}:
\[
\begin{cases}
    \alpha(t) - M(t) <  0 & 0 < t < \tau(\alpha_0, M_0) \wedge T \\
     \alpha(t) - M(t) \geq  0 & \tau(\alpha_0, M_0) \wedge T \leq  t < T. 
\end{cases}\label{eq:alphaM-cases}
\]
From this and Lemma~\ref{lem:kt_inf}, we get 
\begin{align*}
&\exp\left( - \int_0^T g^2(t) (\alpha(t) - M(t))  \de t \right) \\
&\quad = \exp \left(- \int_0^T g^2(t) |\alpha(t) - M(t)|  \de t + 2\int_0^{\tau(\alpha_0, M_0) \wedge T}  g^2(t) |\alpha(t) - M(t)| \de t \right)
\\
&\quad  \leq 
\exp \left(- \int_0^{ T} g^2(t) |\alpha(t) - M(t)|  \de t + 2
\sup_{0 
\leq t \leq \tau(\alpha_0, M_0) } |\alpha(t) - M(t)|
\int_0^{\tau(\alpha_0, M_0)} g^2(t) \de t \right)
\\ &\quad  = 
\exp \left(- \int_0^{ T} g^2(t) |\alpha(t) - M(t)|  \de t - 2
\inf_{0 
\leq t \leq \tau(\alpha_0, M_0) } K(t)
\int_0^{\tau(\alpha_0, M_0)} g^2(t) \de t \right)
\\ &\quad \leq
\exp \left(- \int_0^{ T} g^2(t) |\alpha(t) - M(t)|  \de t + 
2 \frac{|\alpha_0 - M_0|}{\alpha_0^2 \wedge 1} \xi(\tau(\alpha_0, M_0))
\int_0^{\tau(\alpha_0, M_0)} g^2(t) \de t \right)
\\ &\quad =
C^2(\alpha_0, M_0) \exp \left(- \int_0^{ T} g^2(t) |\alpha(t) - M(t)|  \de t \right).
\end{align*}
Together with \eqref{eq:W2_yt-p0}, it follows that
\begin{equation*}
    \mathcal W_2^2(\mathcal{L}(Y_T), p_0) \le C^2(\alpha_0, M_0) \exp \left(- \int_0^{ T} g^2(t) |\alpha(t) - M(t)|  \de t \right) \|X_0\|_{L_2}^2.
\end{equation*}
We note that the quantity $C(\alpha_0, M_0)$ is always finite for any positive $\alpha_0$ and $M_0$, since $g$ is continuous and $\tau (\alpha_0, M_0)$ is finite.
\end{proof} 

\subsection{Proof of Proposition~\ref{prop:yt-zt-bound}}

Next, we examine the discretization and propagated score-matching error.
For that, we need two technical lemmas.
\begin{lemma}
\label{lem:omega(T)}
    With $\omega(T)$ defined in \eqref{eq:omega(T)}, it holds that
    $$\sup_{0 \le t \le T} \norm{X_t}_{L_2} = \omega(T).$$
\end{lemma}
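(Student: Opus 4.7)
The plan is to compute $\|X_t\|_{L_2}^2$ explicitly using the closed-form solution of the forward SDE given in \eqref{eq:forw_SDE_solution}, and then simply take the square root of the supremum over $t \in [0, T]$.

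More concretely, I would first write
\[
X_t = e^{-\int_0^t f(s)\,\de s}\,X_0 + \int_0^t e^{-\int_s^t f(v)\,\de v} g(s)\,\de B_s.
\]
Since $X_0$ is independent of the Brownian motion $(B_s)_{s \ge 0}$, and the stochastic integral has mean zero, expanding $\|X_t\|^2$ and taking expectations kills the cross term. Thus
\[
\|X_t\|_{L_2}^2 = e^{-2\int_0^t f(s)\,\de s}\,\|X_0\|_{L_2}^2 + \mathbb{E}\left\|\int_0^t e^{-\int_s^t f(v)\,\de v} g(s)\,\de B_s\right\|^2.
\]
For the second term, I would use the Itô isometry componentwise: the stochastic integral is an $\mathbb{R}^d$-valued centered Gaussian with covariance $\int_0^t e^{-2\int_s^t f(v)\,\de v} g^2(s)\,\de s \cdot I_d$ (as already noted in the paper right after \eqref{eq:forw_SDE_solution}), so its squared $L_2$-norm equals the trace of this covariance, namely $d \int_0^t e^{-2\int_s^t f(v)\,\de v} g^2(s)\,\de s$.

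Combining the two contributions yields
\[
\|X_t\|_{L_2}^2 = e^{-2\int_0^t f(s)\,\de s}\,\|X_0\|_{L_2}^2 + d \int_0^t e^{-2\int_s^t f(v)\,\de v} g^2(s)\,\de s,
\]
which is exactly the quantity under the square root in the definition \eqref{eq:omega(T)} of $\omega(T)$. Taking the supremum over $t \in [0, T]$ and then the square root (which commutes with the supremum as the map is monotone on nonnegative reals) gives $\sup_{0 \le t \le T}\|X_t\|_{L_2} = \omega(T)$. There is no real obstacle here; the only point worth flagging is justifying that the cross term vanishes, which follows from the independence of $X_0$ and $(B_s)_{s \ge 0}$ together with the martingale property of the stochastic integral.
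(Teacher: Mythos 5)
Your proposal is correct and follows essentially the same route as the paper: expand $\|X_t\|_{L_2}^2$ via the explicit solution \eqref{eq:forw_SDE_solution}, use independence of $X_0$ and the zero-mean stochastic integral to drop the cross term, and identify the variance term as $d\int_0^t e^{-2\int_s^t f(v)\de v}g^2(s)\de s$ before taking the supremum. No gaps.
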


\begin{proof}[Proof of Lemma~\ref{lem:omega(T)}]
    Using the explicit formula for $X_t$ given in \eqref{eq:forw_SDE_solution} and the distribution of the stochastic integral therein as well as its independence of $X_0$, we get
    \begin{align*}
        \norm{X_t}_{L_2}^2 &= \mathbb{E} \qty[\norm{e^{-\int_{0}^{t}f(s) \de s}\,X_0 + \int_{0}^{t}e^{-\int_{s}^{t}f(v) \de v}g(s) \de B_s}^2]\\
        &= \mathbb{E} \qty[\norm{e^{-\int_{0}^{t}f(s) \de s}\,X_0}^2] + \mathbb{E}\qty[\norm{\int_{0}^{t}e^{-\int_{s}^{t}f(v) \de v}g(s) \de B_s}^2]\\
        &= e^{-2\int_{0}^{t}f(s) \de s} \norm{X_0}_{L_2}^2 + \text{tr}\qty(\text{Var}\qty(\int_{0}^{t}e^{-\int_{s}^{t}f(v) \de v}g(s) \de B_s))\\
        &= e^{-2\int_{0}^{t}f(s) \de s} \norm{X_0}_{L_2}^2 + d \cdot \int_{0}^{t}e^{-2\int_{s}^{t}f(v) \de v}g^2(s) \de s. \qedhere
    \end{align*}
\end{proof}

\begin{lemma}
\label{lem:nu_kh}
    With $\nu_{k,h}$ defined in \eqref{eq:nu_kh}, it holds for any $k \in \{1,\dots,K\}$ that
    \begin{equation*}
        \sup_{t_{k-1} \le t \le t_k} \norm{Y_t - Y_{t_{k-1}}}_{L_2} \le \nu_{k,h}.
    \end{equation*}
\end{lemma}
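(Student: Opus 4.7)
The plan is to write $Y_t - Y_{t_{k-1}}$ as the integral of the right-hand side of the probability flow ODE \eqref{eq:flow-hat} over $[t_{k-1}, t]$, take $L_2$-norms with Minkowski's inequality, and then uniformly bound the two resulting integrands. Concretely, for $t \in [t_{k-1}, t_k]$,
\[
\norm{Y_t - Y_{t_{k-1}}}_{L_2} \le \intk f(T-s)\,\norm{Y_s}_{L_2}\,\de s + \tfrac{1}{2}\intk g^2(T-s)\,\norm{\nabla \log p_{T-s}(Y_s)}_{L_2}\,\de s.
\]
Matching this to the target bound $\nu_{k,h}$ then amounts to controlling $\norm{Y_s}_{L_2}$ and $\norm{\nabla \log p_{T-s}(Y_s)}_{L_2}$ uniformly in $s \in [0, T]$.

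For the first quantity, I would use the triangle inequality $\norm{Y_s}_{L_2} \le \norm{\tilde X_s}_{L_2} + \norm{\tilde X_s - Y_s}_{L_2}$ with $\tilde X_s = X_{T-s}$. The first term is bounded by $\omega(T)$ thanks to Lemma~\ref{lem:omega(T)}. The second term is bounded by $\theta(T)$: this is exactly the content of inequality~\eqref{eq:xt-yt_x0-y0} in the proof of Proposition~\ref{prop:yt-p0-bound}, combined with Lemma~\ref{lem:w2-bound-ini} for the initialization $\mathcal W_2(p_T, \hat p_T) \le e^{-\int_0^T f(s)\de s}\norm{X_0}_{L_2}$. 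Together this gives $\norm{Y_s}_{L_2} \le \theta(T) + \omega(T)$ for all $s \in [0,T]$.

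For the second quantity, I would split
\[
\norm{\nabla \log p_{T-s}(Y_s)}_{L_2} \le \norm{\nabla \log p_{T-s}(Y_s) - \nabla \log p_{T-s}(\boldsymbol{0})}_{L_2} + \norm{\nabla \log p_{T-s}(\boldsymbol{0})}.
\]
The first term is at most $L(T-s)\norm{Y_s}_{L_2} \le L(T-s)(\theta(T)+\omega(T))$ by the propagation of Lipschitz continuity (Proposition~\ref{prop:weak_Lip}). For the second term, I would bound $\norm{\nabla \log p_{T-s}(\boldsymbol{0})} \le \norm{\nabla \log p_0(\boldsymbol{0})} + \norm{\nabla \log p_{T-s}(\boldsymbol{0}) - \nabla \log p_0(\boldsymbol{0})}$ and control the last difference by telescoping through the grid: for $s \in [t_{k-1}, t_k]$,
\[
\norm{\nabla \log p_{T-s}(\boldsymbol{0}) - \nabla \log p_0(\boldsymbol{0})} \le \norm{\nabla \log p_{T-s}(\boldsymbol{0}) - \nabla \log p_{T-t_{k-1}}(\boldsymbol{0})} + \sum_{j=k}^{K}\norm{\nabla \log p_{T-t_{j-1}}(\boldsymbol{0}) - \nabla \log p_{T-t_j}(\boldsymbol{0})},
\]
and each summand is at most $L_1 h$ by Assumption~\ref{ass:Lip_in_time} (evaluated at $x = \boldsymbol{0}$). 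Counting the $K - k + 2 \le K + 1$ terms gives the bound $L_1(T+h)$, which is the origin of the $T+h$ factor noted as a correction over \citet{gao2024convergence}.

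Plugging these uniform bounds into the integral inequality, taking the supremum over $t \in [t_{k-1}, t_k]$, and collecting the coefficients of $\theta(T) + \omega(T)$ and of $L_1(T+h) + \norm{\nabla \log p_0(\boldsymbol{0})}$ produces exactly $\nu_{k,h}$ as defined in \eqref{eq:nu_kh}. The main subtlety is the telescoping argument that yields $L_1(T+h)$ rather than $L_1 T$, since the midpoint $s$ need not lie on the discretization grid; everything else is essentially a bookkeeping exercise using results already established earlier in the paper.
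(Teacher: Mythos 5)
Your proposal is correct and uses essentially the same ingredients as the paper's proof: the uniform bounds $\norm{Y_s - \tilde X_s}_{L_2} \le \theta(T)$ (from \eqref{eq:xt-yt_x0-y0} with Lemma~\ref{lem:w2-bound-ini}) and $\norm{\tilde X_s}_{L_2} \le \omega(T)$ (Lemma~\ref{lem:omega(T)}), the Lipschitz propagation of Proposition~\ref{prop:weak_Lip}, and the telescoping estimate $\norm{\nabla \log p_{T-s}(\boldsymbol{0})} \le L_1(T+h) + \norm{\nabla \log p_0(\boldsymbol{0})}$ from Assumption~\ref{ass:Lip_in_time}. The only difference is organizational—the paper first compares the increment of $Y$ with that of $\tilde X$ and then bounds $\tilde X$'s increment, whereas you bound the drift of $Y$ directly via $\norm{Y_s}_{L_2} \le \theta(T)+\omega(T)$—and both bookkeepings collect into exactly the same constant $\nu_{k,h}$.
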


\begin{proof}[Proof of Lemma~\ref{lem:nu_kh}] 
    From \eqref{eq:flow} and \eqref{eq:flow-hat}, it follows that for any $t \in [t_{k-1}, t_k]$
    \begin{align}
        \tilde{X}_t &= \tilde{X}_{t_{k-1}} + \int_{t_{k-1}}^t \qty[f(T-s) \tilde{X}_s + \half g^2(T-s) \nabla \log p_{T-s}(\tilde{X}_s)] \de s \label{eq:xt_interval}\\
        Y_t &= Y_{t_{k-1}} + \int_{t_{k-1}}^t \qty[f(T-s) Y_s + \half g^2(T-s) \nabla \log p_{T-s}(Y_s)] \de s, \nonumber,
    \end{align}
    so that, by an application of the triangle inequality,
    \begin{align*}
        \norm{Y_t - Y_{t_{k-1}}}_{L_2} &= \Bigg\lVert \tilde{X}_t - \tilde{X}_{t_{k-1}} + \int_{t_{k-1}}^t f(T-s) \qty(Y_s - \tilde{X}_s) \de s\\
        &\qquad+ \int_{t_{k-1}}^t \half g^2(T-s) \qty(\nabla \log p_{T-s}(Y_s) - \nabla \log p_{T-s}(\tilde{X}_s))\Bigg] \de s \Bigg\rVert_{L_2}\\
        &\le \norm{\tilde{X}_t - \tilde{X}_{t_{k-1}}}_{L_2} + \int_{t_{k-1}}^t f(T-s) \norm{Y_s - \tilde{X}_s}_{L_2} \de s\\
        &\qquad+ \int_{t_{k-1}}^t \half g^2(T-s) \norm{\nabla \log p_{T-s}(Y_s) - \nabla \log p_{T-s}(\tilde{X}_s)}_{L_2} \de s.
    \end{align*}
    An application of Proposition~\ref{prop:weak_Lip} further yields
    \begin{equation*}
        \norm{Y_t - Y_{t_{k-1}}}_{L_2} \le \norm{\tilde{X}_t - \tilde{X}_{t_{k-1}}}_{L_2} + \int_{t_{k-1}}^t \qty[f(T-s) + \half g^2(T-s) L(T-s)] \norm{Y_s - \tilde{X}_s}_{L_2} \de s.
    \end{equation*}
    From the proof of Proposition~\ref{prop:yt-p0-bound}, specifically \eqref{eq:xt-yt_x0-y0} and the lines thereafter, we have
    \begin{equation}
    \label{eq:yt-xt_bound}
        \norm{Y_t - \tilde{X}_t}_{L_2} \le e^{-\half \int_0^t m(T-s)\de s} \norm{Y_0 - \tilde{X}_0}_{L_2} \le e^{-\half \int_0^t m(T-s)\de s} e^{-\int_0^T f(s) \de s} \norm{X_0}_{L_2} \le \theta(T),
    \end{equation}
    and therefore 
    \begin{equation}
    \label{eq:yt-yk_split}
        \norm{Y_t - Y_{t_{k-1}}}_{L_2} \le \norm{X_t - X_{t_{k-1}}}_{L_2} + \theta(T) \int_{t_{k-1}}^t \qty[f(T-s) + \half g^2(T-s) L(T-s)] \de s.
    \end{equation}

    Next, \eqref{eq:xt_interval} implies that
    \begin{equation*}
        \norm{\tilde{X}_t - \tilde{X}_{t_{k-1}}}_{L_2} \le \int_{t_{k-1}}^t \qty[f(T-s) \norm{\tilde{X}_s}_{L_2} + \half g^2(T-s) \norm{\nabla \log p_{T-s}(\tilde{X}_s)}_{L_2}] \de s.
    \end{equation*}
    Another application of Proposition~\ref{prop:weak_Lip} and the fact that $p_{T-s}(\boldsymbol{0})$ is determistic yields
    \begin{align*}
        \norm{\nabla \log p_{T-s}(\tilde{X}_s)}_{L_2} &\le \norm{\nabla \log p_{T-s}(\tilde{X}_s) - \nabla \log p_{T-s}(\boldsymbol{0})}_{L_2} + \norm{\nabla \log p_{T-s}(\boldsymbol{0})}_{L_2}\\
        &\le L(T-s) \norm{\tilde{X}_s}_{L_2} + \norm{\nabla \log p_{T-s}(\boldsymbol{0})}
    \end{align*}
    Moreover, since $s \in [t_{k-1}, t_k]$ and $t_K=Kh=T$, it follows from Assumption \ref{ass:Lip_in_time} that
    \begin{align}
        \norm{\nabla \log p_{T-s}(\boldsymbol{0})} &\le \norm{\nabla \log p_{T-s}(\boldsymbol{0}) - \nabla \log p_{T-t_{k-1}}(\boldsymbol{0})} \nonumber\\
        &\quad + \sum_{j=k}^K \norm{\nabla \log p_{T-t_j}(\boldsymbol{0}) - \nabla \log p_{T-t_{j-1}}(\boldsymbol{0})} + \norm{\nabla \log p_0(\boldsymbol{0})} \nonumber\\
        &\le (K-k+2)L_1 h + \norm{\nabla \log p_0(\boldsymbol{0})} \nonumber\\
        &\le (K+1) L_1 h + \norm{\nabla \log p_0(\boldsymbol{0})} \nonumber\\
        &\le (T+h) L_1 + \norm{\nabla \log p_0(\boldsymbol{0})} \label{eq:bound_p_T-s(0)}.
    \end{align}
    In summary, we conclude that
    \begin{align}
        \norm{\tilde{X}_t - \tilde{X}_{t_{k-1}}}_{L_2} &\le \int_{t_{k-1}}^t \qty[f(T-s) + \half g^2(T-s) L(T-s)] \norm{\tilde{X}_s}_{L_2} \de s \nonumber\\
        &\quad + \half \Big((T+h) L_1 + \norm{\nabla \log p_0(\boldsymbol{0})}\Big) \int_{t_{k-1}}^t g^2(T-s) \de s. \label{eq:xt-xk}
    \end{align}
    The final result follows from a combination of \eqref{eq:yt-yk_split} and \eqref{eq:xt-xk} together with the observation that
    \begin{equation*}
        \sup_{0 \le s \le T} \norm{\tilde{X}_s}_{L_2} = \sup_{0 \le s \le T} \norm{X_{T-s}}_{L_2} = \omega(T),
    \end{equation*}
    where the first equality holds because $\tilde{X}_s = X_{T-s}$ in distribution and the second equality is verified in Lemma~\ref{lem:omega(T)}.
\end{proof}

Now, we are ready to prove Proposition~\ref{prop:yt-zt-bound}.

\begin{proof}[Proof of Proposition~\ref{prop:yt-zt-bound}]
    We follow the steps in \citet[Proposition~15]{gao2024convergence}. Specifically, we split the distance between $Y_{t_k}$ and $\widehat{Z}_{t_k}$ into several parts and derive upper bounds for each one of them separately, repeatedly making use of the propagation of weak log-concavity and Lipschitz-smoothness from $p_0$ to $p_t$ as established in Proposition~\ref{prop:weak_pt} and \ref{prop:weak_Lip}. 
    
    By the definition of $Y_t$ and $\widehat{Z}_{t}$ given in \eqref{eq:flow-hat} and \eqref{eq:flow_score}, we have for any $t \in [t_{k-1}, t_k]$
    \begin{align*}
        Y_t &= Y_{t_{k-1}} + \int_{t_{k-1}}^t \qty[f(T-s) Y_s + \half g^2(T-s) \nabla \log p_{T-s}(Y_s)] \de s,\\
        \widehat{Z}_t &= \widehat{Z}_{t_{k-1}} + \int_{t_{k-1}}^t \qty[f(T-s) \widehat{Z}_s + \half g^2(T-s) s_\theta \qty(\widehat{Z}_{t_{k-1}}, T-t_{k-1})] \de s,
    \end{align*}
    which yields the solutions
    \begin{align*}
        Y_{t_k} &= e^{\intk f(T-t) \de t} Y_{t_{k-1}} + \half \intk e^{\int_t^{t_k} f(T-s) \de s} g^2(T-t) \nabla \log p_{T-t}(Y_t) \de t,\\
        \widehat{Z}_{t_k} &= e^{\intk f(T-t) \de t} \widehat{Z}_{t_{k-1}} + \half \intk e^{\int_t^{t_k} f(T-s) \de s} g^2(T-t) s_\theta \qty(\widehat{Z}_{t_{k-1}}, T-t_{k-1}) \de t.
    \end{align*}
    By adding and subtracting some additional terms as well as several applications of the triangle inequality, it follows that
    \begin{align*}
        &\norm{Y_{t_k} - \widehat{Z}_{t_k}}_{L_2}\\
        &\quad\le \Bigg\lVert e^{\intk f(T-t) \de t} \qty(Y_{t_{k-1}} - \widehat{Z}_{t_{k-1}}) \\
        &\quad\qquad + \half\, \intk e^{\int_t^{t_k} f(T-s) \de s} g^2(T-t) \qty(\nabla \log p_{T-t}(Y_{t_{k-1}}) - \nabla \log p_{T-t}(\widehat{Z}_{t_{k-1}})) \de t \Bigg\rVert_{L_2} \\
        &\quad\quad + \half \norm{\intk e^{\int_t^{t_k} f(T-s) \de s} g^2(T-t) \qty(\nabla \log p_{T-t}(Y_t) - \nabla \log p_{T-t}(Y_{t_{k-1}})) \de t}_{L_2} \\
        &\quad\quad + \half\, \Bigg\lVert \intk e^{\int_t^{t_k} f(T-s) \de s} g^2(T-t) \qty(\nabla \log p_{T-t}(\widehat{Z}_{t_{k-1}}) - \nabla \log p_{T-t_{k-1}}(\widehat{Z}_{t_{k-1}})) \de t \Bigg\rVert_{L_2} \\
        &\quad\quad + \half\, \Bigg\lVert \intk e^{\int_t^{t_k} f(T-s) \de s} g^2(T-t) \\
        &\quad\qquad\qquad\qquad\quad \cdot \qty(\nabla \log p_{T-t_{k-1}}(\widehat{Z}_{t_{k-1}}) - s_\theta \qty(\widehat{Z}_{t_{k-1}}, T-t_{k-1})) \de t \Bigg\rVert_{L_2} \\
        &\quad\eqqcolon \norm{S_1(k,h)}_{L_2} + \half\,\norm{S_2(k,h)}_{L_2} + \half\,\norm{S_3(k,h)}_{L_2} + \half\,\norm{S_4(k,h)}_{L_2}. \numberthis \label{eq:err_bound_4_summands}
    \end{align*}
    Next, we derive upper bounds for the four summands $\norm{S_i(k,h)}_{L_2}$, $i \in \{1,\dots,4\}$, that appear in \eqref{eq:err_bound_4_summands}. For $S_1(k,h)$ and $S_2(k,h)$, we first derive an upper bound for the Euclidean norm and then deduct one for the $L_2$-norm.
    
    For the first term, we get
    \begin{align*}
        &\norm{S_1(k,h)}^2\\
        &\quad= e^{2 \intk f(T-t) \de t} \norm{Y_{t_{k-1}} - \widehat{Z}_{t_{k-1}}}^2\\
        &\quad\quad + \norm{\half \intk e^{\int_t^{t_k} f(T-s) \de s} g^2(T-t) \qty(\nabla \log p_{T-t}(Y_{t_{k-1}}) - \nabla \log p_{T-t}(\widehat{Z}_{t_{k-1}})) \de t}^2\\
        &\quad\quad + 2\,\Bigg\langle e^{\intk f(T-t) \de t} \qty(Y_{t_{k-1}} - \widehat{Z}_{t_{k-1}}),\\
        &\quad\qquad\qquad\quad \half \intk e^{\int_t^{t_k} f(T-s) \de s} g^2(T-t) \qty(\nabla \log p_{T-t}(Y_{t_{k-1}}) - \nabla \log p_{T-t}(\widehat{Z}_{t_{k-1}})) \de t \Bigg\rangle\\
        &\quad\le e^{2 \intk f(T-t) \de t} \norm{Y_{t_{k-1}} - \widehat{Z}_{t_{k-1}}}^2\\
        &\quad\quad + \frac{1}{4} \qty(\intk e^{\int_t^{t_k} f(T-s) \de s} g^2(T-t) \norm{\nabla \log p_{T-t}(Y_{t_{k-1}}) - \nabla \log p_{T-t}(\widehat{Z}_{t_{k-1}})} \de t)^2\\
        &\quad\quad + e^{\intk f(T-t) \de t} \intk e^{\int_t^{t_k} f(T-s) \de s} g^2(T-t)\\
        &\quad\qquad\qquad\qquad\qquad\qquad\qquad \cdot \dotprod{Y_{t_{k-1}} - \widehat{Z}_{t_{k-1}}}{\nabla \log p_{T-t}(Y_{t_{k-1}}) - \nabla \log p_{T-t}(\widehat{Z}_{t_{k-1}})} \de t.
    \end{align*}
    From the weak concavity and Lipschitz continuity of $\nabla \log p_{T-t}$, established in Proposition~\ref{prop:weak_pt} and \ref{prop:weak_Lip}, respectively, it follows that
    \begin{align*}
        \norm{S_1(k,h)}^2 &\le e^{2 \intk f(T-t) \de t} \norm{Y_{t_{k-1}} - \widehat{Z}_{t_{k-1}}}^2\\
        &\quad + \frac{1}{4} \qty(\intk e^{\int_t^{t_k} f(T-s) \de s} g^2(T-t) L(T-t) \norm{Y_{t_{k-1}} - \widehat{Z}_{t_{k-1}}} \de t)^2\\
        &\quad - e^{\intk f(T-t) \de t} \intk e^{\int_t^{t_k} f(T-s) \de s} g^2(T-t)\\
        &\qquad\qquad\qquad\qquad\qquad\qquad \cdot \Big(\alpha(T-t) - M(T-t)\Big) \norm{Y_{t_{k-1}} - \widehat{Z}_{t_{k-1}}}^2 \de t\\
        &= e^{2 \intk f(T-t) \de t} \norm{Y_{t_{k-1}} - \widehat{Z}_{t_{k-1}}}^2\\
        &\quad + \frac{1}{4} \qty(\intk e^{\int_t^{t_k} f(T-s) \de s} g^2(T-t) L(T-t) \de t)^2 \norm{Y_{t_{k-1}} - \widehat{Z}_{t_{k-1}}}^2\\
        &\quad - e^{2 \intk f(T-t) \de t} \intk e^{\int_{t_{k-1}}^{t} f(T-s) \de s} g^2(T-t)\\
        &\qquad\qquad\qquad\qquad\qquad\qquad \cdot \Big(\alpha(T-t) - M(T-t)\Big) \de t \cdot \norm{Y_{t_{k-1}} - \widehat{Z}_{t_{k-1}}}^2.
    \end{align*}
    By Cauchy-Schwartz inequality, it holds that
    \begin{align*}
        \qty(\intk e^{\int_t^{t_k} f(T-s) \de s} g^2(T-t) L(T-t) \de t)^2
        & \le \intk e^{2 \int_t^{t_k} f(T-s) \de s} \de t \cdot \intk g^4(T-t) L^2(T-t) \de t\\
        & \le \intk e^{2 \int_{t_{k-1}}^{t_k} f(T-s) \de s} \de t \cdot \intk g^4(T-t) L^2(T-t) \de t\\
        & \le h e^{2 \int_{t_{k-1}}^{t_k} f(T-s) \de s} \cdot \intk g^4(T-t) L^2(T-t) \de t,
    \end{align*}
    which further yields
    \begin{align*}
        &\norm{S_1(k,h)}^2 \\
        &\qquad\le \qty( 1 - \intk \qty[e^{\int_{t_{k-1}}^{t} f(T-s) \de s} g^2(T-t) \Big(\alpha(T-t) - M(T-t)\Big) - \frac{1}{4} h g^4(T-t) L^2(T-t)] \de t) \\
        &\qquad\qquad \cdot e^{2 \intk f(T-t) \de t} \norm{Y_{t_{k-1}} - \widehat{Z}_{t_{k-1}}}^2 \\
        &\qquad= \qty(1 - \intk 2 \delta_k(T-t) \de t) e^{2 \intk f(T-t) \de t} \norm{Y_{t_{k-1}} - \widehat{Z}_{t_{k-1}}}^2.
    \end{align*}
    Note that, since the left-hand side of this inequality is non-negative, the right-hand side is guaranteed to be non-negative as well. Hence, using the inequality $\sqrt{1-x} \le 1 - \frac{x}{2}$, which holds for any $x \le 1$, we conclude that
    \begin{equation}
    \label{eq:S1_bound}
        \norm{S_1(k,h)}_{L_2} \le \qty(1 - \intk \delta_k(T-t) \de t) e^{\intk f(T-t) \de t} \norm{Y_{t_{k-1}} - \widehat{Z}_{t_{k-1}}}_{L_2}.
    \end{equation}

    By Proposition~\ref{prop:weak_Lip}, we get for the second term that
    \begin{align*}
        \norm{S_2(k,h)}^2 &= \norm{\intk e^{\int_t^{t_k} f(T-s) \de s} g^2(T-t) \qty(\nabla \log p_{T-t}(Y_t) - \nabla \log p_{T-t}(Y_{t_{k-1}})) \de t}^2\\
        &\le \qty(\intk e^{\int_t^{t_k} f(T-s) \de s} g^2(T-t) \norm{\nabla \log p_{T-t}(Y_t) - \nabla \log p_{T-t}(Y_{t_{k-1}})} \de t)^2\\
        &\le \qty(\intk e^{\int_t^{t_k} f(T-s) \de s} g^2(T-t) L(T-t) \norm{Y_t - Y_{t_{k-1}}} \de t)^2.
    \end{align*}
    An application of Cauchy-Schwartz inequality further yields 
    \begin{align*}
        \norm{S_2(k,h)}^2 &\le \intk 1^2 \de t \cdot \intk \qty(e^{\int_t^{t_k} f(T-s) \de s} g^2(T-t) L(T-t) \norm{Y_t - Y_{t_{k-1}}} )^2 \de t\\
        &= h \intk \qty[e^{\int_t^{t_k} f(T-s) \de s} g^2(T-t) L(T-t)]^2 \norm{Y_t - Y_{t_{k-1}}}^2 \de t\\
        &\le h \intk \qty[e^{\int_t^{t_k} f(T-s) \de s} g^2(T-t) L(T-t)]^2 \de t \sup_{t_{k-1} \le t \le t_k} \norm{Y_t - Y_{t_{k-1}}}^2.
    \end{align*}
    It follows that
    \begin{align*}
        \norm{S_2(k,h)}_{L_2} &\le \qty(h \intk \qty[e^{\int_t^{t_k} f(T-s) \de s} g^2(T-t) L(T-t)]^2 \de t \, \mathbb{E}\qty[\sup_{t_{k-1} \le t \le t_k} \norm{Y_t - Y_{t_{k-1}}}^2])^\half\\
        &\le \sqrt{h} \qty(\intk \qty[e^{\int_t^{t_k} f(T-s) \de s} g^2(T-t) L(T-t)]^2 \de t)^\half \sup_{t_{k-1} \le t \le t_k} \norm{Y_t - Y_{t_{k-1}}}_{L_2}\\
        &\le \sqrt{h} \, \nu_{k,h} \, \qty(\intk \qty[e^{\int_t^{t_k} f(T-s) \de s} g^2(T-t) L(T-t)]^2 \de t)^\half,
    \end{align*}
    where for the last inequality, we used Lemma~\ref{lem:nu_kh}.

    For the third term, Assumption \ref{ass:Lip_in_time} implies that 
    \begin{align*}
        \norm{S_3(k,h)}_{L_2} &\le \intk e^{\int_t^{t_k} f(T-s) \de s} g^2(T-t) \norm{\nabla \log p_{T-t}(\widehat{Z}_{t_{k-1}}) - \nabla \log p_{T-t_{k-1}}(\widehat{Z}_{t_{k-1}})}_{L_2} \de t\\
        &\le \intk e^{\int_t^{t_k} f(T-s) \de s} g^2(T-t) L_1 h \qty(1 + \norm{\widehat{Z}_{t_{k-1}}}_{L_2}) \de t\\
        &\le L_1 h \qty(1 + \norm{Y_{t_{k-1}} - \widehat{Z}_{t_{k-1}}}_{L_2} + \norm{Y_{t_{k-1}} - \tilde{X}_{t_{k-1}}}_{L_2} + \norm{\tilde{X}_{t_{k-1}}}_{L_2})\\
        &\qquad \cdot \intk e^{\int_t^{t_k} f(T-s) \de s} g^2(T-t) \de t.
    \end{align*}
    By \eqref{eq:yt-xt_bound}, we have
    \begin{equation*}
        \norm{Y_{t_{k-1}} - \tilde{X}_{t_{k-1}}}_{L_2} \le e^{- \half \int_{t=0}^{t_{k-1}} m(T-s) \de s} e^{-\int_0^T f(s) ds} \norm{\tilde{X}_0}_{L_2} \le \theta(T).
    \end{equation*}
    Moreover, since $\tilde{X}_t = X_{T-t}$ in distribution for any $t \in [0,T]$, Lemma~\ref{lem:omega(T)} implies that
    \begin{equation*}
        \norm{\tilde{X}_{t_{k-1}}}_{L_2} = \norm{X_{T - t_{k-1}}}_{L_2} \le \sup_{t \in [0,T]} \norm{X_t}_{L_2} = \omega(T).
    \end{equation*}
    In summary, this yields
    \begin{align*}
        \norm{S_3(k,h)}_{L_2} &\le L_1 h \qty(1 + \theta(T) + \omega(T) + \norm{Y_{t_{k-1}} - \widehat{Z}_{t_{k-1}}}_{L_2}) \intk e^{\int_t^{t_k} f(T-s) \de s} g^2(T-t) \de t.
    \end{align*}

    The fourth term can be easily bounded by Assumption \ref{ass:score_error}. In particular, we have
    \begin{align*}
        &\norm{S_4(k,h)}_{L_2}\\
        &\qquad \le \intk e^{\int_t^{t_k} f(T-s) \de s} g^2(T-t) \norm{\nabla \log p_{T-t_{k-1}}(\widehat{Z}_{t_{k-1}}) - s_\theta\qty(\widehat{Z}_{t_{k-1}}, T-t_{k-1})}_{L_2} \de t\\
        &\qquad \le \mathcal{E} \intk e^{\int_t^{t_k} f(T-s) \de s} g^2(T-t) \de t.
    \end{align*}

    Combining the bounds for all four summands in \eqref{eq:err_bound_4_summands}, we conclude that
    \begin{align*}
        \norm{Y_{t_k} - \widehat{Z}_{t_k}}_{L_2} &\le \qty(1 - \intk \delta_k(T-t) \de t) e^{\intk f(T-t) \de t} \norm{Y_{t_{k-1}} - \widehat{Z}_{t_{k-1}}}_{L_2}\\
        &\quad + \half \sqrt{h} \, \nu_{k,h} \, \qty(\intk \qty[e^{\int_t^{t_k} f(T-s) \de s} g^2(T-t) L(T-t)]^2 \de t)^\half\\
        &\quad + \half L_1 h \qty(1 + \theta(T) + \omega(T) + \norm{Y_{t_{k-1}} - \widehat{Z}_{t_{k-1}}}_{L_2})\\
        &\qquad \cdot\intk e^{\int_t^{t_k} f(T-s) \de s} g^2(T-t) \de t\\
        &\quad + \half \mathcal{E} \intk e^{\int_t^{t_k} f(T-s) \de s} g^2(T-t) \de t.
    \end{align*}
    Using the fact that
    \begin{equation*}
        \intk e^{\int_t^{t_k} f(T-s) \de s} g^2(T-t) \de t \le e^{\intk f(T-t) \de t} \intk g^2(T-t) \de t 
    \end{equation*}
    and slightly rearranging the terms finally completes the proof. 
\end{proof} 
\bibliographystyle{abbrvnat}
\bibliography{../literature}
\end{document}